\documentclass[11pt]{article}

\usepackage[margin=1in]{geometry}

\usepackage[utf8]{inputenc}
\usepackage[T1]{fontenc}
\usepackage{lmodern}
\usepackage{microtype}

\usepackage{amsmath,amssymb,amsthm,bm,mathtools}

\DeclareMathOperator{\logit}{logit}

\usepackage{graphicx}
\usepackage{booktabs}
\usepackage{longtable}
\usepackage{subcaption}
\usepackage{algorithm}
\usepackage{algpseudocode}
\usepackage{enumitem}

\usepackage{tikz}
\usetikzlibrary{arrows.meta,positioning,calc,matrix,backgrounds}

\usepackage{xcolor}
\usepackage[numbers]{natbib}

\usepackage[
    colorlinks=true,
    linkcolor=blue,
    citecolor=blue,
    urlcolor=blue
]{hyperref}

\newtheorem{theorem}{Theorem}
\newtheorem{proposition}{Proposition}
\newtheorem{lemma}{Lemma}

\newtheorem{assumption}{Assumption}

\theoremstyle{definition}

\theoremstyle{remark}

\newcommand{\R}{\mathbb{R}}
\newcommand{\E}{\mathbb{E}}
\newcommand{\1}{\mathbf{1}}
\newcommand{\cN}{\mathcal{N}}
\newcommand{\cD}{\mathcal{D}}

\newcommand{\yt}{y_t,\dots,y_m}
\newcommand{\smin}{\operatorname{smin}}

\newcommand{\hardmcmc}{\textsc{Hard-MCMC}}
\newcommand{\relaxedmcmc}{\textsc{Relaxed-MCMC}}
\newcommand{\fullrankvi}{\textsc{FullRank-VI}}
\newcommand{\flowvi}{\textsc{Flow-VI}}

\title{A Differentiable Bayesian Relaxation for Latent Partial-Order Inference}

\author{
Dongqing Li\\
University of Oxford\\
\texttt{dongqing.li@kellogg.ox.ac.uk}
\and
Geoff K. Nicholls\\
University of Oxford\\
\texttt{geoffrey.nicholls@stats.ox.ac.uk}
\and
Shiyi Sun\\
University of Oxford\\
\texttt{shiyi.sun@spc.ox.ac.uk}
\and
You Luo\\
University College London\\
\texttt{you.luo.25@ucl.ac.uk}
}
\begin{document}
\maketitle
\begin{abstract}
Many ranking and agent trace datasets are recorded as linear orders even though their latent structure is only partially ordered. This is especially common in agent and workflow traces, where observed order may reflect arbitrary linearization rather than true prerequisites. We introduce a differentiable relaxation for latent partial-order inference from such traces. Starting from a hard frontier-constrained model of noisy linear extensions, we replace discontinuous product-order precedence and binary frontier feasibility with smooth surrogates, yielding a continuous posterior that preserves closure-level partial-order semantics and supports gradient-based MCMC and variational inference. We prove soft transitivity, sharp-limit frontier recovery, and convergence to the hard likelihood. Experiments on synthetic data, records of social dominance relations, and cloud-agent traces show close posterior fidelity to hard MCMC on small instances and improved runtime--accuracy trade-offs on larger problems.
\end{abstract}

% ============================================================================
% Suggested NeurIPS main-paper page budget:
% Intro 1.0 / Notation 0.3 / Hard model 1.3 / Relaxation 2.0 / Bayesian 1.4 /
% Theory 1.0 / Experiments 1.5 / Discussion+Conclusion 0.5.
% Move detailed proofs and posterior derivations to the appendix.
% ============================================================================

\section{Introduction}
\label{sec:intro}
Rank-data and action-trace datasets are typically recorded as linear sequences,
although the constraints governing valid outcomes are often only partially
ordered. These constraints may be temporal or process-based \citep{mannila00,mannila08,leemans2023partial}, causal
\citep{beerenwinkel2007conjunctive}, or dominance-based
\citep{nicholls2025royalacta}, and are usually not observed directly. Inferring
them is important because they encode interpretable structure and support
feasibility evaluation on new sequences. In these settings, however, the
underlying relation is often incomplete: the latent structure is a partial order,
or \emph{poset}, in which pairs of items that can occur in either order have no
precedence relation. Consequently, an observed order need not imply a true
prerequisite relation; it may reflect scheduling, logging, or a single valid
linearization of the latent partial order. Treating all observed precedences as
real can therefore produce overly sequential and unrealistic structures,
especially in workflow or LLM-agent settings where unnecessary ordering induces
extra execution steps and compute. A latent partial order separates stable
prerequisites from reorderable actions.

% In recent work \cite{li2026delinearizingagenttracesbayesian} has shown that posets are helpful for workflow and agent trace analysis: an outcome in which \(a\) comes before \(b\) may reflect scheduling, logging, or simply one valid linearization among many, rather than a true prerequisite relation. However, \(a\) before \(b\) may be a logical requirement of the task and so \cite{li2026delinearizingagenttracesbayesian} learn the constraining poset and pass it to the LLM, saving tokens. If traces are linearisations of an incomplete order then treating their precedence relations as real will reconstruct unrealistic and unnecessarily sequential execution structures. A latent partial order separates stable prerequisite constraints from interchangeable actions.

Bayesian inference is natural because sparse or noisy traces support multiple compatible partial orders. A posterior over precedence relations allows us to distinguish high-confidence prerequisites from unresolved relations. %\citep{jiang2023bayesian}. 
The main obstacle is computational: current MCMC-based Bayesian Partial Order inference or ``hard-PO'' \citep{nicholls11} is combinatorial, and counting linear extensions of a poset is \#P-complete \citep{brightwell1991linearextensions}. These difficulties become acute as traces grow longer: inference with hard-PO has the right semantics but does not scale; its discontinuous precedence constraints prevent use of efficient gradient-based methods.

We address this bottleneck with a \emph{differentiable Bayesian relaxation for latent partial-order inference from noisy linear extensions}. Our key contribution is a \emph{differentiable posterior over closure-level partial-order structure} that preserves \emph{frontier semantics} while replacing discrete poset exploration with gradient-based inference. This is most useful when observations are linearizations of a latent workflow and non-observed pair reversals should be treated as uncertainty rather than evidence of precedence. Building on the hard frontier-softmax likelihood \citep{li2026delinearizingagenttracesbayesian}, we introduce a product-order relaxation by smoothing precedence, frontier feasibility, and successor utility, with provable soft transitivity, sharp-limit frontier recovery, and convergence to the hard likelihood; Figure~\ref{fig:method_overview} summarizes the pipeline.

\begin{figure}[h]
    \centering
    \includegraphics[width=0.89\textwidth, trim=0mm 0mm 0mm 0mm, clip]{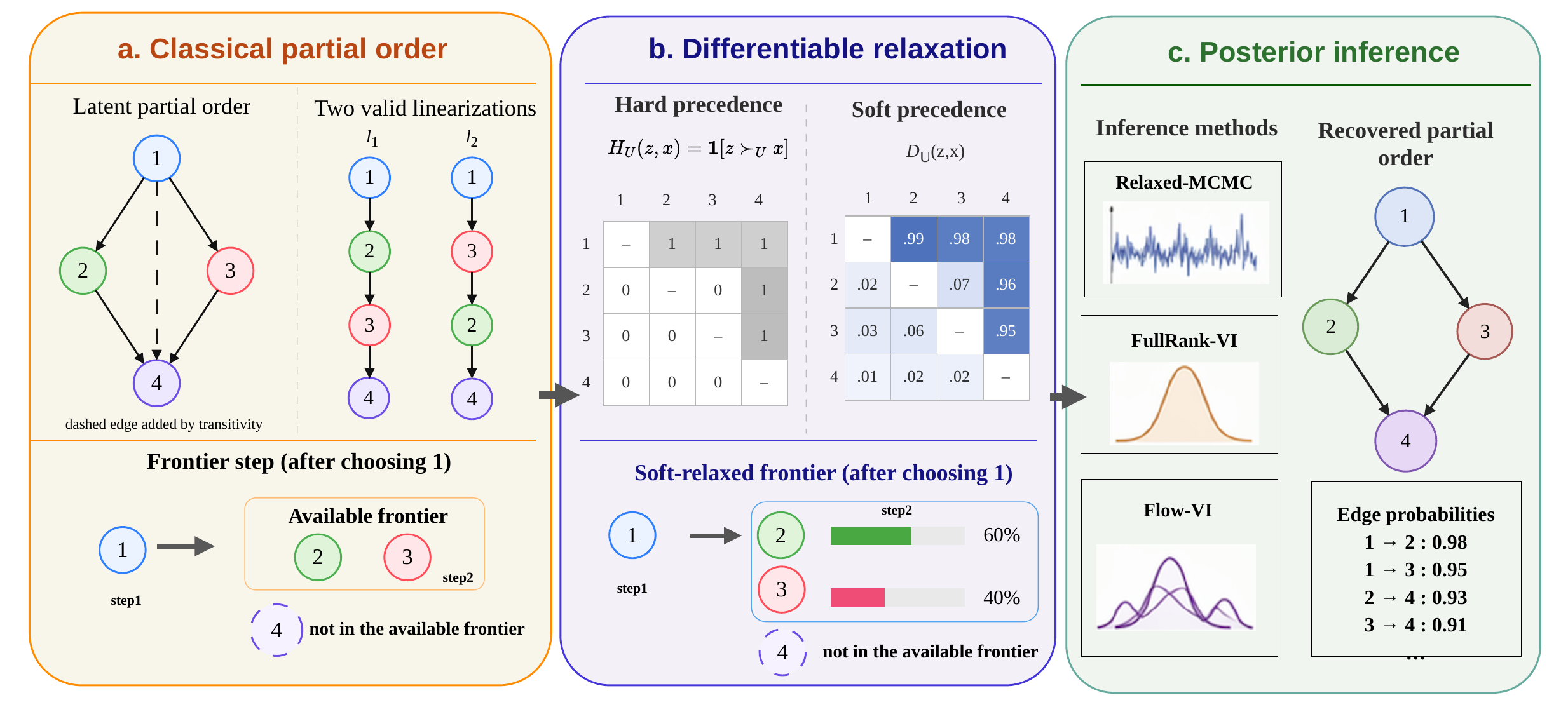}
\caption{
Overview of the proposed framework.
(a) A latent partial order induces multiple valid linearizations; after selecting item \(1\), items \(2\) and \(3\) lie on the hard frontier, while item \(4\) is not yet feasible.
(b) The hard precedence matrix \(H_U\) is replaced by a differentiable soft precedence matrix \(D_U\), which induces a soft-relaxed frontier and the relaxed frontier-softmax likelihood.
(c) The resulting continuous posterior supports \relaxedmcmc{}, \fullrankvi{}, and \flowvi{}, yielding posterior edge probabilities and a recovered partial order.
}

\label{fig:method_overview}
\end{figure}

% We address this challenge with a differentiable relaxation of the model in \cite{li2026delinearizingagenttracesbayesian}. The relaxation preserves poset-constrained sequential choice, replacing hard product-order precedence and binary frontier membership with smooth surrogates. This yields a continuous target for \relaxedmcmc and variational inference, and recovers hard-PO as a limit. Figure~\ref{fig:method_overview} gives the pipeline: data are linear extensions of a partial order with a hard closure representation, which is relaxed to a differentiable soft precedence matrix for inference.
% The key design principle is to smooth the inference problem without changing the object of inference. We therefore retain the sequential frontier likelihood, rather than replacing partial-order learning with an unconstrained differentiable loss. The frontier is the mechanism that connects observed traces to latent prerequisite structure. 

\noindent\textbf{Contributions.}
(i) We introduce a differentiable Bayesian relaxation for latent partial-order inference from noisy linear extensions.
(ii) We develop a smooth frontier-constrained likelihood that preserves hard-model semantics while enabling gradient MCMC and variational inference.
(iii) We prove sharp-limit recovery of the hard model and show strong posterior fidelity and runtime--accuracy trade-offs on synthetic, ranking, and agent-trace data.

\section{Notation and Preliminaries}
\label{sec:preliminaries}

We provide an overview of the related concepts and terminology in Figure~\ref{fig:po_old} in Appendix~\ref{app:preliminaries}.

\noindent\textbf{Partial orders, Choice sets and Observation lists:}
\label{subsec:choice_sets_posets} Let $\mathcal{M} = [M] := \{1,\dots,M\}$  denote the universe of items. The data will be sequences of items chosen from $\mathcal{M}$. Let \(\mathcal{B}_{\mathcal{M}}\) be the set of all nonempty subsets of \(\mathcal{M}\). For a generic \emph{choice set} $S\in\mathcal{B}_{\mathcal{M}}$ write $m = |S|$ for its size. A (strict) \emph{partial order} $h = (S,\succ_h)$ on \(S\) is a binary relation \(\succ_h\) satisfying
 \emph{asymmetry} (if $x_1\succ_h x_2$ for some $x_1,x_2\in S$ then not $x_2\succ_h x_1$) and \emph{transitivity} (if $x_1\succ_h x_2$ and $x_2\succ_h x_3$ then $x_1\succ_h x_3$). Relations may be incomplete so for $x,z\in S$ it is possible that neither $x\succ_h z$ nor $z\succ_h x$ hold. Let $\mathcal H_S$ be the class of all partial orders on items in $S$. 

If $h\in \mathcal H_S$ and for every $x\ne z$ in \(S\) either $x\succ_h z$ or $z\succ_h x$ then \(h\) is a \emph{complete order}. Complete orders are in one-to-one correspondence with permutations of \(S\). A sequence
$
y=(y_1,\dots,y_m)
$
respects $h$ if $y_i\succ_h y_j$ implies $i<j$, so higher precedence items come first. A \emph{linear extension} of \(h\) respects $h$ and contains each element of \(S\) exactly once. Figure~\ref{fig:method_overview}(a) illustrates a latent partial order in transitive closure, and two valid linear extensions.

\noindent\textbf{Closure, Reduction, and max-set:}
\label{subsec:closure_cover_frontier}
The \emph{transitive reduction}  $
\mathrm{TR}(h)
$ 
of a poset \(h\in \mathcal H_S\) drops all relations implied by transitivity. The \emph{transitive closure} 
$
\mathrm{TC}(h)
$
includes all relations. We recover closure-level relations; the reduction is used for its Hasse-diagram visualization. For any choice set \(S\) and partial order \(h=(S,\succ_h)\), define the maximal set
\[
\max(h)
=
\{x \in S : \nexists\ z \in S \text{ such that } z \succ_h x\}.
\]
When constructing a linear extension sequentially, the next item must be chosen from the max-set of the remaining suborder \(h\). The frontier at each choice step is this maximal set.

\subsection{Frontier-softmax likelihood}
\label{subsec:frontier_softmax_likelihood}
The data will be $N$ ranking lists of items. In our motivating application (Section~\ref{subsec:aliyun_experiment}) a generic list $y_{1:m}=(y_1,\dots,y_m)$ is a noisy sequential execution of a latent partial order. The basic modeling approach \citep{li2026delinearizingagenttracesbayesian} is that at each step the next item in the sequence is chosen from the current \emph{frontier}, i.e., from the maximal elements of the remaining suborder.
Let \(h=(\mathcal{M},\succ_h)\) be a partial order on $\mathcal{M}$, let \(S\in\mathcal{B}_\mathcal{M}\) be a choice set, and let
$
y_{1:m}%=(y_1,\dots,y_m)
$
be a linear extension of the suborder $h[S]=(S,\succ_h)$. This includes all constraints on the order of items in $S$ imposed by $h$. 
At step \(t=1,\dots,m\) the remaining items are
$\
\yt=(y_t,\dots,y_m),
$
and \(h[\yt]=(\yt,\succ_h)\) is the suborder for unplaced items. The \emph{feasible hard frontier},
\begin{equation}
F(h;\yt)
:=
\max(h[\yt]),
%=\left\{j\in y_{\scriptscriptstyle \,\ge\, i}:\nexists j'\in y_{\scriptscriptstyle \,\ge\, i}\setminus\{j\}\text{ such that } j'\succ_h j\right\}.
\label{eq:frontier_definition}
\end{equation}
is just the max-set of $h[\yt]$. These are the ``next'' items available to grow the sequence. 

In order to weight selection over feasible items, we assign each \(x \in \yt\) a successor count
\begin{equation}\textstyle
S_t(x;h)
=\sum_{z\in \yt}\mathbf{1}[x \succ_h z],
\label{eq:frontier_softmax_descendants}
\end{equation}
which counts the remaining items that must follow \(x\). This encodes a ``hardest-path-first'' heuristic: items that dominate larger suborders are preferred.
Following \citet{li2026delinearizingagenttracesbayesian}, we define the frontier-softmax utility, where \(F(h;\yt)\) is the feasible frontier defined in \eqref{eq:frontier_definition}.
\begin{equation}
Q_t(x;h)=\log\!\bigl(1+S_t(x;h)\bigr)\quad\text{for }x\in F(h;y_{\ge t}),
\label{eq:frontier_softmax_utility}
\end{equation}

Given inverse temperature \(\beta\ge 0\), the corresponding noise-free selection probability is
\begin{equation}
q(x \mid h[\yt],\beta)
\propto
\exp\{\beta Q_t(x;h)\}, \qquad x\in F(h;\yt)
\label{eq:frontier_softmax_choice}
\end{equation}
As \(\beta\) increases, the distribution concentrates more strongly on high-utility frontier elements. The likelihood for $h\in\mathcal{H}_S$ given one order \(y_{1:m}\) is
\begin{equation}\textstyle
p(y_{1:m}\mid h[S],\beta)
=
\prod_{t=1}^{m}
q(y_t \mid h[\yt],\beta)
\label{eq:frontier_softmax_full}.
\end{equation}
which defines a generative ranking model for linear extensions. It is a sequential random utility model, see \citet{seshadri21} for discussion of this class of model.

\section{Differentiable Relaxation of the Latent Partial Order}
\label{sec:relaxation}
The hard latent partial-order sequence-model $p(y_{1:m}\mid h,\beta)$ is semantically natural but computationally difficult. We therefore introduce a continuous latent variable parameterisation of partial orders and show how to smooth the model's dependence on the new parameters. 

\textbf{Latent product-order parameterisation.} \label{subsec:latent_product_order}Give each item \(x\in \mathcal{M}\) a latent embedding $u_x\in\R^d$ and let \(U=(u_x)_{x\in \mathcal{M}}\) so that $U$ is an $\R^{M\times d}$ embedding matrix. 
The hard latent precedence relation $h_U=(\mathcal{M},\succ_U),\ h_U\in\mathcal{H}_{\mathcal{M}}$ parameterised by $U$ is the product order
\begin{equation}
z \succ_U x
\iff
\mathbf 1[u_{z,k}>u_{x,k}]=1
\quad\forall k=1,\dots,d.
\label{eq:hard_product_order}
\end{equation}
The dimension \(d\) controls how expressive the embedding is (Appendix~\ref{app:poset_dimension}): when \(d=1\), the induced order is complete; if \(d>1\), pairs can be incomparable (for example \(u_{z,1}>u_{x,1}\) but \(u_{x,2}>u_{z,2}\)). See \citep{nicholls2025royalacta} for discussion crediting \cite{winkler1985random}.

When we parameterise $h$ via $h_U$ the likelihood $p(y_{1:m}\mid h_U,\beta)$ depends on $U$ through $h$. However, $U$ is processed through the non-differentiable precedence indicator in \eqref{eq:hard_product_order}, so we replace precedence with a smooth product-order score, lift it to differentiable surrogates for feasibility \eqref{eq:frontier_definition} and utility \eqref{eq:frontier_softmax_choice} and use these to define a relaxed frontier-softmax likelihood.

\textbf{Soft pairwise precedence.}
%Recall that the hard pairwise margin $m_U(z,x)$ defined below \eqref{eq:hard_product_order} is the worst coordinate-wise gap, and its sign determines the hard precedence relation in \eqref{eq:hard_product_order_mU}.
If for \(z,x \in \mathcal M\) and $k=1,\dots,d$ we define coordinate-wise gaps
$
\Delta_k(z,x) = u_{z,k} - u_{x,k}
$ and let 
\begin{equation}
m_U(z,x)=\min_{k=1,\dots,d}\Delta_k(z,x)\label{eq:hard_min_mU}    
\end{equation}
(the \emph{pairwise margin}) then \eqref{eq:hard_product_order} can be written
\begin{equation}
z \succ_U x
\iff
\mathbf 1[m_U(z,x)>0]=1.\label{eq:hard_product_order_mU}
\end{equation}

Motivated by product-order embedding models for transitive asymmetric structure
\citep{vendrov2016order}, we replace the discontinuous minimum \(m_U\) in
\eqref{eq:hard_product_order_mU} by the standard log-sum-exp soft minimum
\citep{nesterov2005smooth} and the indicator by a sigmoid. For \(\tau>0\), let
\(\smin_\tau\) be defined by \eqref{eq:softmin_definition}, and define the
relaxed pairwise margin \(M_U\) by \eqref{eq:soft_margin}.
\begin{equation}
\smin_\tau(a_1,\dots,a_d)
=
-\tau\log\!\left(\sum_{k=1}^d e^{-a_k/\tau}\right).
\label{eq:softmin_definition}
\end{equation}
\begin{equation}
M_U(z,x)
=
\smin_\tau\!\bigl(\Delta_1(z,x),\dots,\Delta_d(z,x)\bigr),
\label{eq:soft_margin}
\end{equation}
\(M_U(z,x)\) is a smooth approximation to \(m_U(z,x)\) (See Lemma~\ref{lem:margin_approx_appendix}). Let \(\gamma>0\)  be a sharpness parameter and \(\sigma(t)=(1+e^{-t})^{-1}\) the sigmoid function. The soft precedence score
\begin{equation}
D_U(z,x)
=
\sigma\!\bigl(\gamma M_U(z,x)\bigr),
\label{eq:soft_pairwise_precedence}
\end{equation}
is a differentiable surrogate for the hard indicator \(\1[z\succ_U x]\). The diamond partial order in Figure~\ref{fig:method_overview}a can be represented either by the binary closure matrix in Figure~\ref{fig:method_overview}b (at left) or by the soft precedence matrix in Figure~\ref{fig:method_overview}b (at right). The relaxed matrix replaces the binary precedence event by a score in \([0,1]\) while preserving the same latent geometric structure.

\textbf{Soft frontier.}
The pairwise score \(D_U(z,x)\) must be lifted to a set-level frontier weight.
For the remaining set \(\yt\), the hard frontier indicator and its relaxation are
\begin{equation}
\begin{aligned}
F_t(x;h)
&:=
\mathbf 1[x\in F(h;\yt)]
=
\prod\nolimits_{z\in \yt\setminus\{x\}}
\bigl(1-\mathbf 1[z\succ_h x]\bigr),\\
\widetilde F_t(x;U)
&:=
\prod\nolimits_{z\in \yt\setminus\{x\}}
\bigl(1-D_U(z,x)\bigr).
\end{aligned}
\label{eq:frontier_indicator_and_relaxation}
\end{equation}
The product form says that \(x\) is feasible iff no remaining item precedes it in $h$.
The relaxation replaces each hard precedence indicator by the soft score
\(D_U(z,x)\), so \(\widetilde F_t(x;U)\) is small when some remaining item
strongly precedes \(x\), and close to one when \(x\) is frontier-feasible.

\textbf{Soft successor utility and relaxed likelihood.}
The descendant count in \eqref{eq:frontier_softmax_descendants} and successor utility in \eqref{eq:frontier_softmax_utility} are relaxed to give
\begin{equation}
\widetilde S_t(x;U)
=
\sum_{z\in \yt\setminus\{x\}} D_U(x,z),
\qquad
\widetilde Q_t(x;U)
=
\log\bigl(1+\widetilde S_t(x;U)\bigr)
\label{eq:soft_successor_terms}
\end{equation}
respectively.
Substituting \(\widetilde F_t\) and \(\widetilde Q_t\) into \eqref{eq:frontier_softmax_choice} yields the relaxed one-step likelihood
\begin{equation}
\widetilde p(y_t\mid \yt,U,\beta,\gamma)
\propto
%\frac
{
\widetilde F_t(x;U)\exp\{\beta \, \widetilde Q_t(x;U)\}
},\qquad x\in \yt 
%{
%\sum_{x\in \yt}
%\widetilde F_t(x;U)\exp\{\beta \, \widetilde Q_t(x;U)\}
%}
\label{eq:relaxed_step_likelihood}
\end{equation}
and the relaxed full likelihood is
\begin{equation}\textstyle
\widetilde p(y_{1:m}\mid U,\beta,\gamma)
=
\prod_{t=1}^{m}
\widetilde p_{\yt}(y_t\mid \yt,U,\beta,\gamma).
\label{eq:relaxed_trace_likelihood}
\end{equation}

Figure~\ref{fig:frontier_likelihood} in Appendix~\ref{sec:relaxed_po_details}
contrasts the hard and relaxed frontier likelihoods. Both place mass on frontier-feasible choices, but they differ off the frontier: the hard
likelihood assigns zero probability, whereas the relaxed likelihood remains
small but nonzero via \(\widetilde F_t\). This yields a differentiable
objective while retaining a strong bias toward feasible actions.
Figure~\ref{fig:soft_product_order}  visualizes the overall relaxation, from hard
product-order precedence to a soft precedence matrix \(D_U\) used in the
relaxed frontier and utility terms, making the likelihood smooth in \(U\).

\subsection{Structural properties of the relaxation}
\label{subsec:relaxation_theory}
We now state the main structural guarantees of the relaxation: uniform approximation of the hard margin, soft transitivity, frontier recovery, and sharp-limit convergence back to the hard frontier-softmax model. We begin with uniform approximation of the hard margin.
\begin{proposition}[Soft-min approximation]
\label{prop:softmin_properties}
For any \(a=(a_1,\dots,a_d)\in\R^d\) and any \(\tau>0\),
\begin{equation}
\min_k a_k-\tau\log d
\;\le\;
\smin_\tau(a_1,\dots,a_d)
\;\le\;
\min_k a_k,
\label{eq:softmin_uniform_bounds}
\end{equation}
and \(\smin_\tau(a)\to \min_k a_k\) as \(\tau\downarrow 0\).
\end{proposition}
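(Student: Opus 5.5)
The plan is to sandwich the sum inside the logarithm in \eqref{eq:softmin_definition} between its largest term and $d$ times that term, then apply the monotone map $s\mapsto-\tau\log s$.

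Let $a_\ast=\min_k a_k$ and attain it at some index $k^\ast$. The sum $\Sigma:=\sum_{k=1}^d e^{-a_k/\tau}$ has all terms positive. Each term satisfies $e^{-a_k/\tau}\le e^{-a_\ast/\tau}$ because $a_k\ge a_\ast$ and $\tau>0$. The term with $k=k^\ast$ equals $e^{-a_\ast/\tau}$. This gives
\[
e^{-a_\ast/\tau}\;\le\;\Sigma\;\le\; d\,e^{-a_\ast/\tau}.
\]

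The map $s\mapsto -\tau\log s$ is strictly decreasing on $(0,\infty)$ for $\tau>0$, so applying it reverses both inequalities. The left bound on $\Sigma$ becomes $\smin_\tau(a)\le a_\ast$. The right bound becomes $\smin_\tau(a)\ge -\tau\log d + a_\ast$. These are exactly the two inequalities in \eqref{eq:softmin_uniform_bounds}.

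For the limit, note that the gap between the two bounds is $\tau\log d$. This does not depend on $a$ and tends to $0$ as $\tau\downarrow0$. The squeeze theorem therefore gives $\smin_\tau(a)\to\min_k a_k$, and the convergence is uniform over $a\in\R^d$.

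The only point needing care is the direction of the inequalities after taking $-\tau\log$. That step relies on $\tau>0$ and on $\Sigma>0$, both of which hold here. The edge case $d=1$ gives $\log d=0$ and equality throughout, which is consistent with the bounds. I expect no substantive obstacle.
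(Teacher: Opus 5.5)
Your proposal is correct and follows essentially the same route as the paper. The paper factors out $e^{-m/\tau}$ and bounds the remaining sum between $1$ and $d$, whereas you bound $\Sigma$ directly between $e^{-a_\ast/\tau}$ and $d\,e^{-a_\ast/\tau}$. Both then apply $-\tau\log$, reverse the inequalities, and use $\tau\log d\to0$ for the limit.
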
The relaxed model exhibits a \emph{soft-transitivity} which converges to hard transitivity as $\tau\downarrow 0$.
\begin{theorem}[Soft transitivity]
\label{thm:soft_transitivity}
For any \(x,y,z\in S\),
\begin{equation}
M_U(z,x)\ge M_U(z,y)+M_U(y,x).
\label{eq:soft_transitivity_margin_main}
\end{equation}
Consequently,
\begin{equation}
\logit D_U(z,x)\ge \logit D_U(z,y)+\logit D_U(y,x).
\label{eq:soft_transitivity_logit_main}
\end{equation}
\end{theorem}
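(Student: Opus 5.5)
The proof reduces to one structural property of the soft minimum, namely superadditivity, combined with the additivity of coordinate gaps. First I note that the coordinate-wise gaps telescope. For every $k=1,\dots,d$,
\[
\Delta_k(z,x)=u_{z,k}-u_{x,k}=(u_{z,k}-u_{y,k})+(u_{y,k}-u_{x,k})=\Delta_k(z,y)+\Delta_k(y,x).
\]
Writing $a_k=\Delta_k(z,y)$ and $b_k=\Delta_k(y,x)$, the margin inequality \eqref{eq:soft_transitivity_margin_main} becomes the claim
\[
\smin_\tau(a+b)\ge\smin_\tau(a)+\smin_\tau(b)\qquad\text{for all }a,b\in\R^d .
\]

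Next I prove this superadditivity directly from \eqref{eq:softmin_definition}. Since $-\tau\log(\cdot)$ is decreasing and $\tau>0$, it is equivalent to show
\[
\sum_{k=1}^d e^{-a_k/\tau}e^{-b_k/\tau}\;\le\;\Bigl(\sum_{k=1}^d e^{-a_k/\tau}\Bigr)\Bigl(\sum_{l=1}^d e^{-b_l/\tau}\Bigr).
\]
This holds because the right-hand side expands to $\sum_{k,l}e^{-a_k/\tau}e^{-b_l/\tau}$. The left-hand side is exactly the diagonal sub-sum $k=l$, and every off-diagonal term is strictly positive. Applying $-\tau\log$ to both sides turns the product on the right into the sum $\smin_\tau(a)+\smin_\tau(b)$, which gives \eqref{eq:soft_transitivity_margin_main}.

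For the logit statement, I use that $\logit$ is the inverse of $\sigma$. By \eqref{eq:soft_pairwise_precedence}, this gives $\logit D_U(z,x)=\gamma M_U(z,x)$, and similarly for the other two pairs. Since $\gamma>0$, multiplying \eqref{eq:soft_transitivity_margin_main} by $\gamma$ preserves the inequality, which yields \eqref{eq:soft_transitivity_logit_main}.

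There is no real obstacle. The only point needing care is recognizing that log-sum-exp with a negative sign is superadditive (the diagonal-versus-full-product argument), rather than trying to route the proof through the bounds of Proposition~\ref{prop:softmin_properties}. Those bounds would only give the inequality up to an additive $\tau\log d$ error. The statement does not require $x,y,z$ to be distinct, and the argument above holds verbatim in degenerate cases.
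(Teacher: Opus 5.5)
Your proposal is correct and follows essentially the same route as the paper. The paper also telescopes the coordinate gaps, proves superadditivity of $\smin_\tau$ via $\sum_k u_k v_k \le (\sum_k u_k)(\sum_k v_k)$, and then passes to the logit statement using $\logit D_U = \gamma M_U$ with $\gamma>0$.
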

Figure~\ref{fig:hard_soft_transitivity} gives the geometric intuition behind soft transitivity.
The hard product order is transitive because the product-order embedding in \eqref{eq:hard_product_order} composes: if for $k=1,\dots,d$ we have \(u_{z,k}>u_{y,k}\) and \(u_{y,k}>u_{x,k}\) then \(u_{z,k}>u_{x,k}\). The relaxed score
inherits this geometry by replacing the hard minimum with a soft-min margin and sigmoid
precedence score.
\begin{figure}[h]
    \centering
    \includegraphics[width=0.7\textwidth]{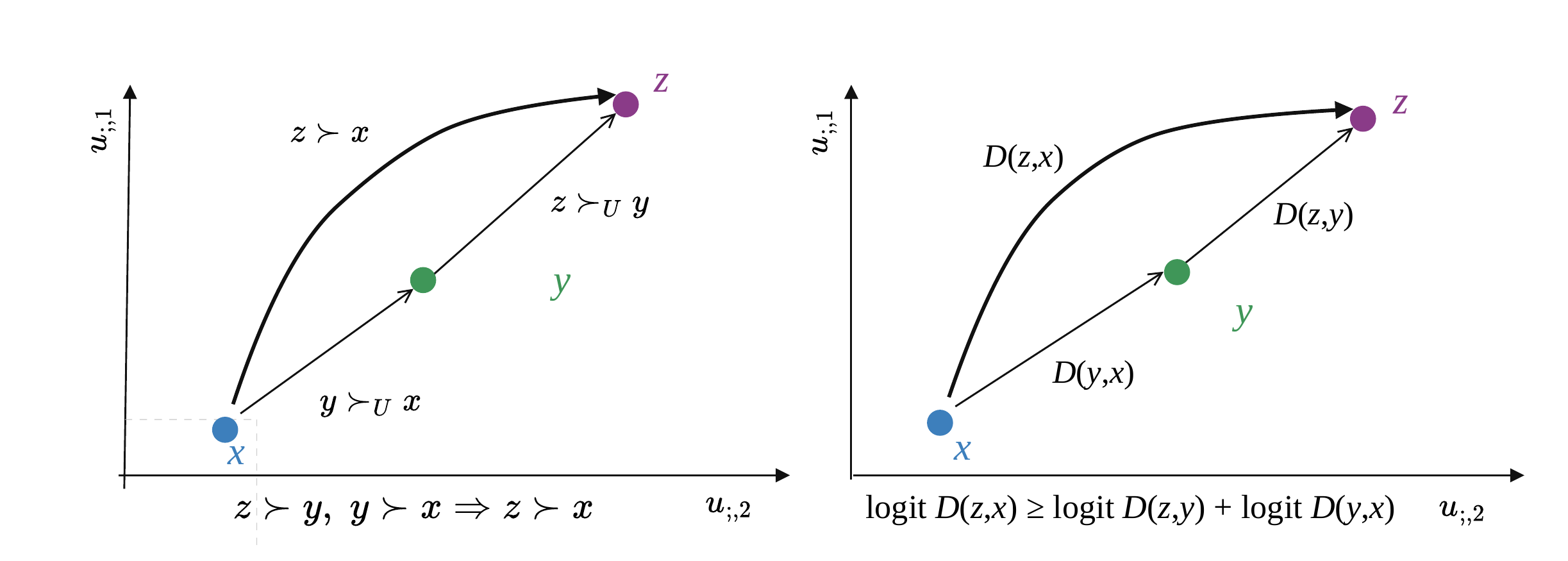}
\caption{
Hard and soft transitivity under product-order precedence.
(a) Hard product-order transitivity;
(b) Relaxed transitivity: strong \(D_U(z,y)\) and \(D_U(y,x)\) induce strong \(D_U(z,x)\)}
    \label{fig:hard_soft_transitivity}
\end{figure}
\begin{assumption}[Margin separation]
\label{ass:margin_separation}
The margin \(m_U(z,x)\!\neq\! 0\) in \eqref{eq:hard_min_mU} for all distinct \(x,z\in S\).
\end{assumption} 
This means the hard decision is off the boundary. A boundary case means $u_{z,k}=u_{x,k}$ for some $k=1,\dots d$, and since the prior for $U$ is continuous, Assumption~\ref{ass:margin_separation} holds almost surely.

\begin{theorem}[Sharp-limit frontier recovery]
\label{thm:frontier_quantitative}
Under Assumption~\ref{ass:margin_separation}, for all \(x\in \yt\),
\begin{equation}
\lim_{\gamma\to\infty}\lim_{\tau\downarrow 0}
\widetilde F_t(x;U)
=
\1\!\left[x\in F(h_U;\yt)\right],\qquad t=1,\dots,m.
\label{eq:frontier_limit_main}
\end{equation}
\end{theorem}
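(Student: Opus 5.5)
The plan is to pass to the limit factor by factor in the finite product defining \(\widetilde F_t(x;U)\) in \eqref{eq:frontier_indicator_and_relaxation}, taking the inner limit \(\tau\downarrow 0\) first and then \(\gamma\to\infty\). The product has at most \(m-1\) factors. Each factor \(1-D_U(z,x)\) is a continuous function of the scalar \(M_U(z,x)\). So it suffices to identify the iterated limit of each \(D_U(z,x)\) for \(z\in\yt\setminus\{x\}\), and then use that a finite product of convergent sequences converges to the product of the limits.

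\emph{Step 1 (inner limit).} Fix \(U\), \(\gamma>0\), and distinct \(z,x\). By Proposition~\ref{prop:softmin_properties} applied to \(a=(\Delta_1(z,x),\dots,\Delta_d(z,x))\), we have \(M_U(z,x)\to m_U(z,x)\) as \(\tau\downarrow0\). Indeed the bounds \(m_U(z,x)-\tau\log d\le M_U(z,x)\le m_U(z,x)\) give this directly. Since \(\sigma\) is continuous,
\[
\lim_{\tau\downarrow0}D_U(z,x)=\sigma\bigl(\gamma\, m_U(z,x)\bigr),
\qquad
\lim_{\tau\downarrow0}\widetilde F_t(x;U)=\prod\nolimits_{z\in\yt\setminus\{x\}}\bigl(1-\sigma(\gamma\, m_U(z,x))\bigr).
\]

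\emph{Step 2 (outer limit).} By Assumption~\ref{ass:margin_separation}, \(m_U(z,x)\neq0\). This is the only place the assumption is needed: at \(m_U=0\) the limit would be stuck at \(1/2\). If \(m_U(z,x)>0\), then \(\sigma(\gamma m_U(z,x))\to1\) as \(\gamma\to\infty\). If \(m_U(z,x)<0\), then \(\sigma(\gamma m_U(z,x))\to0\). By \eqref{eq:hard_product_order_mU}, this says \(\sigma(\gamma m_U(z,x))\to\1[z\succ_U x]\). Taking the finite product of the limits then gives
\[
\lim_{\gamma\to\infty}\lim_{\tau\downarrow0}\widetilde F_t(x;U)=\prod\nolimits_{z\in\yt\setminus\{x\}}\bigl(1-\1[z\succ_U x]\bigr).
\]

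\emph{Step 3 (identification).} By the product identity in \eqref{eq:frontier_indicator_and_relaxation} with \(h=h_U\), the right-hand side equals \(F_t(x;h_U)=\1[x\in F(h_U;\yt)]\). This holds because \(x\) is maximal in \(h_U[\yt]\) exactly when no remaining \(z\) satisfies \(z\succ_U x\).

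There is no real obstacle here. The one point needing care is the order of the limits together with the boundary case: the iterated limit is well defined only because \(m_U(z,x)\neq0\) strictly. I would remark that uniform bounds such as \(\sigma(\gamma(m-\tau\log d))\le D_U\le\sigma(\gamma m)\) would in fact give the joint limit whenever \(\gamma\tau\to0\), but only the iterated limit is claimed.
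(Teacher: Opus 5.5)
Your proof is correct, but it takes a different route from the paper's. The paper never computes the inner limit. It first proves a quantitative proposition in terms of the local separation margin $\delta_t(U)=\min_{x\neq z\in y_{\ge t}}|m_U(z,x)|$, which gives two bounds:
\begin{itemize}
\item if $x$ is on the hard frontier, then $\widetilde F_t(x;U)\ge\bigl(1-\sigma(-\gamma\delta_t(U))\bigr)^{|y_{\ge t}|-1}$ for every $\tau>0$, using only the one-sided bound $M_U\le m_U$;
\item if $x$ is off the frontier, then $\widetilde F_t(x;U)\le 1-\sigma\bigl(\gamma(\delta_t(U)-\tau\log d)\bigr)$, obtained from a single dominating predecessor $z^\star$ together with the fact that every other factor is at most one.
\end{itemize}
The theorem then follows by squeezing.

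Your argument instead evaluates the inner limit exactly, by continuity of $\sigma$ and of finite products, and then takes the outer limit factor by factor. This is the same technique the paper itself uses for the descendant counts. Your version is shorter, and it makes the existence of the inner limit explicit, which the paper's squeeze uses only implicitly. The paper's version buys explicit finite-$(\tau,\gamma)$ error bounds, which feed its later discussion of the role of $\tau$.

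Those bounds also show your closing remark asks for more than is needed. For any fixed $\tau<\delta_t(U)/\log d$, the $\gamma\to\infty$ limit already equals the hard indicator. So the joint limit holds as $\tau\downarrow0$, $\gamma\to\infty$ without the coupling $\gamma\tau\to0$. Your own bounds give this too, since $m_U(z,x)>0$ implies $\gamma\bigl(m_U(z,x)-\tau\log d\bigr)\to\infty$ once $\tau\log d<m_U(z,x)/2$.
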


Theorem~\ref{thm:frontier_quantitative} supports the weak convergence of the data distributions $p$ and $\widetilde p$.
\begin{theorem}[Convergence of the relaxed frontier-softmax likelihood]
\label{thm:likelihood_convergence}
Under Assumption~\ref{ass:margin_separation}, for every step \(t\),
\begin{equation}
\lim_{\gamma\to\infty}\lim_{\tau\downarrow 0}
\widetilde p(y_t\mid \yt,U,\beta,\gamma)
=
p(y_t\mid \yt,h_U,\beta),
\label{eq:step_likelihood_limit_main}
\end{equation}
and consequently
\begin{equation}
\lim_{\gamma\to\infty}\lim_{\tau\downarrow 0}
\widetilde p(y\mid U,\beta,\gamma)
=
p(y\mid h_U,\beta).
\label{eq:trace_likelihood_limit_main}
\end{equation}
\end{theorem}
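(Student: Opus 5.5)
The plan is to pass to the limit term by term in the one-step normalized ratio \eqref{eq:relaxed_step_likelihood}, and then to take the finite product over steps.

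\textbf{Step 1: pairwise scores.} Fix \(U\) satisfying Assumption~\ref{ass:margin_separation}, and fix \(\gamma\).
\begin{itemize}
\item By Proposition~\ref{prop:softmin_properties}, \(M_U(z,x)\to m_U(z,x)\) as \(\tau\downarrow 0\) for every ordered pair. Since \(\sigma\) is continuous, \(D_U(z,x)\to\sigma(\gamma m_U(z,x))\).
\item Now let \(\gamma\to\infty\). Assumption~\ref{ass:margin_separation} gives \(m_U(z,x)\neq 0\) for distinct items, so \(\sigma(\gamma m_U(z,x))\to \1[m_U(z,x)>0]\).
\item By \eqref{eq:hard_product_order_mU}, this limit equals \(\1[z\succ_U x]\).
\end{itemize}
All the quantities involved are finite sums and products of \(D_U\) values over the finite set \(\yt\). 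So the iterated limit commutes with these operations, and we get the following.
\begin{itemize}
\item \(\widetilde F_t(x;U)\to F_t(x;h_U)\), which is exactly Theorem~\ref{thm:frontier_quantitative}.
\item \(\widetilde S_t(x;U)\to \sum_{z\in\yt\setminus\{x\}}\1[x\succ_U z]=S_t(x;h_U)\). The term \(z=x\) contributes nothing to \(S_t\), by irreflexivity.
\item By continuity of \(\log(1+\cdot)\) on \([0,\infty)\), \(\widetilde Q_t(x;U)\to Q_t(x;h_U)\).
\end{itemize}

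\textbf{Step 2: the numerators.} For each \(x\in\yt\), the unnormalized weight \(w_t(x)=\widetilde F_t(x;U)\exp\{\beta\widetilde Q_t(x;U)\}\) converges to \(\1[x\in F(h_U;\yt)]\exp\{\beta Q_t(x;h_U)\}\). This limit uses Step 1 together with the continuity of \(\exp\). Where the indicator is zero, the product is zero regardless of the value of \(Q_t\). This is fine because \(\widetilde Q_t\) stays bounded: it lies in \([0,\log m]\).

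\textbf{Step 3: the normalizer, which is the main obstacle.} We must make sure the limiting denominator \(\sum_{x\in\yt}\1[x\in F(h_U;\yt)]e^{\beta Q_t(x;h_U)}\) is strictly positive, so that the quotient of limits equals the limit of quotients. This holds because \(h_U\) restricted to the finite nonempty set \(\yt\) is a strict partial order, and such an order always has a maximal element. Hence \(F(h_U;\yt)\neq\emptyset\), and every term in the sum is at least \(1\) times \(e^{0}\). With this in hand, \(\widetilde p(y_t\mid\yt,U,\beta,\gamma)=w_t(y_t)/\sum_x w_t(x)\) converges to \(q(y_t\mid h_U[\yt],\beta)\) in the iterated limit. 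This establishes \eqref{eq:step_likelihood_limit_main}. If \(y_t\) is not on the hard frontier, both sides are \(0\), consistent with the hard likelihood assigning zero probability.

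\textbf{Step 4: the full trace.} The trace likelihood \eqref{eq:relaxed_trace_likelihood} is a product of \(m\) factors, each with a finite limit. The limit of a finite product is the product of the limits, which gives \eqref{eq:trace_likelihood_limit_main} via \eqref{eq:frontier_softmax_full}.

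One subtlety must be stated explicitly: the inner limit \(\tau\downarrow0\) is taken at fixed \(\gamma\) before the outer limit. This order is what lets us use continuity of \(\sigma\) in the first stage and margin separation in the second.
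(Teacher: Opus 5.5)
Your proposal is correct and follows essentially the same route as the paper: pointwise convergence of the unnormalised scores $\widetilde F_t\exp\{\beta\widetilde Q_t\}$ (the paper cites the frontier-recovery theorem and a descendant-count proposition; you re-derive both directly from the pairwise limits $D_U(z,x)\to\1[z\succ_U x]$), then positivity of the limiting normaliser because a finite strict partial order has a nonempty max-set, then a finite product over steps. One detail is left implicit by both you and the paper: the inner $\tau\downarrow 0$ stage also needs a positive normaliser. This holds trivially, since $\sigma(\gamma m_U)\in(0,1)$ keeps every $\widetilde F_t$ strictly positive at finite $\gamma$.
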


For fixed \(\tau>0\), \(\gamma>0\), \(\beta \ge 0\), the relaxed likelihood $\widetilde p(y_t\mid \yt,U,\beta,\gamma)$ is clearly \(C^\infty\) in \(U\) so in summary we have a continuously differentiable likelihood that converges to the likelihood we want to approximate as the hyperprameters $\gamma$ and $\tau$ go to zero.
%In Appendix \ref{app:proof_finite_error} we show not only that the relaxed model converges to the hard model, but that it is already close at finite $\tau$ and $\gamma$, with an explicit error bound.
However, for positive $\tau$ and $\gamma$, the relaxed posterior is still approximate: the experiments assess fidelity at $\gamma,\ \tau>0$.

\section{Bayesian Learning and Inference}
\label{sec:bayesian_learning}
We parameterise the partial order using coordinatewise dominance,
relaxing hard precedence and frontier constraints using the surrogates in
Section~\ref{sec:relaxation}. The posterior is based on the frontier-softmax likelihood
and that gives a continuous target for \relaxedmcmc{} and VI. 

\noindent\textbf{Prior and non-centred parameterization.}
\label{subsec:prior_geometry}
Following
\citep{nicholls2025royalacta}, the hard-PO partial order $h=h_U$ in \eqref{eq:hard_product_order} and our relaxation $D_U$ are given by the item embedding matrix $U=(u_x)_{x\in\mathcal{M}}$ of Section~\ref{sec:relaxation}. These have an exchangeable Gaussian prior,
\[
u_x \mid \rho \sim \mathcal N(0,\Sigma_\rho),\ x\in \mathcal{M},
\qquad
\Sigma_\rho=(1-\rho)I_d+\rho\,\mathbf 1\mathbf 1^\top,
\qquad
\rho\in(0,1),
\]
We write $U$ in term of standard normal variables
\(z_x\sim\mathcal N(0,I_d),\ x\in \mathcal{M}\), taking \(u_x=L_\rho z_x\) with
\(\Sigma_\rho=L_\rho L_\rho^\top\) so that \(U=ZL_\rho^\top\). 
Further priors are
\(\rho\sim\mathrm{Beta}(a_\rho,b_\rho)\),
\(\beta\sim\mathrm{Gamma}(a_\beta,b_\beta)\), and
\(\gamma\sim\mathrm{Gamma}(a_\gamma,b_\gamma)\). The soft-min
temperature \(\tau\) is fixed and checked by ablation. %Inference is carried out over \(Z\) and \((\rho,\beta,\gamma)\), with the likelihood in \eqref{eq:relaxed_trace_likelihood}.

%\iffalse
\citep{nicholls2025royalacta} show the family of marginal priors $\pi_S(h),\ S\in\mathcal{B}_{\mathcal{M}}$ is marginally consistent. This means that if $h\sim \pi_{\mathcal{M}}(\cdot)$ then the suborder $h[S]\sim \pi_S(\cdot)$. Here $D_U=[D_U(z,x)]_{z,x\in\mathcal{M}},\ D_U\in (0,1)^{M\times M}$ is the continuous surrogate for $h=(\mathcal{M},\succ_h)$ and the equivalent result holds. Let $\pi_S(D_U)$ be the marginal prior given by our $Z,\rho$ and $\gamma$ priors, constructed using $z_x,\ x\in S$ only.
If $D_U\sim \pi_\mathcal{M}(\cdot)$ then $D_U[S,S]$ is the matrix we get by deleting rows and columns of $D_U$ in $\mathcal{M}\setminus S$.
\begin{theorem}[Marginal Consistency, Appendix~\ref{app:marginal_consistency}]
\label{thm:marginal_consistency}
If $D_U\sim \pi_{\mathcal{M}}(\cdot)$ then $D_U[S,S]\sim \pi_S(\cdot)$.
\end{theorem}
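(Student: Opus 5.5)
The plan is to realise $D_U$ as a deterministic function of the latent variables and to use the fact that deleting items only discards independent, exchangeable inputs.

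\emph{Step 1: write $D_U$ as a deterministic map.} For fixed $\tau$, the entry $D_U(z,x)=\sigma\bigl(\gamma\,\smin_\tau(u_z-u_x)\bigr)$ depends only on the pair $(u_z,u_x)$ and on $\gamma$. Here $u_z-u_x$ denotes the vector of gaps $\Delta_k(z,x)$, and $u_x=L_\rho z_x$. So there is a fixed measurable map $\Phi$ with $D_U(z,x)=\Phi(z_z,z_x,\rho,\gamma)$. The same map $\Phi$ is used for every pair, and it does not depend on which index set we work over. Hence for any $S\in\mathcal B_{\mathcal M}$ there is a matrix map
\[
G_S\bigl((z_x)_{x\in S},\rho,\gamma\bigr)=\bigl[\Phi(z_z,z_x,\rho,\gamma)\bigr]_{z,x\in S}.
\]
By definition, $\pi_S$ is the law of $G_S$ when $z_x\overset{iid}{\sim}\mathcal N(0,I_d)$ for $x\in S$, independently of $\rho\sim\mathrm{Beta}(a_\rho,b_\rho)$ and $\gamma\sim\mathrm{Gamma}(a_\gamma,b_\gamma)$. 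The diagonal needs a convention, either a fixed $\sigma(\gamma\,\smin_\tau(0,\dots,0))$ or simply excluded, and the same convention is used for all $S$. (If $\beta$ enters the prior at all it is independent and irrelevant.)

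\emph{Step 2: restriction commutes with $G$.} Deleting the rows and columns indexed by $\mathcal M\setminus S$ gives the identity
\[
G_{\mathcal M}\bigl((z_x)_{x\in\mathcal M},\rho,\gamma\bigr)[S,S]=G_S\bigl((z_x)_{x\in S},\rho,\gamma\bigr).
\]
This holds pointwise, because each entry indexed by $z,x\in S$ uses only $z_z$, $z_x$, $\rho$ and $\gamma$. The Cholesky factor $L_\rho$ depends on $\rho$ only, not on $M$ or $S$, which matters here.

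\emph{Step 3: match the input laws.} Under $\pi_{\mathcal M}$ the subfamily $(z_x)_{x\in S}$ is still i.i.d.\ $\mathcal N(0,I_d)$ and independent of $(\rho,\gamma)$. Independence of the full family implies independence of any subfamily. So the inputs to $G_S$ have exactly the distribution used to define $\pi_S$. By Step 2, $D_U[S,S]$ is a deterministic function with the same law as the defining representation of $\pi_S$, and therefore $D_U[S,S]\sim\pi_S$.

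The main obstacle is not technical but definitional. One must make sure $\pi_S$ is defined through the same map, with the same $\tau$ and the same priors on $(\rho,\gamma)$, and with $L_\rho$ independent of the index set. Otherwise Step 2 fails. I would state that convention explicitly before giving the argument. I would also note that the equality is in joint law with $(\rho,\gamma)$ marginalised, and in fact it holds conditionally on $(\rho,\gamma)$ as well.
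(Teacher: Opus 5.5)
Your proposal is correct and takes the same route as the paper. The paper's proof is exactly your Steps 2--3: it uses the pointwise identity $D_U[S,S]=D_{U[S,:]}$, and observes that the retained rows have the same law as if the prior had been built on $S$ from the start. Your version is slightly more careful. The paper asserts that the rows $u_x$ are independent, which holds only conditionally on the shared $\rho$ (and $\gamma$ is also shared across entries). Working instead with the i.i.d.\ $z_x$, independent of $(\rho,\gamma)$, makes that step exact.
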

%\fi

\noindent\textbf{Inference methods.}
\label{subsec:inference_methods}
In our data \(\cD=\{y^{(n)}\}_{n=1}^N\), \(y^{(n)}\) are lists ordering the elements of choice sets $S^{(n)}$ which are given and fixed. We compare
four inference routes.  \hardmcmc{} targets the non-differentiable reference posterior
\[\textstyle
p(Z,\rho,\beta|\cD)\propto p(Z,\rho,\beta)\,\prod_{n=1}^Np(y^{(n)}|h_U[S^{(n)}],\beta)
\] 
and uses the likelihood in \eqref{eq:frontier_softmax_full}. The continuous methods target the relaxed
posterior
%\[\textstyle
$p(Z,\rho,\beta,\gamma|\cD)$ with prior $p(Z,\rho,\beta,\gamma)$ and list-likelihood $\widetilde p(y^{(n)}|U,\beta,\gamma),\ n=1,\dots,N$ given in \eqref{eq:relaxed_trace_likelihood}. \relaxedmcmc{} uses NUTS \citep{hoffman2014nuts}, implemented in
Stan \citep{carpenter2017stan}; \fullrankvi{} follows automatic differentiation
variational inference \citep{kucukelbir2017advi}; and \flowvi{} uses a
normalizing-flow variational family \citep{rezende2015flows}. 
% Let \(\Theta=(Z,\rho,\beta,\gamma)\), and let \(w\) be the unconstrained parameter
% vector. Table~\ref{tab:inference_methods} summarizes the posterior target and
% approximation family of each method. For the continuous methods, the unconstrained target is
% \[
% \bar p(w\mid \mathcal D)\propto p(\mathcal D,T(w))|\det J_T(w)|.
% \]
Full derivations are given in Appendix~\ref{app:relaxed_posterior_target}.

\begin{table}[h]
\centering
\small
\caption{Inference routes used in the paper.}
\label{tab:inference_methods}
\begin{tabular}{lcccc}
\toprule
Method & \hardmcmc{} & \relaxedmcmc{} & \fullrankvi{} & \flowvi{} \\
\midrule
Target & hard posterior & relaxed posterior & relaxed posterior & relaxed posterior \\
 Approx.& discrete-state MCMC & gradient MCMC & full-rank VI & normalizing-flow VI \\
\bottomrule
\end{tabular}
\end{table}

\textbf{Posterior decoding.}
\label{subsec:posterior_decoding}
For closure-level evaluation, we decode each posterior draw:
\[
H^{(s)}_{ij}
=
\mathbf 1\!\left[u^{(s)}_{i,\ell}>u^{(s)}_{j,\ell}\ \forall \ell=1,\ldots,d\right],
\quad
H^{(s)}_{ii}=0,
\quad
\widehat P_{ij}=S^{-1}\sum_{s=1}^S H^{(s)}_{ij}.
\]
We estimate \(\widehat h\) by thresholding \(\widehat P\) at \(\zeta=1/2\) (\(\zeta=1/3\) for \(n=100\)), breaking threshold-induced cycles, and taking the
transitive closure.

\noindent\textbf{Scalability.}
A poset on \(m\) items admits up to \(m!\) linear extensions, and counting them is \(\#\)P-complete \citep{brightwell1991linearextensions}. \hardmcmc{} avoids this because it is based on the Frontier-Softmax likelihood in Section~\ref{subsec:frontier_softmax_likelihood}, but exploring the space of partial orders $\mathcal{H}_\mathcal{M}$ still slows dramatically with increasing trace length $M=|\mathcal{M}|$. By contrast, after precomputing the soft-precedence matrix, one relaxed likelihood evaluation costs \(O(M^2 d + \sum_n T_n^2)\), so for fixed \(d\) the dominant cost is quadratic in trace length. Full derivations are in Appendix~\ref{app:scaling_analysis}.

\section{Experiments}
\label{sec:experiments}

We ask whether the differentiable relaxation preserves the hard Bayesian partial-order posterior
while making inference practical beyond the regime of discrete posterior exploration. On synthetic
data, where long-run \hardmcmc{} is feasible, we measure posterior fidelity using Mean Absolute
Error (MAE) between pairwise precedence posteriors and compare runtime; at larger scales, we
evaluate closure recovery and predictive fit. We study two real data settings: the Bishops
witness-list corpus \citep{jiang2023bayesian}, using WAIC, and the Cloud agent-trace benchmark
\citep{li2026delinearizingagenttracesbayesian}, using closure recovery and next-action prediction.

Our baselines isolate distinct alternatives. \hardmcmc{} serves as the discrete posterior
reference, testing fidelity to the original frontier-constrained Bayesian target. Majority
(Appendix~\ref{alg:majority_baseline_instruct}) tests whether simple pairwise precedence counts
suffice without posterior inference or closure constraints. SoftDAG-Frontier
(\cite{zheng2018dags,lorch2021dibs}, Appendix~\ref{app:softdag}) uses the same frontier-softmax likelihood but replaces the
product-order representation with a generic differentiable DAG, inspired by continuous DAG
learning and testing whether product-order
geometry is an essential inductive bias rather than a convenience. When ground truth is available,
we report closure precision, recall, and F1; predictive fit is measured by held-out NLL and WAIC
(Appendix~\ref{app:evaluation_metrics}).

\subsection{Synthetic Experiments}
\label{subsec:synthetic_experiments}
Synthetic experiments test whether the relaxation preserves the hard partial-order posterior while improving scalability. We generate datasets with \(n\in\{5,10,20,30,50,100\}\), \(\rho\in\{0.5,0.9\}\), and three seeds per configuration. Training traces are selected to maximize incomparable-pair coverage under a budget of \(n\) to \(2n\) lists; test traces are drawn from the same ground-truth partial order. Full generation details and results are in Appendix~\ref{app:synthetic_full_grid}.

Figure~\ref{fig:synthetic_scaling} summarizes posterior fidelity, runtime, and closure recovery.
Where the \hardmcmc{} reference is feasible, the relaxed methods achieve substantially lower
posterior MAE than Majority: they approximate the hard Bayesian posterior rather
than merely exploiting pairwise frequencies. As \(n\) grows, \hardmcmc{} becomes rapidly more
expensive, while continuous inference maintains strong closure recovery at much lower runtime.
\relaxedmcmc{} is closest to the hard reference, \fullrankvi{} gives a strong
accuracy--speed trade-off up to \(n=50\), and \flowvi{} scales to \(n=100\) (Figure~\ref{fig:flow_n100_closure}). The same figure also includes SoftDAG-Frontier, a generic differentiable
DAG with the frontier-softmax likelihood; its weaker closure recovery suggests that the
product-order geometry is an important inductive bias, not just an implementation choice; see details in Appendix \ref{app:softdag}.

Appendix diagnostics confirm two sensitivities: IP-Cov shows that missing reversals between
incomparable pairs can induce false precedences, while the \(\tau\) ablation shows that
\(\tau\in\{0.1,0.3\}\) tracks the hard reference and \(\tau=1.0\) over-smooths
(Appendices~\ref{sec:ipcov-ablation} and~\ref{sec:tau-ablation}).

\begin{figure}[h]
    \centering
    \includegraphics[width=\textwidth]{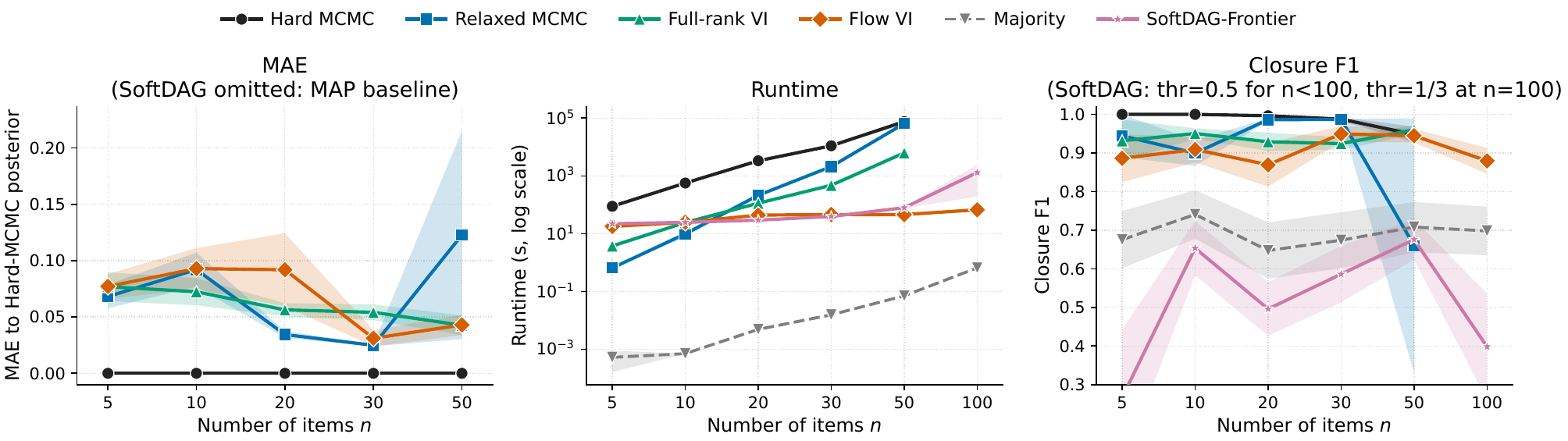}
\caption{Synthetic scaling at relaxation temperature \(\tau=0.3\) with full IP-Cov. Panels show
posterior MAE to \hardmcmc{}, runtime on a log scale, and closure F1 as \(n\) grows.
SoftDAG-Frontier is a generic differentiable-DAG baseline using the same frontier-softmax
likelihood. Points average completed seeds over \(\rho\in\{0.5,0.9\}\); bands show
\(\pm\) one SEM. The wide \relaxedmcmc{} band at \(n=50\) is driven by one high-variance seed. SoftDAG-Frontier closure F1 uses \(\theta_{\mathrm{DAG}}=0.5\) for \(n<100\)
and \(\theta_{\mathrm{DAG}}=1/3\) for \(n=100\).}
    \label{fig:synthetic_scaling}
\end{figure}
\subsection{Bishops witness-list corpus}
\label{app:bishop}

We evaluate on a set of data drawn from the Royal-Acta corpus published by the 
\emph{The Charters of William II and Henry I} project
\citep{sharpe2014charters,nicholls2025royalacta}. Our data contains \(68\)
incomplete ordered witness lists over \(n=45\) actors; list lengths and choice
sets vary, so observations inform overlapping suborders of the underlying partial order. With no ground-truth partial order, we
report list-level WAIC \citep{watanabe2010,gelman2014waic}, runtime, and
threshold-free posterior comparisons. As external total-order baselines, we fit
PLMIX models with \(G\in\{1,2,3\}\) mixture components
\citep{luce1959,mollica2017bayesian}, with WAIC computed on PLMIX likelihood. These baselines test whether finite mixtures of total orders can close the predictive gap (Appendix~\ref{app:pl_base}).

The relaxed methods use \(d=3\) and \(\tau=0.10\); all partial-order methods fix \(\beta=0\), corresponding to uniform frontier selection.
\relaxedmcmc{} gives the best WAIC (\(956.5\)), followed by \fullrankvi{}
(\(999.4\)), \hardmcmc{} (\(1010.4\)), and \flowvi{} (\(1034.6\)).
All Plackett--Luce baselines are substantially worse under the same list-level
WAIC criterion: \(G=1\) gives \(1588.6\), \(G=2\) gives \(1820.5\), and
\(G=3\) gives \(1733.5\). In runtime, \fullrankvi{} is the fastest
partial-order method at 5.5 minutes, followed by \flowvi{} at around 9 minutes,
\relaxedmcmc{} at 27 minutes, and \hardmcmc{} at 70 minutes. Thus,
\relaxedmcmc{} gives the best predictive fit, while \fullrankvi{} offers the
best practical trade-off, improving over \hardmcmc{} in both WAIC and runtime
for short lists over a large item universe.

Because thresholded graphs depend on the cutoff, Figure~\ref{fig:bishop_pairwise_agreement}
compares posterior pairwise precedence probabilities directly against \hardmcmc{}.
\relaxedmcmc{} is closest to the hard posterior (MAE \(=0.049\)), followed by
\fullrankvi{} (MAE \(=0.068\)); \flowvi{} is more diffuse (MAE \(=0.123\)).
Disagreement is concentrated on intermediate-probability pairs. Additional
heatmaps, Hasse diagrams, and scalar summaries are in
Appendix~\ref{app:bishops_details}. The recovered Hasse diagrams
(Figure~\ref{fig:bishop_posterior_hasse}) broadly align with recorded titles,
which are not used in fitting, supporting the partial order as an interpretable
social-precedence summary rather than a forced total ranking.
\begin{figure}[h]
    \centering
    \includegraphics[width=0.98\textwidth]{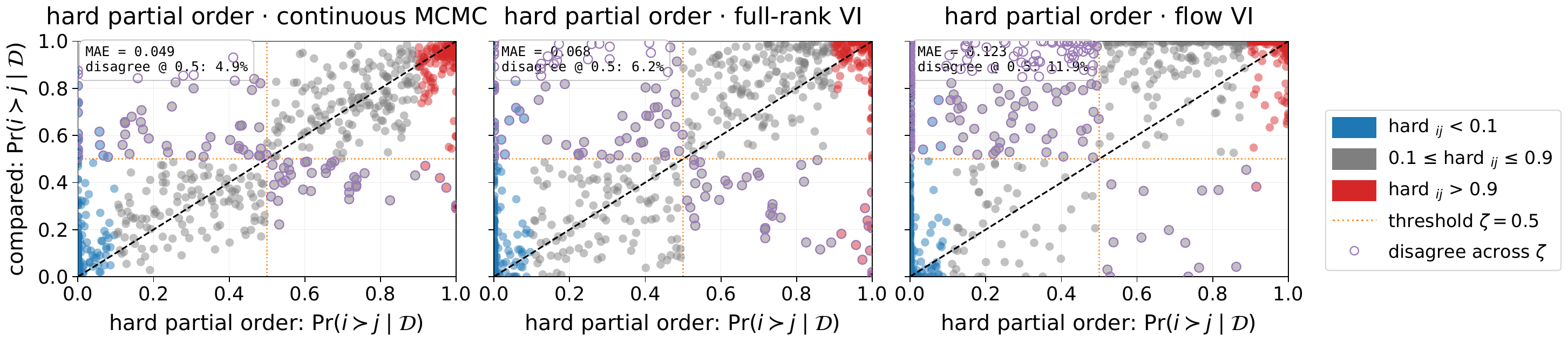}
\caption{%
\textbf{Pairwise closure agreement on the Bishops corpus.}
For each ordered pair $(i,j)$, panels compare $\Pr(i\succ j\mid\mathcal D)$ under hard RJ-MCMC
($x$-axis) with relaxed MCMC, full-rank VI, and flow VI ($y$-axis, left to right).
The dashed diagonal marks agreement; rings indicate pairs crossing the threshold $\zeta$,
and insets show mean absolute deviation.}
\label{fig:bishop_pairwise_agreement}
\end{figure}
\subsection{Cloud agent-trace benchmark}
\label{subsec:aliyun_experiment}
The Cloud agent-trace benchmark, adapted from
\citet{li2026delinearizingagenttracesbayesian}, contains 54 successful workflow
traces generated by multiple LLM agents across six infrastructure-as-code scenarios.
The 12 evaluation cases correspond to six scenarios and two train/test seeds, with
workflow sizes ranging from 5 to 12 actions. We use this benchmark to test whether
the relaxation preserves the hard partial-order semantics of the Li et al. hard-MCMC
baseline while improving efficiency, treating each trace as a linearization of an
underlying workflow (Figure~\ref{fig:aliyun-gt-covers}).

\begin{table}[h]
\centering
\small
\caption{Cloud agent-trace benchmark: mean structural, predictive, and runtime metrics over 12 cases. Trace NLL, Step NLL are posterior-predictive quantities over retained draws. SoftDAG-Frontier is a MAP diagnostic baseline with the same frontier-softmax.}
\label{tab:aliyun_metrics_updated}
\begin{tabular}{lrrrrrrrr}
\toprule
Method & Runs & Time (s) & F1 & Prec. & Rec. & Trace NLL & Step NLL \\
\midrule
\hardmcmc{}    & 12 & 1455 & 0.99 & 1.00 & 0.97 & 5.61 & 0.70 \\
\relaxedmcmc{} & 12 &  291 & 0.98 & 0.97 & 1.00 & 3.70 & 0.49  \\
\fullrankvi{}  & 12 &   54 & 0.96 & 0.93 & 0.99 & 3.85 & 0.51  \\
\flowvi{}      & 12 &   99 & 0.88 & 0.81 & 0.99 & 3.87 & 0.54 \\
\midrule
\textsc{SoftDAG-Frontier} & 12 & 40 & 0.56 & 0.62 & 0.70 & 5.23 & 0.72  \\
\bottomrule
\end{tabular}
\end{table}

Table~\ref{tab:aliyun_metrics_updated} shows a trade-off among structural recovery,
predictive performance, and runtime. \hardmcmc{} gives the strongest closure
recovery but is by far the slowest. \relaxedmcmc{} nearly matches its closure
structure and gives the best posterior-predictive Trace NLL and Step NLL.
\fullrankvi{} is slightly weaker structurally but is the fastest posterior
method, giving the strongest practical speed--accuracy trade-off. \flowvi{}
improves Step NLL relative to \hardmcmc{}, but has weaker closure recovery.
Appendix~\ref{app:aliyun_prefix_anomaly} provides additional diagnostics,
including Hasse diagrams (Figure~\ref{fig:aliyun_qualitative_example}),
convergence traces (Figure~\ref{fig:aliyun_convergence_traces}),
per-scenario next-action results (Table~\ref{tab:aliyun-prefix-nll-mae}), and
scalar posterior variation (Figure~\ref{fig:aliyun_scalar_posteriors_grid}).
\section{Related work}
\label{subsec:related_work}

Our work connects repeated-selection ranking models, Bayesian inference for latent partial orders, and differentiable relaxations of discrete order structures.

\noindent\textbf{Ranking models and repeated selection.}
Many classical ranking models treat an observed order as a sequence of choices from a shrinking set, including the Plackett model for permutations~\citep{luce1959,plackett1975} and some Mallows models \citep{mallows1957non}. This repeated-selection view is developed more generally in modern ranking models~\citep{ragain2019choosing} and has been extended to top-\(k\) partial orders with stopping decisions~\citep{awadelkarim2024topk}. Our likelihood follows the same sequential-choice principle, but the feasible set at each step is the frontier of a partial order rather than the full remaining set. Fast approximate likelihoods for models for rank data based on discrete latent parameters have been given \citep{liu2022pseudomallows} but ours is the first differentiable relaxation of a ranking model with a discrete parameter or any model for Bayesian inference of a partial order.

\noindent\textbf{Bayesian partial-order inference.}
Bayesian partial-order models are natural when rank-data can be viewed as random linear extensions and uncertainty over precedence relations matters~\citep{nicholls2025royalacta}. The main challenge is combinatorial: partial orders form a discrete space, and counting linear extensions is \(\#\mathsf{P}\)-complete~\citep{brightwell1991linearextensions}. Prior Bayesian work has imposed tractable structure~\citep{jiang2023bayesian} %, extended the models to ties and Mallows noise~\citep{jiang2024nonparametric}, 
or sampled partial orders as an auxiliary search space for Bayesian-network structure discovery \citep{niinimaki2016structure}. Related combinatorial work finds posets that contain a given set of linear orders \citep{fernandez2013mining} (without uncertainty) and samples linear extensions of a fixed poset \citep{talvitie2017mixing} (without poset estimation). Closest to our setting, \citet{li2026delinearizingagenttracesbayesian} infer latent prerequisite structure from agent traces using a hard frontier-constrained likelihood and long-run MCMC. We retain the same frontier semantics but relax hard precedence and frontier indicators to allow scalable gradient-based MCMC and variational inference.

\noindent\textbf{Geometric order representations and differentiable graph learning.}
Geometric order representations encode transitive asymmetric structure using
ordered point embeddings, density embeddings, or box/lattice measures
\citep{vendrov2016order,athiwaratkun2018hierarchical,vilnis2018probabilistic}. These works motivate our product-order geometry but do not define generative models for rank data and traces. Our relaxation is also related to smooth surrogates for discrete choices~\citep{maddison2017concrete,jang2017categorical} and to differentiable graph learning~\citep{zheng2018dags,lorch2021dibs,annadani2023bayesdag}. Unlike these methods, we observe linearized traces and infer the latent partial order itself, and in particular, SoftDAG-Frontier, which we developed as a diagnostic baseline inspired by \cite{ban2024differentiable}, was not set out by its authors as a model for rank-data or intended to express the kind of transitive relations we get for free in a product order embedding.

\section{Discussion and Limitations}
\label{sec:discussion}
We introduced a differentiable Bayesian relaxation for latent partial-order inference from noisy linear extensions. The approximate posterior preserves the frontier-constrained semantics of hard partial-order inference while enabling gradient-based MCMC and variational inference, and is actually a good model in its own right. Across synthetic, ranking, and cloud-trace data, the relaxed methods closely match \hardmcmc{} where feasible and give better runtime--accuracy trade-offs at larger scale, with \relaxedmcmc{} closest to the hard posterior and \fullrankvi{} often giving the best speed--accuracy compromise.

The main limitations are scope and inductive bias. Hard-posterior fidelity can only be checked on small and medium instances; at larger \(n\), we rely on structural recovery, predictive fit, and finite-temperature theory. Scaling stops at $M=100$. Memory demands make $M=200$ infeasible, though careful coding may deal with that. We treat \(d\) and \(\tau\) as fixed hyperparameters and infer \(\gamma\) under the posterior, reflecting the bias--smoothness trade-off of the relaxation. The product-order representation needs \(d\ge \lfloor M/2\rfloor\) to express all high-dimension posets \citep{Hiraguchi51}, while \(\beta\), \(\gamma\), and the decoding threshold $\zeta$ remain modeling choices. We therefore report posterior probabilities and sensitivity diagnostics rather than relying on a single decoded graph.

\bibliographystyle{plainnat}
\bibliography{dpo_nips.bib}

@inproceedings{ragain2019choosing,
  title     = {Learning Rich Rankings},
  author    = {Seshadri, Arjun and Ragain, Stephen and Ugander, Johan},
  booktitle = {Advances in Neural Information Processing Systems},
  volume    = {33},
  pages     = {9435--9446},
  year      = {2020},
  publisher = {Curran Associates, Inc.}
}

@inproceedings{awadelkarim2024topk,
  title={Statistical Models of Top-k Partial Orders},
  author={Awadelkarim, Amel and Ugander, Johan},
  booktitle={Proceedings of the 30th ACM SIGKDD Conference on Knowledge Discovery and Data Mining},
  pages={39--48},
  year={2024}
}

@article{nicholls2025royalacta,
  title={Bayesian Inference for Partial Orders from Random Linear Extensions: Power Relations from 12th Century Royal Acta},
  author={Nicholls, Geoff K. and Lee, Jeong Eun and Karn, Nicholas and Johnson, David and Huang, Rukuang and Muir-Watt, Alexis},
  journal={The Annals of Applied Statistics},
  volume={19},
  number={2},
  pages={1663--1690},
  year={2025},
  doi={10.1214/24-AOAS2002}
}

@article{brightwell1991linearextensions,
  title   = {Counting linear extensions},
  author  = {Brightwell, Graham and Winkler, Peter},
  journal = {Order},
  volume  = {8},
  number  = {3},
  pages   = {225--242},
  year    = {1991},
  doi     = {10.1007/BF00383444}
}

@inproceedings{ban2024differentiable,
  title     = {Differentiable Structure Learning with Partial Orders},
  author    = {Ban, Taiyu and Chen, Lyuzhou and Wang, Xiangyu and Wang, Xin and Lyu, Derui and Chen, Huanhuan},
  booktitle = {Advances in Neural Information Processing Systems},
  volume    = {37},
  year      = {2024},
  doi       = {10.52202/079017-3728}
}

@misc{li2026delinearizingagenttracesbayesian,
      title={De-Linearizing Agent Traces: Bayesian Inference of Latent Partial Orders for Efficient Execution}, 
      author={Dongqing Li and Zheqiao Cheng and Geoff K. Nicholls and Quyu Kong},
      year={2026},
      eprint={2602.02806},
      archivePrefix={arXiv},
      primaryClass={stat.AP},
      url={https://arxiv.org/abs/2602.02806}, 
}

@phdthesis{seshadri21,
    author = {Seshadri, A.},
    title = {Learning preferences from choices and rankings},
    school = {Stanford University Department of Electrical Engineering},
    year = {2021},
    url={https://purl.stanford.edu/zv003vd9732}
}

@article{nesterov2005smooth,
  title={Smooth Minimization of Non-smooth Functions},
  author={Nesterov, Yurii},
  journal={Mathematical Programming},
  volume={103},
  number={1},
  pages={127--152},
  year={2005}
}

@inproceedings{vendrov2016order,
  author    = {Vendrov, Ivan and Kiros, Ryan and Fidler, Sanja and Urtasun, Raquel},
  title     = {Order-Embeddings of Images and Language},
  booktitle = {International Conference on Learning Representations},
  year      = {2016},
  note      = {ICLR 2016 Oral},
  url       = {https://arxiv.org/abs/1511.06361}
}

@inproceedings{maddison2017concrete,
  title={The Concrete Distribution: A Continuous Relaxation of Discrete Random Variables},
  author={Maddison, Chris J and Mnih, Andriy and Teh, Yee Whye},
  booktitle={International Conference on Learning Representations (ICLR)},
  year={2017}
}

@inproceedings{jang2017categorical,
  title     = {Categorical Reparameterization with {Gumbel-Softmax}},
  author    = {Jang, Eric and Gu, Shixiang and Poole, Ben},
  booktitle = {International Conference on Learning Representations},
  year      = {2017},
  url       = {https://openreview.net/forum?id=rkE3y85ee}
}

@inproceedings{jiang2023bayesian,
  author    = {Chuxuan Jiang and Geoff K. Nicholls and Jeong Eun Lee},
  title     = {Bayesian Inference for Vertex-Series-Parallel Partial Orders},
  booktitle = {Proceedings of the Thirty-Ninth Conference on Uncertainty in Artificial Intelligence},
  series    = {Proceedings of Machine Learning Research},
  volume    = {216},
  pages     = {995--1004},
  year      = {2023}
}

@inproceedings{athiwaratkun2018hierarchical,
  author    = {Ben Athiwaratkun and Andrew Gordon Wilson},
  title     = {Hierarchical Density Order Embeddings},
  booktitle = {International Conference on Learning Representations (ICLR)},
  year      = {2018}
}

@misc{sharpe2014charters,
  author = {Sharpe, Richard and Carpenter, David X. and Doherty, Hugh and Hagger, Mark and Karn, Nicholas},
  title  = {The Charters of {William II} and {Henry I}},
  year   = {2014},
  note   = {Online database; accessed 1 October 2022},
  url    = {https://actswilliam2henry1.wordpress.com/the-charters/}
}

@inproceedings{vilnis2018probabilistic,
  author    = {Luke Vilnis and Xiang Li and Shikhar Murty and Andrew McCallum},
  title     = {Probabilistic Embedding of Knowledge Graphs with Box Lattice Measures},
  booktitle = {Proceedings of the 56th Annual Meeting of the Association for Computational Linguistics (Volume 1: Long Papers)},
  pages     = {263--272},
  year      = {2018},
  address   = {Melbourne, Australia},
  publisher = {Association for Computational Linguistics},
  doi       = {10.18653/v1/P18-1025}
}

@article{watanabe2010,
  author  = {Watanabe, Sumio},
  title   = {Asymptotic Equivalence of {B}ayes Cross Validation and Widely
             Applicable Information Criterion in Singular Learning Theory},
  journal = {Journal of Machine Learning Research},
  year    = {2010},
  volume  = {11},
  number  = {116},
  pages   = {3571--3594},
  url     = {https://www.jmlr.org/papers/v11/watanabe10a.html}
}

@article{gelman2014waic,
  author  = {Gelman, Andrew and Hwang, Jessica and Vehtari, Aki},
  title   = {Understanding Predictive Information Criteria for {B}ayesian Models},
  journal = {Statistics and Computing},
  year    = {2014},
  volume  = {24},
  number  = {6},
  pages   = {997--1016},
  doi     = {10.1007/s11222-013-9416-2}
}

@book{luce1959,
  author    = {Luce, R. Duncan},
  title     = {Individual Choice Behavior: A Theoretical Analysis},
  publisher = {Wiley},
  address   = {New York},
  year      = {1959}
}

@article{plackett1975,
  author  = {Robin L. Plackett},
  title   = {The Analysis of Permutations},
  journal = {Journal of the Royal Statistical Society. Series C (Applied Statistics)},
  volume  = {24},
  number  = {2},
  pages   = {193--202},
  year    = {1975},
  doi     = {10.2307/2346567}
}

@article{mollica2017bayesian,
  author  = {Mollica, Cristina and Tardella, Luca},
  title   = {{Bayesian} {Plackett--Luce} Mixture Models for Partially Ranked Data},
  journal = {Psychometrika},
  volume  = {82},
  number  = {2},
  pages   = {442--458},
  year    = {2017},
  doi     = {10.1007/s11336-016-9530-0}
}

@article{leemans2023partial,
  title={Partial-order-based process mining: A survey and outlook},
  author={Leemans, Sander J. J. and van Zelst, Sebastiaan J. and Lu, Xixi},
  journal={Knowledge and Information Systems},
  volume={65},
  pages={1--29},
  year={2023},
  doi={10.1007/s10115-022-01777-3}
}

@article{beerenwinkel2007conjunctive,
  title   = {Conjunctive {Bayesian} networks},
  author  = {Beerenwinkel, Niko and Eriksson, Nicholas and Sturmfels, Bernd},
  journal = {Bernoulli},
  volume  = {13},
  number  = {4},
  pages   = {893--909},
  year    = {2007},
  doi     = {10.3150/07-BEJ6133}
}

@inproceedings{lorch2021dibs,
  title     = {{DiBS}: Differentiable Bayesian Structure Learning},
  author    = {Lorch, Lars and Rothfuss, Jonas and Sch{\"o}lkopf, Bernhard and Krause, Andreas},
  booktitle = {Advances in Neural Information Processing Systems},
  volume    = {34},
  pages     = {24111--24123},
  year      = {2021}
}

@article{niinimaki2016structure,
  title   = {Structure Discovery in Bayesian Networks by Sampling Partial Orders},
  author  = {Niinim{\"a}ki, Teppo and Parviainen, Pekka and Koivisto, Mikko},
  journal = {Journal of Machine Learning Research},
  volume  = {17},
  number  = {57},
  pages   = {1--47},
  year    = {2016}
}

@inproceedings{mannila00,
author = {Mannila, Heikki and Meek, Christopher},
title = {Global Partial Orders from Sequential Data},
year = {2000},
isbn = {1581132336},
publisher = {Association for Computing Machinery},
address = {New York, NY, USA},
url = {https://doi.org/10.1145/347090.347122},
doi = {10.1145/347090.347122},
booktitle = {Proceedings of the Sixth ACM SIGKDD International Conference on Knowledge Discovery and Data Mining},
pages = {161–168},
numpages = {8},
location = {Boston, Massachusetts, USA},
series = {KDD '00}
}

@inproceedings{mannila08,
  author    = {Mannila, Heikki},
  title     = {Finding Total and Partial Orders from Data for Seriation},
  booktitle = {Discovery Science, 11th International Conference, DS 2008, Budapest, Hungary, October 13--16, 2008, Proceedings},
  editor    = {Boulicaut, Jean-Fran{\c{c}}ois and Berthold, Michael R. and Horv{\'a}th, Tam{\'a}s},
  series    = {Lecture Notes in Computer Science},
  volume    = {5255},
  pages     = {16--25},
  publisher = {Springer},
  address   = {Berlin, Heidelberg},
  year      = {2008},
  doi       = {10.1007/978-3-540-88411-8_4}
}

@article{nicholls11,
  title={Partial Order Models for Episcopal Social Status in 12th Century {E}ngland},
  author={Nicholls, Geoff K and Muir Watt, Alexis },
  journal={IWSM 2011},
  pages={437},
  year={2011}
}

@article{winkler1985random,
  title={Random orders},
  author={Winkler, Peter},
  journal={Order},
  volume={1},
  number={4},
  pages={317--331},
  year={1985},
  publisher={Springer}
}

@inproceedings{zheng2018dags,
  title     = {{DAGs with NO TEARS}: Continuous Optimization for Structure Learning},
  author    = {Zheng, Xun and Aragam, Bryon and Ravikumar, Pradeep K. and Xing, Eric P.},
  booktitle = {Advances in Neural Information Processing Systems},
  volume    = {31},
  year      = {2018}
}

@inproceedings{annadani2023bayesdag,
  title     = {{BayesDAG}: Gradient-Based Posterior Inference for Causal Discovery},
  author    = {Annadani, Yashas and Pawlowski, Nick and Jennings, Joel and Bauer, Stefan and Zhang, Cheng and Gong, Wenbo},
  booktitle = {Advances in Neural Information Processing Systems},
  volume    = {36},
  year      = {2023}
}

@article{fernandez2013mining,
  title   = {Mining Posets from Linear Orders},
  author  = {Fernandez, Proceso L. and Heath, Lenwood S. and Ramakrishnan, Naren and Tan, Michael and Vergara, John Paul C.},
  journal = {Discrete Mathematics, Algorithms and Applications},
  volume  = {5},
  number  = {4},
  pages   = {1350030},
  year    = {2013}
}

@inproceedings{talvitie2017mixing,
  title     = {The Mixing of Markov Chains on Linear Extensions in Practice},
  author    = {Talvitie, Topi and Niinim{\"a}ki, Teppo and Koivisto, Mikko},
  booktitle = {Proceedings of the Twenty-Sixth International Joint Conference on Artificial Intelligence},
  pages     = {524--530},
  year      = {2017},
  publisher = {IJCAI},
  doi       = {10.24963/ijcai.2017/74}
}

@article{Hiraguchi51,
  author  = {Hiraguchi, Toshio},
  title   = {On the Dimension of Partially Ordered Sets},
  journal = {The Science Reports of the Kanazawa University},
  volume  = {1},
  number  = {2},
  pages   = {77--94},
  year    = {1951},
  mrnumber = {0070681},
  zbl      = {0200.00013}
}

@article{bogart1973maximal,
   author = {Kenneth P. Bogart},
   doi = {10.1016/0012-365X(73)90024-1},
   issn = {0012-365X},
   issue = {1},
   journal = {Discrete Mathematics},
   month = {5},
   pages = {21-31},
   publisher = {North-Holland},
   title = {Maximal dimensional partially ordered sets {I}. {H}iraguchi's theorem},
   volume = {5},
   year = {1973},
}

@article{yannakakis1982complexity,
  title={The complexity of the partial order dimension problem},
  author={Yannakakis, Mihalis},
  journal={SIAM Journal on Algebraic Discrete Methods},
  volume={3},
  number={3},
  pages={351--358},
  year={1982},
  publisher={SIAM}
}

@article{hoffman2014nuts,
  title = {The No-U-Turn Sampler: Adaptively Setting Path Lengths in Hamiltonian Monte Carlo},
  author = {Hoffman, Matthew D. and Gelman, Andrew},
  journal = {Journal of Machine Learning Research},
  volume = {15},
  number = {47},
  pages = {1593--1623},
  year = {2014},
  url = {https://jmlr.org/papers/v15/hoffman14a.html}
}

@article{carpenter2017stan,
  title = {Stan: A Probabilistic Programming Language},
  author = {Carpenter, Bob and Gelman, Andrew and Hoffman, Matthew D. and Lee, Daniel
            and Goodrich, Ben and Betancourt, Michael and Brubaker, Marcus
            and Guo, Jiqiang and Li, Peter and Riddell, Allen},
  journal = {Journal of Statistical Software},
  volume = {76},
  number = {1},
  pages = {1--32},
  year = {2017},
  doi = {10.18637/jss.v076.i01},
  url = {https://www.jstatsoft.org/article/view/v076i01}
}

@article{kucukelbir2017advi,
  title = {Automatic Differentiation Variational Inference},
  author = {Kucukelbir, Alp and Tran, Dustin and Ranganath, Rajesh
            and Gelman, Andrew and Blei, David M.},
  journal = {Journal of Machine Learning Research},
  volume = {18},
  number = {14},
  pages = {1--45},
  year = {2017},
  url = {https://jmlr.org/papers/v18/16-107.html}
}

@inproceedings{rezende2015flows,
  title = {Variational Inference with Normalizing Flows},
  author = {Rezende, Danilo Jimenez and Mohamed, Shakir},
  booktitle = {Proceedings of the 32nd International Conference on Machine Learning},
  series = {Proceedings of Machine Learning Research},
  volume = {37},
  pages = {1530--1538},
  year = {2015},
  publisher = {PMLR},
  url = {https://proceedings.mlr.press/v37/rezende15.html}
}

@article{mallows1957non,
  title={Non-null ranking models. I},
  author={Mallows, Colin L},
  journal={Biometrika},
  volume={44},
  number={1/2},
  pages={114--130},
  year={1957},
  publisher={JSTOR}
}

@misc{liu2022pseudomallows,
      title={Pseudo-Mallows for Efficient Probabilistic Preference Learning}, 
      author={Qinghua Liu and Valeria Vitelli and Carlo Mannino and Arnoldo Frigessi and Ida Scheel},
      year={2022},
      eprint={2205.13911},
      archivePrefix={arXiv},
      primaryClass={stat.ME},
      url={https://arxiv.org/abs/2205.13911}, 
}

\appendix

\section{Partial Order Basics}
\subsection{Notation reference}
\label{app:notation}

\renewcommand{\arraystretch}{1.18}
\small

\begin{longtable}{p{0.22\linewidth}p{0.70\linewidth}}
\toprule
\multicolumn{2}{l}{\textbf{Sets, traces, and hard partial orders}}\\
\midrule
\(M=[M]\) & Universe of items. \\
\(B_M\) & Set of all nonempty subsets of \(M\). \\
\(H_S\) & Class of all partial orders on items in \(S\). \\
\(S\) & Local item set. \\
\(m=|S|\) & Number of items in the local item set. \\
\(y=(y_1,\dots,y_T)\) & Observed local trace or ranking. In complete rankings, \(T=m\). \\
\(y_{<t}\) & Prefix \((y_1,\dots,y_{t-1})\) observed before step \(t\). \\
\(\yt\) & Remainder set \((y_t,\dots,y_{T})\) at step \(t\), i.e., items not yet selected. \\
\(h=(S,\succ_h)\) & Latent strict partial order on \(S\). \\
\(z\succ_h x\) & Hard precedence relation: item \(z\) must occur before item \(x\). \\
\(F(h;\yt)\) & Hard frontier at step \(t\): remaining items with no remaining predecessor. \\
\(F_t(x;h)\) & Hard frontier indicator, \(F_t(x;h):=\mathbf{1}\{x\in F(h;y_{\ge t})\}\). \\\\
\(S_t(x;h)\) & Hard descendant count of item \(x\) in the current remainder set. \\
\(Q_t(x;h)\) & Hard successor utility, \(Q_t(x;h)=\log\!\bigl(1+S_t(x;h)\bigr)\) for \(x\in\mathcal{F}(h;y_{\ge t})\). \\

\midrule
\multicolumn{2}{l}{\textbf{Latent product-order embedding}}\\
\midrule
\(d\) & Latent embedding dimension. \\
\(u_x=(u_{x,1},\dots,u_{x,d})^\top\in\mathbb R^d\) & Latent embedding vector for item \(x\). \\
\(U\in\mathbb R^{m\times d}\) & Matrix of latent embeddings; the \(x\)-th row is \(u_x^\top\). \\
\(z_x\in\mathbb R^d\) & Non-centred latent coordinate associated with item \(x\). \\
\(Z\in\mathbb R^{m\times d}\) & Matrix of non-centred latent coordinates; the \(x\)-th row is \(z_x^\top\). \\
\(\rho\in(0,1)\) & Correlation parameter for the latent Gaussian prior. \\
\(\Sigma_\rho=(1-\rho)I_d+\rho\mathbf 1\mathbf 1^\top\) & Prior covariance for centred embedding coordinates. \\
\(L_\rho\) & Cholesky factor of \(\Sigma_\rho\), so that \(\Sigma_\rho=L_\rho L_\rho^\top\). \\
\(u_x=L_\rho z_x\) & Non-centred parameterization of the latent embedding. Equivalently, \(U=ZL_\rho^\top\). \\
\(\Delta_k(z,x)=u_{z,k}-u_{x,k}\) & Coordinatewise gap between items \(z\) and \(x\). \\
\(m_U(z,x)=\min_k\Delta_k(z,x)\) & Hard pairwise margin; its sign determines hard precedence. \\
\(\succ_U\) & Product-order relation induced by \(U\). \\
\(z\succ_U x\) & Hard latent precedence relation induced by coordinatewise dominance: \(u_{z,k}>u_{x,k}\) for all \(k\). \\
\(h_U=(S,\succ_U)\) & Hard latent partial order induced by the embedding \(U\). \\

\midrule
\multicolumn{2}{l}{\textbf{Differentiable relaxation}}\\
\midrule
\(\smin_\tau(a_1,\dots,a_d)\) & Soft minimum with temperature \(\tau>0\). \\
\(\tau>0\) & Soft-min temperature controlling the sharpness of the smooth minimum. Treated as fixed in the main experiments. \\
\(M_U(z,x)\) & Soft pairwise margin,
\[
M_U(z,x)=\smin_\tau(\Delta_1(z,x),\dots,\Delta_d(z,x)).
\]
It is a smooth approximation to \(m_U(z,x)\). \\
\(\gamma>0\) & Steepness parameter for the soft precedence score. \\
\(D_U(z,x)=\sigma(\gamma M_U(z,x))\) & Soft precedence score; a differentiable surrogate for \(\mathbf 1[z\succ_U x]\). \\
\(\widetilde F_t(x;U)\) & Soft frontier weight; a differentiable surrogate for \(\mathbf 1[x\in F(h_U;\yt)]\). \\
\(\widetilde S_t(x;U)\) & Soft descendant count, obtained by summing \(D_U(x,z)\) over \(z\in \yt\setminus\{x\}\). \\
\(\widetilde Q_t(x;U)\) & Relaxed successor utility,
\[
\widetilde Q_t(x;U)=\log(1+\widetilde S_t(x;U)).
\]
\\
\(\beta>0\) & Inverse temperature in the frontier-softmax selection rule. \\
\(\widetilde p(y_t\mid \yt,U,\beta,\gamma)\) & Relaxed stepwise repeated-selection probability. \\
\(\widetilde p(y\mid U,\beta,\gamma)\) & Relaxed local-trace likelihood. \\

\midrule
\multicolumn{2}{l}{\textbf{Theory quantities}}\\
\midrule
\(\delta_t(U)\) & Local separation margin at step \(t\),
\[
\delta_t(U)=
\min_{\substack{x,z\in \yt\\x\neq z}}
|m_U(z,x)|.
\]
\\
\(\kappa_t(U,\tau,\gamma)\) & Finite-temperature approximation scale,
\[
\kappa_t(U,\tau,\gamma)
=
\tau\log d+
\exp\!\bigl(-\gamma(\delta_t(U)-\tau\log d)\bigr).
\]
\\

\midrule
\multicolumn{2}{l}{\textbf{Bayesian inference}}\\
\midrule
\(\mathcal D\) & Observed dataset of local traces. \\
\(\Theta=(Z,\rho,\beta,\gamma)\) & Full parameter vector under the non-centred relaxed Bayesian model. \\
\(p(\mathcal D,Z,\rho,\beta,\gamma)\) & Joint density of the data and latent variables under the relaxed Bayesian model. \\
\(w=(\operatorname{vec}(Z),\eta_\rho,\eta_\beta,\eta_\gamma)\) & Unconstrained parameter vector used for variational inference. \\
\(T(w)\) & Smooth map from unconstrained coordinates \(w\) to constrained parameters \(\Theta\). \\
\(\bar p(\mathcal D,w)\) & Transformed joint density in unconstrained coordinates, including the Jacobian correction. \\
\(q_\phi(w)\) & Normalizing-flow variational posterior on unconstrained coordinates. \\
\(\phi\) & Variational parameters of the normalizing flow. \\
\(\mathcal L_{\mathrm{NF}}(\phi)\) & Normalizing-flow evidence lower bound. \\
\(q_\lambda(w)\) & Full-rank Gaussian variational posterior on unconstrained coordinates. \\
\(\mathcal L_{\mathrm{FR}}(\lambda)\) & Full-rank Gaussian variational evidence lower bound. \\
\(\mathrm{KL}(q\|p)\) & Kullback--Leibler divergence from \(q\) to \(p\). \\
\(\mathbb E_q[\cdot]\) & Expectation with respect to distribution \(q\). \\

\bottomrule
\caption{Notation for the hard partial-order model, the differentiable relaxation, and Bayesian posterior approximation.}
\label{tab:notation_relaxed_bayes}
\end{longtable}

\paragraph{Conventions.}
The hard quantities \(F_t\), \(S_t\), and \(Q_t\) are defined with respect to the hard latent
partial order \(h\) or \(h_U\). Their relaxed counterparts are marked with tildes. We use
\(\succ\) throughout so that the notation matches the semantics ``must occur before.''
In particular, the frontier set \(F(h; y_{\ge t})\) consists of maximal remaining items.

\subsection{Poset dimension and the role of the embedding dimension}
\label{app:poset_dimension}

The embedding dimension \(d\) in the product-order representation is closely related to the classical order dimension of a finite poset. This connection is standard in the partial-order literature and is discussed in detail in prior work on Bayesian partial-order inference from Royal Acta data~\citep{nicholls2025royalacta}. We therefore do not treat order-dimension estimation as a main contribution of this paper. Instead, we use small fixed values of \(d\) as a structured inductive bias: \(d=4\) for the synthetic generator and \(d=3\) for the main real-data relaxed fits, with exact choices reported in Appendix~\ref{app:repro}.
\begin{figure}[h]
    \centering
    \begin{subfigure}[t]{0.46\textwidth}
        \centering
        \begin{tikzpicture}[x=1.cm,y=0.8cm,>=Latex,font=\small]
            % coordinate guides
            \foreach \x/\lab in {0.8/1,1.8/2,2.8/3,3.8/4}{
                \draw[gray!20] (\x,0) -- (\x,3.5);
                \node[below,gray!70] at (\x,0) {\lab};
            }
            \node[below,gray!70] at (2.3,-0.45) {coordinate \(d\)};

            % comparable pair z > x
            \coordinate (z1) at (0.8,3.0);
            \coordinate (z2) at (1.8,2.7);
            \coordinate (z3) at (2.8,2.3);
            \coordinate (z4) at (3.8,2.0);

            \coordinate (x1) at (0.8,1.4);
            \coordinate (x2) at (1.8,1.6);
            \coordinate (x3) at (2.8,1.2);
            \coordinate (x4) at (3.8,0.9);

            % incomparable pair w
            \coordinate (w1) at (0.8,2.2);
            \coordinate (w2) at (1.8,1.2);
            \coordinate (w3) at (2.8,2.1);
            \coordinate (w4) at (3.8,1.0);

            % paths
            \draw[blue!70!black, line width=1.2pt] (z1)--(z2)--(z3)--(z4);
            \draw[red!75!black, line width=1.2pt]  (x1)--(x2)--(x3)--(x4);
            \draw[black!55, dashed, line width=1.0pt] (w1)--(w2)--(w3)--(w4);

            % points
            \foreach \p in {z1,z2,z3,z4}{\fill[blue!70!black] (\p) circle (1.7pt);}
            \foreach \p in {x1,x2,x3,x4}{\fill[red!75!black] (\p) circle (1.7pt);}
            \foreach \p in {w1,w2,w3,w4}{\fill[black!55] (\p) circle (1.4pt);}

            % labels
            \node[blue!70!black, above left=1pt] at (z1) {\(z\)};
            \node[red!75!black, below left=1pt] at (x1) {\(x\)};
            \node[black!60, above right=1pt] at (w3) {\(w\)};

            % short annotations
            \node[font=\scriptsize, blue!70!black] at (4.35,2.25) {\(z \succ_U x\)};
            \node[font=\scriptsize, black!60] at (4.25,1.45) {incomparable};
        \end{tikzpicture}
        \caption{Embedding view.}
        \label{fig:latent_paths_partial_order_clean}
    \end{subfigure}%
    \hfill
    \begin{subfigure}[t]{0.46\textwidth}
        \centering
        \begin{tikzpicture}[x=1.15cm,y=0.85cm,>=Latex,font=\small]
            % coordinate guides
            \foreach \x/\lab in {0.8/1,1.8/2,2.8/3,3.8/4}{
                \draw[gray!20] (\x,0) -- (\x,3.5);
                \node[below,gray!70] at (\x,0) {\lab};
            }
            \node[below,gray!70] at (2.3,-0.45) {coordinate \(d\)};

            % paths
            \coordinate (z1) at (0.8,3.0);
            \coordinate (z2) at (1.8,2.7);
            \coordinate (z3) at (2.8,2.3);
            \coordinate (z4) at (3.8,2.0);

            \coordinate (x1) at (0.8,1.4);
            \coordinate (x2) at (1.8,1.6);
            \coordinate (x3) at (2.8,1.2);
            \coordinate (x4) at (3.8,0.9);

            \draw[blue!70!black, line width=1.2pt] (z1)--(z2)--(z3)--(z4);
            \draw[red!75!black, line width=1.2pt]  (x1)--(x2)--(x3)--(x4);

            \foreach \p in {z1,z2,z3,z4}{\fill[blue!70!black] (\p) circle (1.7pt);}
            \foreach \p in {x1,x2,x3,x4}{\fill[red!75!black] (\p) circle (1.7pt);}

            \node[blue!70!black, above left=1pt] at (z1) {\(z\)};
            \node[red!75!black, below left=1pt] at (x1) {\(x\)};

            % gap arrows
            \draw[gray!70, <->, line width=0.8pt] (0.8,1.4) -- (0.8,3.0);
            \draw[gray!70, <->, line width=0.8pt] (1.8,1.6) -- (1.8,2.7);
            \draw[orange!90!black, <->, line width=1.1pt] (2.8,1.2) -- (2.8,2.3);
            \draw[gray!70, <->, line width=0.8pt] (3.8,0.9) -- (3.8,2.0);

            \node[font=\scriptsize, gray!80] at (0.55,2.2) {\(\Delta_1\)};
            \node[font=\scriptsize, gray!80] at (1.55,2.15) {\(\Delta_2\)};
            \node[font=\scriptsize, orange!90!black] at (2.45,1.8) {\(\min_d \Delta_d\)};
            \node[font=\scriptsize, gray!80] at (4.05,1.65) {\(\Delta_4\)};
        \end{tikzpicture}
        \caption{Gap view.}
        \label{fig:gap_view_clean}
    \end{subfigure}
    \caption{Latent product-order embedding. Left: comparable and incomparable pairs. Right: \(z \succ_U x\) is determined by the minimum coordinatewise gap.}
    \label{fig:latent_embedding_and_gap}
\end{figure}

The product-order model used in this paper gives a geometric version of the same idea. If each item \(x\) is assigned an embedding \(u_x\in\mathbb R^d\), then
\[
x\succ_U y
\quad\Longleftrightarrow\quad
u_{x,k}>u_{y,k}\quad\text{for all }k=1,\ldots,d.
\]
With no coordinate ties, each coordinate induces a linear order by sorting the items along that coordinate. The product order is therefore the intersection of these \(d\) coordinate-wise linear orders. Thus, fixing \(d\) restricts the model to posets that can be represented, or well approximated, by a realizer of size at most \(d\). See Figure~\ref{fig:latent_embedding_and_gap}

The dimension of an arbitrary poset can be large. For \(m\ge 4\), Hiraguchi's bound gives
\[
\dim(h)\le \left\lfloor \frac{m}{2}\right\rfloor,
\]
and this worst-case order is attained by the standard-example, or crown-like, family~\citep{Hiraguchi51,bogart1973maximal}. Computing \(\dim(h)\) exactly is NP-hard in general~\citep{yannakakis1982complexity}. Consequently, choosing \(d=\lfloor m/2\rfloor\) would give a conservative worst-case representation, but it would remove the intended low-dimensional regularization and substantially increase the cost of continuous inference.

\paragraph{Use in this paper.}
Our goal is not to recover the exact order dimension of the unknown latent poset. Rather, \(d\) is a modeling choice controlling the expressiveness of the product-order prior. Small \(d\) gives a parsimonious low-dimensional hierarchy and can underfit high-dimension or highly entangled posets; large \(d\) gives a richer class but weakens regularization and increases computational cost. We therefore use moderate fixed values of \(d\), following the product-order modeling strategy developed in prior work, and focus on whether the differentiable relaxation makes Bayesian partial-order inference practical under this structured geometry.

\subsection{Illustration of notation and preliminaries}
\label{app:preliminaries}

\begin{figure}[h!]
    \centering
    \includegraphics[width=0.92\textwidth]{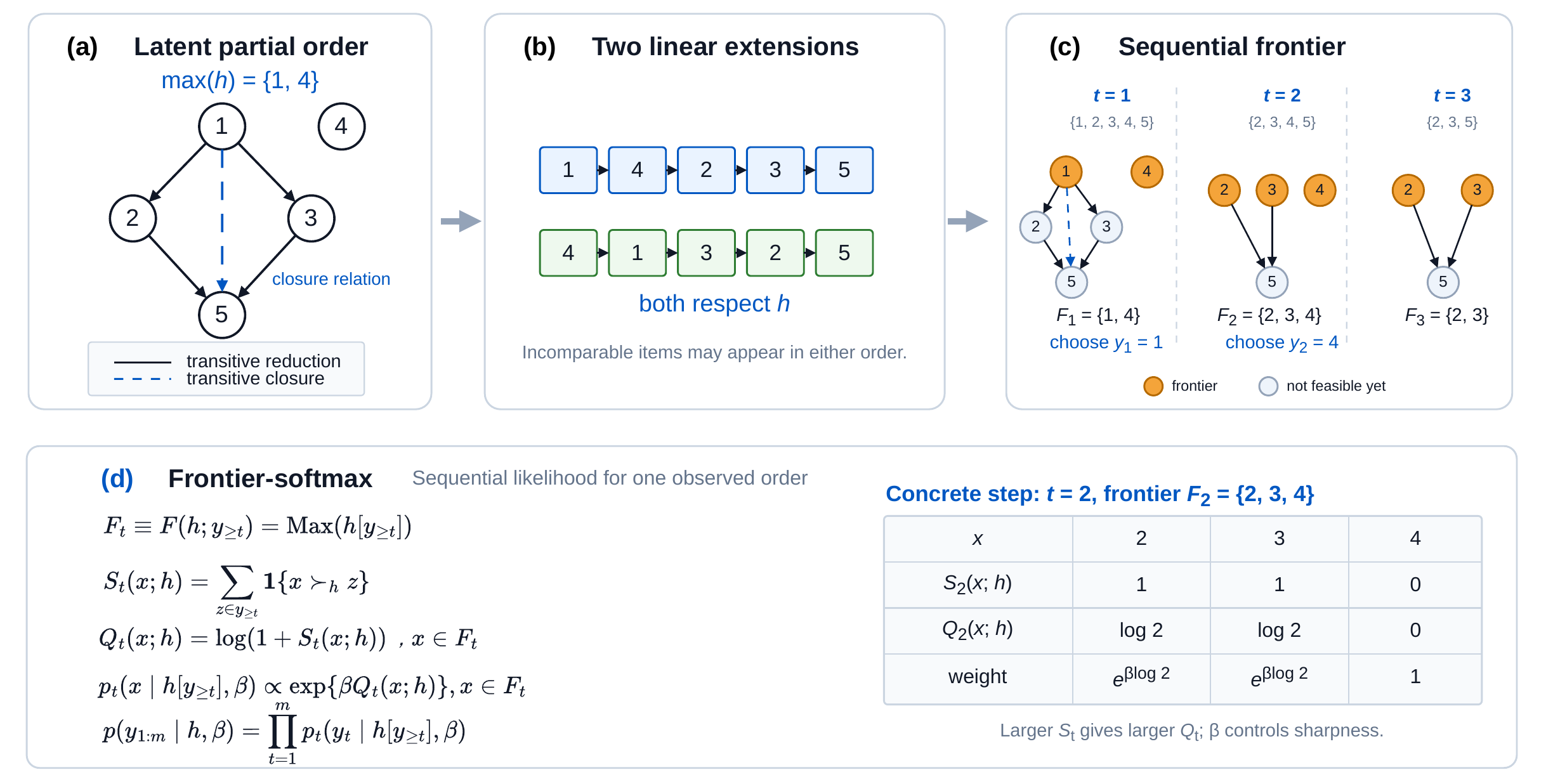}
\caption{
\textbf{From a latent partial order to the frontier-softmax likelihood.}
(a) The latent partial order and its current max-set. 
(b) Two valid linear extensions consistent with the same partial order. 
(c) Sequential construction of a ranking, where each item is chosen from the current frontier. 
(d) The frontier-softmax likelihood, illustrated by the step-\(t=2\) utility.
}
\label{fig:po_old}
\end{figure}

\subsection{Relaxed Partial Order Model Details}

\label{sec:relaxed_po_details}
\begin{figure}[h]
    \centering
    \includegraphics[width=0.9\textwidth]{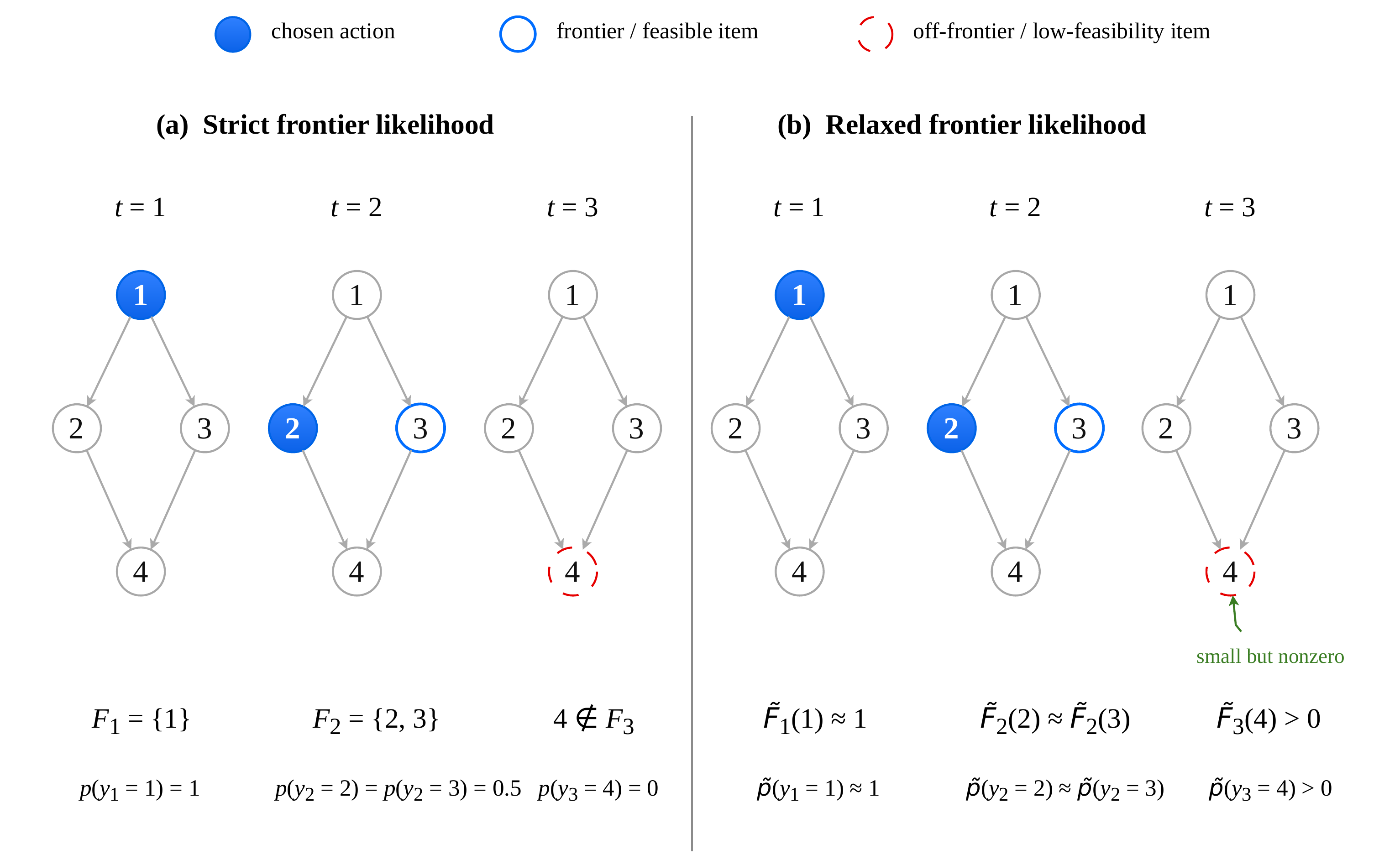}
    \caption{
    Strict versus relaxed frontier likelihood on a four-node partial order. 
    In the hard model, probability mass is assigned only to items on the current frontier; an off-frontier action such as choosing item \(4\) before its predecessors receives zero probability. 
    In the relaxed model, the soft frontier weight \(\widetilde F_t\) strongly down-weights such actions but does not make their probability exactly zero. 
    }
    \label{fig:frontier_likelihood}
\end{figure}

\begin{figure}[h]
  \centering
  \includegraphics[width=\linewidth]{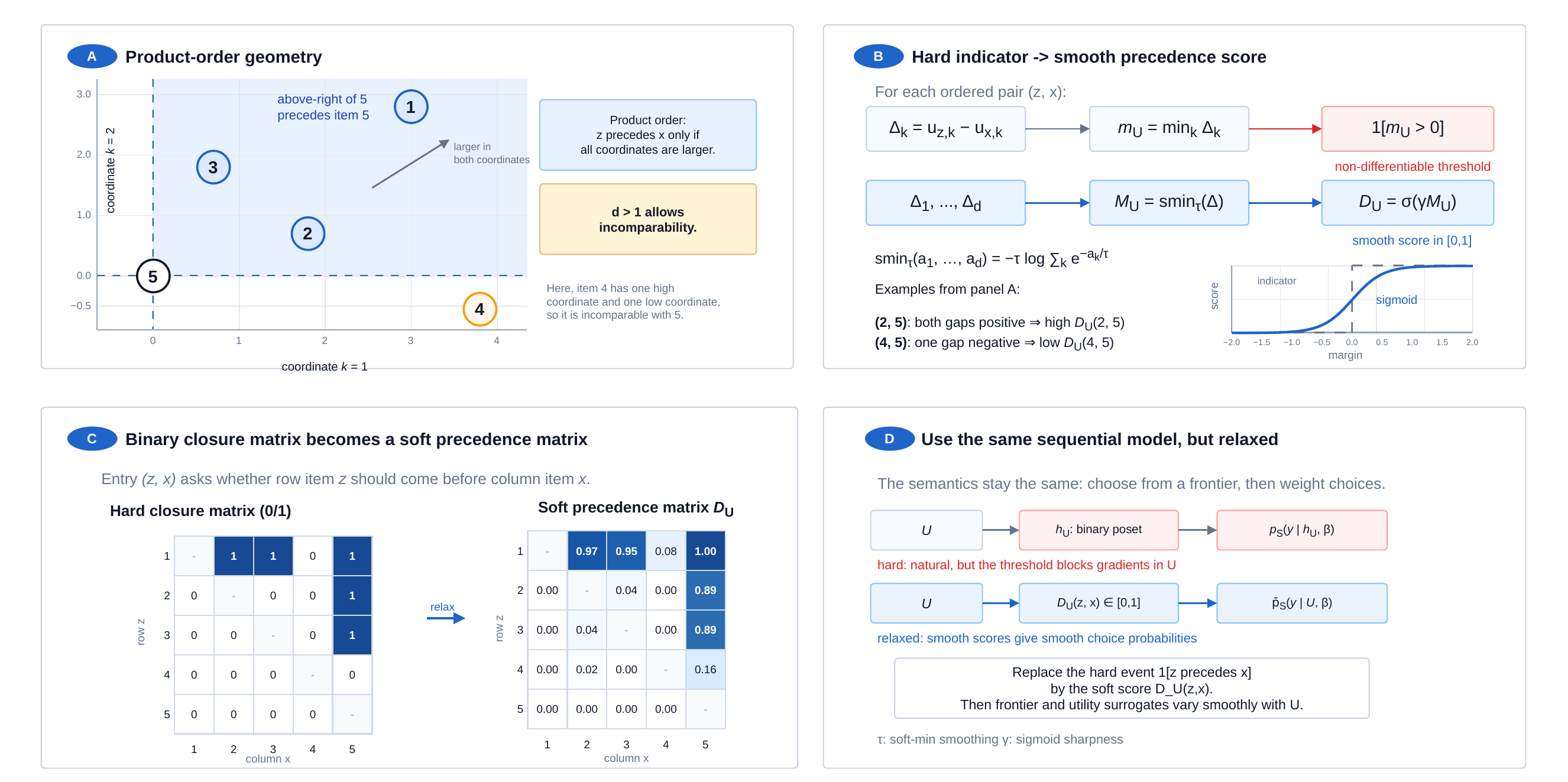}
  \caption{\textbf{Differentiable relaxation of the latent product order.}
  Items have embeddings $u_x\in\mathbb{R}^d$. Coordinatewise dominance induces the hard relation
  $z\succ_U x$ (with $d>1$ allowing incomparability). We replace the hard min-margin test with a
  soft-min and the binary indicator with a sigmoid, yielding the smooth precedence score
  $D_U(z,x)\in[0,1]$. The resulting soft precedence matrix replaces the binary closure and enables
  relaxed frontier/utility computations with a likelihood that is smooth in $U$.}
  \label{fig:soft_product_order}
\end{figure}

\section{Auxiliary proofs for the relaxed partial-order model}
\label{app:proofs}

This appendix collects the proofs of the theoretical guarantees stated in Section~\ref{subsec:relaxation_theory}. We begin with two auxiliary lemmas on the hard product order and the nonemptiness of finite frontiers, then prove the soft-min approximation, soft transitivity, frontier recovery, and likelihood convergence in turn. The final subsection gives a finite-temperature strengthening of Theorem~\ref{thm:likelihood_convergence}.

\subsection{Auxiliary lemmas}
\label{app:proof_preliminaries}

We first record two elementary facts used repeatedly below.

\begin{lemma}[The hard product order is a strict partial order]
\label{lem:hard_order_poset}
The relation \(\succ_U\) defined by
\[
z \succ_U x
\quad\Longleftrightarrow\quad
\Delta_k(z,x)>0
\quad\text{for all }k=1,\dots,d
\]
is irreflexive and transitive.
\end{lemma}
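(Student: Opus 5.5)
The plan is to verify the two properties directly from the coordinatewise definition. Both reduce to elementary facts about the strict order on \(\R\), applied in each of the \(d\) coordinates. No earlier result is needed beyond the definition of \(\Delta_k\).

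For \emph{irreflexivity}, fix \(x\in\mathcal M\). Then \(\Delta_k(x,x)=u_{x,k}-u_{x,k}=0\) for every \(k\). Hence the condition \(\Delta_k(x,x)>0\) fails for \(k=1\) (indeed for every \(k\), and \(d\ge1\)), so \(x\succ_U x\) does not hold.

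For \emph{transitivity}, suppose \(z\succ_U y\) and \(y\succ_U x\). The key identity is the telescoping decomposition
\[
\Delta_k(z,x)=\Delta_k(z,y)+\Delta_k(y,x),\qquad k=1,\dots,d,
\]
which holds because \(u_{z,k}-u_{x,k}=(u_{z,k}-u_{y,k})+(u_{y,k}-u_{x,k})\). By hypothesis both summands are strictly positive for every \(k\), so \(\Delta_k(z,x)>0\) for all \(k\), that is, \(z\succ_U x\).

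It is worth remarking that irreflexivity and transitivity together give asymmetry. If both \(z\succ_U x\) and \(x\succ_U z\) held, transitivity would give \(z\succ_U z\), contradicting irreflexivity. This matches the paper's definition of a strict partial order, which is phrased via asymmetry and transitivity, so \(h_U\in\mathcal H_{\mathcal M}\).

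There is no real obstacle here. The only point needing care is that strictness is preserved under addition of positive reals. The same telescoping identity will later be reused, through the soft-min, in the proof of Theorem~\ref{thm:soft_transitivity}.
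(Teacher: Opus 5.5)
Your proof is correct and matches the paper's argument: irreflexivity follows from \(\Delta_k(x,x)=0\), and transitivity follows from the telescoping identity \(\Delta_k(z,x)=\Delta_k(z,y)+\Delta_k(y,x)\) with both summands positive. Your extra remark that irreflexivity and transitivity give asymmetry is a useful addition. The paper's definition of a strict partial order is phrased via asymmetry, and its own proof leaves that step implicit.
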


\begin{proof}
Irreflexivity is immediate, since \(\Delta_k(x,x)=0\) for every \(k\), so \(x\not\succ_U x\).

For transitivity, suppose \(z\succ_U y\) and \(y\succ_U x\). Then for every coordinate \(k\),
\[
\Delta_k(z,x)
=
\Delta_k(z,y)+\Delta_k(y,x)
>
0
\]
Hence \(z\succ_U x\). Therefore \(\succ_U\) is a strict partial order.
\end{proof}
\begin{lemma}[Finite frontiers are nonempty]
\label{lem:finite_frontier_nonempty}
Let \(S\in\mathcal{B}_{\mathcal M}\), and let \(\succ_h\) be a strict partial order on \(S\).
If \(R\subseteq S\) is non-empty, then the set of maximal elements
\[
\max(R,\succ_h)
:=
\{x\in R:\nexists\, z\in R \text{ such that } z\succ_h x\}
\]
is nonempty.
\end{lemma}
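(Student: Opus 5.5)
The plan is to argue by contradiction using finiteness of \(R\) together with irreflexivity and transitivity of \(\succ_h\). Suppose \(\max(R,\succ_h)=\emptyset\). Then every \(x\in R\) has some \(z\in R\) with \(z\succ_h x\). Starting from any \(x_0\in R\), which exists because \(R\) is nonempty, we choose \(x_1\in R\) with \(x_1\succ_h x_0\), then \(x_2\in R\) with \(x_2\succ_h x_1\), and so on. This builds an infinite chain \(\cdots\succ_h x_2\succ_h x_1\succ_h x_0\) inside \(R\).

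Next we use transitivity, by induction on the gap \(j-i\), to show that \(x_j\succ_h x_i\) for all \(j>i\). Since \(R\subseteq S\) is finite with \(|R|\le m\), the pigeonhole principle applied to \(x_0,\dots,x_{|R|}\) gives indices \(i<j\) with \(x_i=x_j\). Then \(x_i\succ_h x_i\), which contradicts irreflexivity. Equivalently, \(x_j\succ_h x_i\) and \(x_i\succ_h x_j\) would contradict the asymmetry assumed in the paper's definition, since asymmetry implies irreflexivity. Hence the max-set is nonempty.

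An alternative direct route avoids the contradiction. Pick \(x^\star\in R\) maximizing the number of elements of \(R\) that lie below it. If some \(z\in R\) satisfied \(z\succ_h x^\star\), then transitivity would put everything below \(x^\star\) also below \(z\). Together with \(x^\star\) itself, which is below \(z\) but not below \(x^\star\) by irreflexivity, this gives \(z\) a strictly larger count, a contradiction. Either argument works; I would present the chain argument because it is shorter.

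There is no real obstacle here. The only point needing care is that the paper defines strict partial orders by asymmetry and transitivity, so irreflexivity should be noted as a consequence of asymmetry: applying asymmetry with \(x_1=x_2\) gives \(x\not\succ_h x\). For the product order \(\succ_U\), irreflexivity is also given directly by Lemma~\ref{lem:hard_order_poset}. This lemma is what guarantees that every hard frontier \(F(h;\yt)\) is nonempty, so the normalizer in \eqref{eq:frontier_softmax_choice} is positive.
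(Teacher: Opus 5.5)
Your chain argument is correct and is essentially the paper's proof: assume the max-set is empty, build a sequence with $x_{j+1}\succ_h x_j$, use finiteness of $R$ to force a repeat, and apply transitivity to get $x\succ_h x$, contradicting irreflexivity. Your write-up is slightly cleaner than the paper's: you do not claim the chain consists of distinct elements before invoking a repeat, as the paper does, and you explicitly derive irreflexivity from the paper's asymmetry-based definition.
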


\begin{proof}
Assume for contradiction that \(\max(R,\succ_h)=\varnothing\). Then every \(x_1\in R\)
has a predecessor \(x_2\in R\) such that \(x_2\succ_h x_1\). Repeating inductively yields
a sequence \(x_1,x_2,x_3,\dots\) of distinct elements of \(R\) such that
\(x_{j+1}\succ_h x_j\) for all \(j\). Since \(R\) is finite, some state must repeat,
producing a cycle. By transitivity, this implies \(x\succ_h x\) for some \(x\in R\),
contradicting irreflexivity of a strict partial order. Hence
\(\max(R,\succ_h)\neq\varnothing\).
\end{proof}
\subsection{Proof of Proposition~\ref{prop:softmin_properties}}
\label{app:proof_softmin}

\begin{proof}[Proof of Proposition~\ref{prop:softmin_properties}]
Let
\[
m:=\min_{k=1,\dots,d} a_k.
\]
Then
\[
\sum_{k=1}^d e^{-a_k/\tau}
=
e^{-m/\tau}\sum_{k=1}^d e^{-(a_k-m)/\tau}.
\]
Since \(a_k-m\ge 0\) for all \(k\),
\[
1
\le
\sum_{k=1}^d e^{-(a_k-m)/\tau}
\le
d.
\]
Taking logarithms and multiplying by \(-\tau\) yields
\[
m-\tau\log d
\le
-\tau\log\sum_{k=1}^d e^{-a_k/\tau}
\le
m,
\]
which proves \eqref{eq:softmin_uniform_bounds} and then
% Next, let
% \[
% S(a):=\sum_{j=1}^d e^{-a_j/\tau}.
% \]
% Then
% \[
% \smin_\tau(a)=-\tau\log S(a),
% \]
% and
% \[
% \frac{\partial}{\partial a_k}\smin_\tau(a)
% =
% -\tau \frac{1}{S(a)}\frac{\partial S(a)}{\partial a_k}
% =
% -\tau \frac{1}{S(a)}\left(-\frac{1}{\tau}e^{-a_k/\tau}\right)
% =
% \frac{e^{-a_k/\tau}}{\sum_{j=1}^d e^{-a_j/\tau}},
% \]
% .
\(\smin_\tau(a)\to \min_k a_k\) as \(\tau\downarrow 0\) follows since \(\tau\log d\to 0\).
\end{proof}

\begin{lemma}[Margin approximation]
\label{lem:margin_approx_appendix}
For every pair \(z,x\in S\),
\begin{equation}
|M_U(z,x)-m_U(z,x)|\le \tau\log d.
\label{eq:margin_approx_appendix}
\end{equation}
\end{lemma}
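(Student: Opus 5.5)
The bound follows directly from Proposition~\ref{prop:softmin_properties} applied to the vector of coordinatewise gaps. Fix a pair \(z,x\in S\) and set \(a=(a_1,\dots,a_d)\) with \(a_k:=\Delta_k(z,x)=u_{z,k}-u_{x,k}\). By the definitions \eqref{eq:hard_min_mU} and \eqref{eq:soft_margin}, we have \(m_U(z,x)=\min_k a_k\) and \(M_U(z,x)=\smin_\tau(a_1,\dots,a_d)\). So the lemma is exactly a restatement of the two-sided bound \eqref{eq:softmin_uniform_bounds} for this particular \(a\).

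Concretely, Proposition~\ref{prop:softmin_properties} gives
\[
m_U(z,x)-\tau\log d\;\le\;M_U(z,x)\;\le\;m_U(z,x),
\]
which rearranges to \(0\le m_U(z,x)-M_U(z,x)\le\tau\log d\). Taking absolute values yields \eqref{eq:margin_approx_appendix}.

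A few remarks complete the argument. The bound holds uniformly in \(U\) and in the pair, including the diagonal case \(z=x\), where every gap is zero, \(m_U=0\), and \(M_U=-\tau\log d\); the bound is then attained with equality. When \(d=1\) the soft-min coincides with the minimum and the bound is trivially \(0\). There is no real obstacle here. The only point to check is that \(\Delta_k\) is plugged in consistently with the sign convention of \(\smin_\tau\), so that the soft-min lower-bounds the hard margin rather than upper-bounding it. This one-sidedness (\(M_U\le m_U\)) is also worth recording, since it will be used when translating margin separation into the frontier limit of Theorem~\ref{thm:frontier_quantitative}.
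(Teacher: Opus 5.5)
Your proposal is correct and matches the paper's proof: apply Proposition~\ref{prop:softmin_properties} to the gap vector $a=(\Delta_1(z,x),\dots,\Delta_d(z,x))$ and read off the two-sided bound. Your remark that the one-sided inequality $M_U(z,x)\le m_U(z,x)$ is what the frontier bounds later use is a good catch. The lemma as stated records only the absolute-value bound, yet the paper cites it for that one-sided step.
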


\begin{proof}
Apply Proposition~\ref{prop:softmin_properties} to the vector
\[
a=\bigl(\Delta_1(z,x),\dots,\Delta_d(z,x)\bigr).
\]
Since
\[
m_U(z,x)=\min_k \Delta_k(z,x)
\qquad\text{and}\qquad
M_U(z,x)=\smin_\tau\!\bigl(\Delta_1(z,x),\dots,\Delta_d(z,x)\bigr),
\]
the bound follows immediately.
\end{proof}

\subsection{Proof of Theorem~\ref{thm:soft_transitivity}}
\label{app:proof_soft_transitivity}

\begin{lemma}[Superadditivity of the soft minimum]
\label{lem:superadditivity_appendix}
For any \(a,b\in\R^d\) and any \(\tau>0\),
\begin{equation}
\smin_\tau(a+b)\ge \smin_\tau(a)+\smin_\tau(b),
\label{eq:softmin_superadditivity_appendix}
\end{equation}
where \(a+b\) is taken coordinatewise.
\end{lemma}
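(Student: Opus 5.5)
The statement to prove is superadditivity: $\smin_\tau(a+b)\ge\smin_\tau(a)+\smin_\tau(b)$. Since $\smin_\tau(a)=-\tau\log\sum_k e^{-a_k/\tau}$, this is equivalent, after dividing by $-\tau$ and exponentiating, to the inequality
\[
\sum_{k=1}^d e^{-(a_k+b_k)/\tau}\;\le\;\Bigl(\sum_{k=1}^d e^{-a_k/\tau}\Bigr)\Bigl(\sum_{k=1}^d e^{-b_k/\tau}\Bigr).
\]
The plan is to prove this inequality of positive sums directly and then translate it back.

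\textbf{Step 1 (reduction).} Set $\alpha_k=e^{-a_k/\tau}>0$ and $\beta_k=e^{-b_k/\tau}>0$. Then $e^{-(a_k+b_k)/\tau}=\alpha_k\beta_k$, so it suffices to show
\[
\sum_k\alpha_k\beta_k\le\Bigl(\sum_j\alpha_j\Bigr)\Bigl(\sum_l\beta_l\Bigr).
\]

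\textbf{Step 2 (the inequality).} Expand the right-hand side as $\sum_{j,l}\alpha_j\beta_l$. The left-hand side is exactly its diagonal terms $j=l$. Every off-diagonal term is strictly positive, so the inequality holds, with equality only when $d=1$.

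\textbf{Step 3 (translate back).} Take logarithms, which is monotone. Then multiply by $-\tau<0$, which reverses the inequality. This gives
\[
\smin_\tau(a+b)\ge\smin_\tau(a)+\smin_\tau(b),
\]
using that $-\tau\log$ of a product is the sum of the two soft minima.

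\textbf{Where care is needed.} The only real subtlety is the sign bookkeeping in Step 3: the factor $-\tau$ flips the inequality, and one must check the direction is right. As a sanity check, at $\tau\downarrow0$ the statement becomes $\min_k(a_k+b_k)\ge\min_k a_k+\min_k b_k$, which is true. An alternative route, if preferred, is to view $-\smin_\tau$ as a log-sum-exp, which is subadditive in this sense, but the elementary expansion in Step 2 is the most direct.

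For Theorem~\ref{thm:soft_transitivity}, apply the lemma with $a=\Delta(z,y)$ and $b=\Delta(y,x)$, noting that $a+b=\Delta(z,x)$. Then $\logit D_U=\gamma M_U$ with $\gamma>0$ gives the logit form of the inequality.
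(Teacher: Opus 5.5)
Your proof is correct and follows the paper's argument essentially step for step. The paper also sets $u_k=e^{-a_k/\tau}$, $v_k=e^{-b_k/\tau}$, bounds $\sum_k u_kv_k$ by $(\sum_k u_k)(\sum_k v_k)$ using nonnegativity of the cross terms, and then applies the order-reversing map $-\tau\log(\cdot)$ to obtain the superadditivity inequality.
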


\begin{proof}
Let
\[
u_k:=e^{-a_k/\tau},
\qquad
v_k:=e^{-b_k/\tau},
\]
so that
\[
\sum_{k=1}^d e^{-(a_k+b_k)/\tau}
=
\sum_{k=1}^d u_kv_k.
\]
Since \(u_k,v_k\ge 0\),
\[
\sum_{k=1}^d u_kv_k
\le
\left(\sum_{k=1}^d u_k\right)\left(\sum_{k=1}^d v_k\right)
=
\left(\sum_{k=1}^d e^{-a_k/\tau}\right)
\left(\sum_{k=1}^d e^{-b_k/\tau}\right).
\]
Applying \(-\tau\log(\cdot)\) to both sides gives
\[
\smin_\tau(a+b)
=
-\tau\log\sum_{k=1}^d e^{-(a_k+b_k)/\tau}
\ge
-\tau\log\sum_{k=1}^d e^{-a_k/\tau}
-
\tau\log\sum_{k=1}^d e^{-b_k/\tau},
\]
which proves \eqref{eq:softmin_superadditivity_appendix}.
\end{proof}

\begin{proof}[Proof of Theorem~\ref{thm:soft_transitivity}]
For any \(x,y,z\in S\), the coordinate-wise differences satisfy
\[
\Delta_k(z,x)=\Delta_k(z,y)+\Delta_k(y,x),
\qquad
k=1,\dots,d.
\]
Hence
\[
\bigl(\Delta_1(z,x),\dots,\Delta_d(z,x)\bigr)
=
\bigl(\Delta_1(z,y),\dots,\Delta_d(z,y)\bigr)
+
\bigl(\Delta_1(y,x),\dots,\Delta_d(y,x)\bigr),
\]
with addition taken coordinatewise. Applying Lemma~\ref{lem:superadditivity_appendix},
\begin{equation*}
M_U(z,x)
=
\smin_\tau\!\bigl(\Delta(z,x)\bigr)
\ge
\smin_\tau\!\bigl(\Delta(z,y)\bigr)
+
\smin_\tau\!\bigl(\Delta(y,x)\bigr)
=
M_U(z,y)+M_U(y,x),\label{}
\end{equation*}
which proves \eqref{eq:soft_transitivity_margin_main}.

Since
\[
D_U(a,b)=\sigma\!\bigl(\gamma M_U(a,b)\bigr)
\qquad\text{and}\qquad
\logit(\sigma(t))=t,
\]
we have
\[
\logit D_U(a,b)=\gamma M_U(a,b).
\]
Multiplying \eqref{eq:soft_transitivity_margin_main} by \(\gamma>0\) gives
\[
\logit D_U(z,x)\ge \logit D_U(z,y)+\logit D_U(y,x),
\]
which proves \eqref{eq:soft_transitivity_logit_main}.
\end{proof}

\subsection{Proof of Theorem~\ref{thm:frontier_quantitative}}
\label{app:proof_frontier}

We first prove an auxiliary result supporting Theorem~\ref{thm:frontier_quantitative}.
For each step \(1\le t\le m\), define
\begin{equation}
\delta_t(U):=
\min_{\substack{x,z\in \yt\\x\neq z}}
|m_U(z,x)|.
\label{eq:local_separation_margin}
\end{equation}
This is the magnitude of the margin for the closest pair at step $t$.

\begin{proposition}[Quantitative frontier bounds]
\label{prop:frontier_quantitative_bounds}
Assume Assumption~\ref{ass:margin_separation}. Fix a step \(t\) and an item \(x\in \yt\).

If \(x\in F(h_U;\yt)\), then
\begin{equation}
\widetilde F_t(x;U)
\ge
\prod_{z\in \yt\setminus\{x\}}
\Bigl(1-\sigma\!\bigl(-\gamma|m_U(z,x)|\bigr)\Bigr)
\ge
\Bigl(1-\sigma(-\gamma\delta_t(U))\Bigr)^{|\yt|-1}.
\label{eq:frontier_correctness_bound_appendix}
\end{equation}

If \(x\notin F(h_U;\yt)\), then for every \(\tau>0\),
\begin{equation}
\widetilde F_t(x;U)
\le
1-\sigma\!\bigl(\gamma(\delta_t(U)-\tau\log d)\bigr).
\label{eq:frontier_completeness_bound_appendix}
\end{equation}
In particular, whenever \(\tau<\delta_t(U)/\log d\),
\begin{equation}
\widetilde F_t(x;U)\to 0
\qquad\text{as }\gamma\to\infty.
\label{eq:frontier_completeness_limit_appendix}
\end{equation}
\end{proposition}

\begin{proof}
Fix a step \(t\) and an item \(x\in \yt\).

\paragraph{Case 1: \(x\in F(h_U;\yt)\).}
By definition of the hard frontier, for every \(z\in \yt\setminus\{x\}\) we have \(z\not\succ_U x\). Under Assumption~\ref{ass:margin_separation}, this implies
\[
m_U(z,x)<0.
\]
Hence \(|m_U(z,x)|=-m_U(z,x)\), and Lemma~\ref{lem:margin_approx_appendix} gives
\[
M_U(z,x)\le m_U(z,x)=-|m_U(z,x)|.
\]
Since the sigmoid is monotone increasing,
\[
D_U(z,x)=\sigma\!\bigl(\gamma M_U(z,x)\bigr)
\le
\sigma\!\bigl(-\gamma|m_U(z,x)|\bigr).
\]
Therefore
\[
1-D_U(z,x)
\ge
1-\sigma\!\bigl(-\gamma|m_U(z,x)|\bigr).
\]
Multiplying over all \(z\in \yt\setminus\{x\}\) yields
\[
\widetilde F_t(x;U)
=
\prod_{z\in \yt\setminus\{x\}}
\bigl(1-D_U(z,x)\bigr)
\ge
\prod_{z\in \yt\setminus\{x\}}
\Bigl(1-\sigma\!\bigl(-\gamma|m_U(z,x)|\bigr)\Bigr).
\]
Since \(|m_U(z,x)|\ge\delta_t(U)\) by definition of \(\delta_t(U)\), factors are at least \(1-\sigma(-\gamma\delta_t(U))\), proving \eqref{eq:frontier_correctness_bound_appendix}.

\paragraph{Case 2: \(x\notin F(h_U;\yt)\).}
Then there exists some \(z^\star\in \yt\setminus\{x\}\) such that \(z^\star\succ_U x\), equivalently \(m_U(z^\star,x)>0\). By definition of \(\delta_t(U)\),
\[
m_U(z^\star,x)\ge \delta_t(U).
\]
Lemma~\ref{lem:margin_approx_appendix} gives
\[
M_U(z^\star,x)
\ge
m_U(z^\star,x)-\tau\log d
\ge
\delta_t(U)-\tau\log d.
\]
Applying monotonicity of \(\sigma\),
\[
D_U(z^\star,x)
=
\sigma\!\bigl(\gamma M_U(z^\star,x)\bigr)
\ge
\sigma\!\bigl(\gamma(\delta_t(U)-\tau\log d)\bigr).
\]
Since every factor in the product defining \(\widetilde F_t(x;U)\) is at most \(1\),
\[
\widetilde F_t(x;U)
\le
1-D_U(z^\star,x)
\le
1-\sigma\!\bigl(\gamma(\delta_t(U)-\tau\log d)\bigr),
\]
which proves \eqref{eq:frontier_completeness_bound_appendix}. The limit \eqref{eq:frontier_completeness_limit_appendix} follows immediately when \(\tau<\delta_t(U)/\log d\).
\end{proof}

We now prove Theorem~\ref{thm:frontier_quantitative}.

\begin{proof}[Proof of Theorem~\ref{thm:frontier_quantitative}]
Fix \(t\) and \(x\in \yt\).

If \(x\in F(h_U;\yt)\), then Proposition~\ref{prop:frontier_quantitative_bounds} gives
\[
\widetilde F_t(x;U)
\ge
\Bigl(1-\sigma(-\gamma\delta_t(U))\Bigr)^{|\yt|-1}.
\]
Because \(\delta_t(U)>0\), the right-hand side converges to \(1\) as \(\gamma\to\infty\). Since \(\widetilde F_t(x;U)\le 1\), the squeeze theorem gives
\[
\lim_{\gamma\to\infty}\widetilde F_t(x;U)=1,
\]
and therefore also
\[
\lim_{\gamma\to\infty}\lim_{\tau\downarrow 0}\widetilde F_t(x;U)=1.
\]

If \(x\notin F(h_U;\yt)\), choose any \(\tau_0\in(0,\delta_t(U)/\log d)\). For all \(0<\tau\le\tau_0\), Proposition~\ref{prop:frontier_quantitative_bounds} gives
\[
\widetilde F_t(x;U)
\le
1-\sigma\!\bigl(\gamma(\delta_t(U)-\tau\log d)\bigr),
\]
and since \(\delta_t(U)-\tau\log d>0\), the right-hand side tends to \(0\) as \(\gamma\to\infty\). Hence
\[
\lim_{\gamma\to\infty}\lim_{\tau\downarrow 0}\widetilde F_t(x;U)=0.
\]

Combining the two cases proves
\[
\lim_{\gamma\to\infty}\lim_{\tau\downarrow 0}
\widetilde F_t(x;U)
=
\1\!\left[x\in F(h_U;\yt)\right],
\qedhere
\]
which is the result claimed in Theorem~\ref{thm:frontier_quantitative}.
\end{proof}

\subsection{Proof of %Equation~\ref{eq:soft_successor_terms} and 
Theorem~\ref{thm:likelihood_convergence}}
\label{app:proof_likelihood}

\iffalse
\begin{proof}[Proof of Equation~\ref{eq:soft_successor_terms}]
Fix \(\tau>0\), \(\gamma>0\), \(\beta>0\). For each pair \((z,x)\), the map
\[
U\mapsto \Delta_k(z,x)
\]
is affine, hence \(C^\infty\). The soft minimum \(\smin_\tau\) is \(C^\infty\) by Proposition~\ref{prop:softmin_properties}, and the sigmoid \(\sigma\) is \(C^\infty\), so
\[
U\mapsto D_U(z,x)
\]
is \(C^\infty\). Therefore the finite product
\[
U\mapsto \widetilde F_t(x;U)
\]
and the finite sum
\[
U\mapsto \widetilde S_t(x;U)
\]
are \(C^\infty\), hence so is
\[
U\mapsto \widetilde Q_t(x;U)=\log(1+\widetilde S_t(x;U)).
\]
Since exponentials, finite sums, and quotients by strictly positive denominators preserve smoothness, the step likelihood
\[
U\mapsto \widetilde p(y_t\mid \yt,U,\beta,\gamma)
\]
is \(C^\infty\). The trace likelihood is a finite product of step likelihoods, so it is also \(C^\infty\).
\end{proof}
\fi

We first show that the soft successor counts in \eqref{eq:soft_successor_terms} converge to the hard descendant counts in \eqref{eq:frontier_softmax_descendants}.

\begin{proposition}[Sharp-limit recovery of descendant counts]
\label{prop:descendant_limit_appendix}
Under Assumption~\ref{ass:margin_separation}, for every step \(t\) and every \(x\in \yt\),
\begin{equation}
\lim_{\gamma\to\infty}\lim_{\tau\downarrow 0}
\widetilde S_t(x;U)
=
S_t(x;h_U).
\label{eq:descendant_limit_appendix}
\end{equation}
\end{proposition}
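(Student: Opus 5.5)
The plan is to reduce the statement to a pairwise limit for each summand. Since \(\widetilde S_t(x;U)=\sum_{z\in\yt\setminus\{x\}}D_U(x,z)\) is a finite sum, and the hard count can be written as \(S_t(x;h_U)=\sum_{z\in\yt\setminus\{x\}}\1[x\succ_U z]\) (the \(z=x\) term vanishes by irreflexivity, Lemma~\ref{lem:hard_order_poset}), it suffices to show, for each fixed \(z\in\yt\setminus\{x\}\),
\[
\lim_{\gamma\to\infty}\lim_{\tau\downarrow 0}D_U(x,z)=\1[x\succ_U z].
\]
Iterated limits commute with finite sums provided each inner limit exists, so summing then gives \eqref{eq:descendant_limit_appendix}.

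For the inner limit I will use Lemma~\ref{lem:margin_approx_appendix}. It gives \(|M_U(x,z)-m_U(x,z)|\le\tau\log d\), hence \(M_U(x,z)\to m_U(x,z)\) as \(\tau\downarrow0\) with \(\gamma\) fixed. Continuity of \(\sigma\) then yields \(\lim_{\tau\downarrow0}D_U(x,z)=\sigma(\gamma\,m_U(x,z))\).

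For the outer limit, Assumption~\ref{ass:margin_separation} gives \(m_U(x,z)\neq0\), so there are two cases:
\begin{itemize}
\item if \(m_U(x,z)>0\), then \(x\succ_U z\) by \eqref{eq:hard_product_order_mU}, and \(\sigma(\gamma m_U(x,z))\to1\);
\item if \(m_U(x,z)<0\), then \(x\not\succ_U z\), and \(\sigma(\gamma m_U(x,z))\to0\).
\end{itemize}
In both cases the limit is \(\1[x\succ_U z]\).

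The only delicate point is the role of the separation assumption. Without it, a tie \(m_U(x,z)=0\) would give the limit \(1/2\), which does not match the hard indicator, so the assumption is exactly what makes the sharp limit agree with \(h_U\).

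As an alternative route that avoids passing to \(\tau\downarrow0\) first, one can mirror Proposition~\ref{prop:frontier_quantitative_bounds} and bound each summand directly. For \(\tau<\delta_t(U)/\log d\), each summand lies within \(\sigma(-\gamma(\delta_t(U)-\tau\log d))\) of its hard indicator. This gives the quantitative bound
\[
|\widetilde S_t(x;U)-S_t(x;h_U)|\le(|\yt|-1)\,\sigma\!\bigl(-\gamma(\delta_t(U)-\tau\log d)\bigr),
\]
which tends to \(0\) as \(\gamma\to\infty\). I would present the iterated-limit argument as the proof and mention this bound as a remark.
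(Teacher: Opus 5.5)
Your proposal is correct and follows essentially the same route as the paper: reduce to each summand $D_U(x,z)$, use Lemma~\ref{lem:margin_approx_appendix} and continuity of $\sigma$ for the inner $\tau\downarrow 0$ limit, then use $m_U(x,z)\neq 0$ from Assumption~\ref{ass:margin_separation} to get $\sigma(\gamma\, m_U(x,z))\to\1[x\succ_U z]$, and pass both limits through the finite sum. Your explicit treatment of the $z=x$ term via irreflexivity and your quantitative remark (which is valid and mirrors Proposition~\ref{prop:frontier_quantitative_bounds}) are small additions the paper does not spell out.
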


\begin{proof}
Fix \(t\) and \(x\in \yt\). For each \(z\in \yt\setminus\{x\}\), Lemma~\ref{lem:margin_approx_appendix} gives
\[
M_U(x,z)\to m_U(x,z)
\qquad\text{as }\tau\downarrow 0.
\]
Under Assumption~\ref{ass:margin_separation}, \(m_U(x,z)\neq 0\). Therefore
\[
\lim_{\gamma\to\infty}\lim_{\tau\downarrow 0}
D_U(x,z)
=
\lim_{\gamma\to\infty}\sigma\!\bigl(\gamma m_U(x,z)\bigr)
=
\1[m_U(x,z)>0]
=
\1[x\succ_U z].
\]
Since the sum in \eqref{eq:soft_successor_terms} is finite,
\[
\lim_{\gamma\to\infty}\lim_{\tau\downarrow 0}
\widetilde S_t(x;U)
=
\sum_{z\in \yt\setminus\{x\}}\1[x\succ_U z]
=
S_t(x;h_U),
\]
which proves \eqref{eq:descendant_limit_appendix}.
\end{proof}

We now prove the convergence of the full soft likelihood in Equation~\eqref{eq:trace_likelihood_limit_main} of Theorem~\ref{thm:likelihood_convergence}.

\begin{proof}[Proof of Theorem~\ref{thm:likelihood_convergence}]
Fix a step \(t\). For each \(a\in \yt\), define the relaxed rational score
\begin{equation}
\widetilde r_t(a;U)
:=
\widetilde F_t(a;U)\exp\{\beta \, \widetilde Q_t(a;U)\},
\label{eq:appendix_relaxed_rational_score}
\end{equation}
and the corresponding hard rational score
\begin{equation}
r_t(a;h_U)
:=
\1[a\in F(h_U;\yt)]\exp\{\beta\, Q_t(a;h_U)\}.
\label{eq:appendix_hard_rational_score}
\end{equation}

By Theorem~\ref{thm:frontier_quantitative},
\[
\widetilde F_t(a;U)\to \1[a\in F(h_U;\yt)]
\]
in the sharp limit. By Proposition~\ref{prop:descendant_limit_appendix},
\[
\widetilde S_t(a;U)\to S_t(a;h_U).
\]
Since
\[
\widetilde Q_t(a;U)=\log\bigl(1+\widetilde S_t(a;U)\bigr)
\qquad\text{and}\qquad
Q_t(a;h_U)=\log\bigl(1+S_t(a;h_U)\bigr),
\]
continuity of \(\log(1+s)\) implies
\[
\widetilde Q_t(a;U)\to Q_t(a;h_U).
\]
Therefore
\[
\widetilde r_t(a;U)\to r_t(a;h_U)
\qquad
\text{for every }a\in \yt.
\]

Because \(\yt\) is finite,
\[
\sum_{a\in \yt}\widetilde r_t(a;U)
\to
\sum_{a\in \yt}r_t(a;h_U).
\]
By Lemma~\ref{lem:hard_order_poset}, \(h_U=(S,\succ_U)\) is a strict partial order, so its restriction to \(\yt\) is also a finite strict partial order. Hence by Lemma~\ref{lem:finite_frontier_nonempty},
\[
F(h_U;\yt)\neq\varnothing.
\]
Therefore
\[
\sum_{a\in \yt}r_t(a;h_U)
=
\sum_{a\in F(h_U;\yt)}\exp\{\beta\, Q_t(a;h_U)\}
>
0.
\]
It follows that
\[
\frac{\widetilde r_t(y_t;U)}
{\sum_{a\in \yt}\widetilde r_t(a;U)}
\to
\frac{r_t(y_t;h_U)}
{\sum_{a\in \yt}r_t(a;h_U)}.
\]
\[
\lim_{\gamma\to\infty}\lim_{\tau\downarrow 0}
\widetilde p(y_t\mid \yt,U,\beta,\gamma)
=
p(y_t\mid \yt,h_U,\beta),
\]
which proves \eqref{eq:step_likelihood_limit_main}.

Finally, the trace likelihood is a finite product of step likelihoods:
\[
\widetilde p(y\mid U,\beta,\gamma)
=
\prod_{t=1}^{T}
\widetilde p(y_t\mid \yt,U,\beta,\gamma).
\]
Applying \eqref{eq:step_likelihood_limit_main} stepwise and continuity of finite products gives
\[
\lim_{\gamma\to\infty}\lim_{\tau\downarrow 0}
\widetilde p(y\mid U,\beta,\gamma)
=
p(y\mid h_U,\beta),
\]
which proves \eqref{eq:trace_likelihood_limit_main} and completes the proof of Theorem~\ref{thm:likelihood_convergence}.
\end{proof}

\subsection{Effect of the relaxation temperature $\tau$}

The temperature $\tau$ controls the gap between the hard product-order margin
and its soft-min relaxation. For
\[
m_U(z,x)=\min_{k=1,\ldots,d}\{u_{z,k}-u_{x,k}\},
\qquad
M_U^\tau(z,x)
=
-\tau\log\sum_{k=1}^d
\exp\left\{-\frac{u_{z,k}-u_{x,k}}{\tau}\right\},
\]
the standard soft-min bound gives
\begin{equation}
m_U(z,x)-\tau\log d
\le
M_U^\tau(z,x)
\le
m_U(z,x).
\label{eq:tau-softmin-bound}
\end{equation}
Thus $\tau$ introduces at most $\tau\log d$ downward bias in each pairwise
margin, and $M_U^\tau(z,x)\to m_U(z,x)$ uniformly as $\tau\downarrow 0$.

Let
\[
D_U^\tau(z,x)=\sigma(\gamma M_U^\tau(z,x))
\]
be the soft precedence score. At step $t$, let $R_t$ be the remaining set and
define the local margin separation
\[
\delta_t(U)
=
\min_{\substack{a,b\in R_t\\a\neq b}}
|m_U(a,b)|.
\]
If $\tau<\delta_t(U)/\log d$, then the effective margin is positive. Defining
\[
\varepsilon_t(\tau,\gamma)
=
\exp\{-\gamma(\delta_t(U)-\tau\log d)\},
\]
we obtain the pairwise recovery bounds
\begin{equation}
z\succ_U x
\quad\Longrightarrow\quad
1-\varepsilon_t(\tau,\gamma)
\le
D_U^\tau(z,x)
\le
1,
\label{eq:tau-edge-bound}
\end{equation}
and
\begin{equation}
z\not\succ_U x
\quad\Longrightarrow\quad
0
\le
D_U^\tau(z,x)
\le
\varepsilon_t(\tau,\gamma).
\label{eq:tau-nonedge-bound}
\end{equation}
Hence the relaxation separates true precedences from non-precedences whenever
\[
\delta_t(U)-\tau\log d>0,
\qquad\text{equivalently}\qquad
\tau<\frac{\delta_t(U)}{\log d}.
\label{eq:tau-condition}
\]
More generally, to achieve pairwise error at most $\eta$, it is sufficient that
\begin{equation}
\tau
\le
\frac{\delta_t(U)-\gamma^{-1}\log(1/\eta)}{\log d}.
\label{eq:tau-target-error}
\end{equation}

This shows the statistical role of $\tau$: larger $\tau$ smooths the posterior
but reduces the effective margin available for graph recovery. Conversely,
smaller $\tau$ better approximates the hard model, but makes the objective more
sharply curved. Indeed, if
\[
p_k
=
\frac{\exp\{-\Delta_k/\tau\}}
{\sum_{\ell=1}^d\exp\{-\Delta_\ell/\tau\}},
\]
then
\[
\nabla_\Delta^2 M_U^\tau
=
-\frac{1}{\tau}
\left(\operatorname{diag}(p)-pp^\top\right),
\qquad
\|\nabla_\Delta^2 M_U^\tau\|_2=O(1/\tau).
\]
Thus $\tau$ trades off approximation accuracy and optimization stability:
too large a value makes the recovery bound weak or vacuous, while too small a
value can make gradient-based inference ill-conditioned.

\subsection{Proof of Theorem~\ref{thm:marginal_consistency}}
\label{app:marginal_consistency}

Prior beliefs about soft orders $D_U$ on a subset $S\in\mathcal{B}_\mathcal{M}$ of items $U=(u_x)_{x\in S}$ are unchanged by the absence of omitted items (items in $\mathcal{M}\setminus S$). If we construct the $D_U$ prior for orders over a choice set $S$ with $m$ items from the start (so $U\in \R^{m\times d}$), or construct the $D_U$ prior for orders over $\mathcal{M}$ (so $U\in \R^{M\times d}$) then throw out items in $\mathcal{M}\setminus S$ by dropping those rows and columns of $D_U$ then we get the same prior distribution on $D_U$.

{\bf Theorem~\ref{thm:marginal_consistency}} (Marginal Consistency) If $D_U\sim \pi_{\mathcal{M}}(\cdot)$ then $D_U[S,S]\sim \pi_S(\cdot)$.
\begin{proof} For $U\in \mathbb{R}^{M\times d}$ let $U[S,:]$ be the matrix we get by deleting rows of $U$ in $\mathcal{M}\setminus S$. Since $D_U[S,S]=D_{U[S,:]}$ and the rows $u_x$ of $U$ are independent, omitting rows give us a random matrix $U[S,:]$ with the same distribution it has if formed on $S$ from the start, and hence $D_{U[S,:]}\sim \pi_S(\cdot)$.
\end{proof}

\section{Model inference}
\subsection{Posterior Derivation}
\label{app:posterior_derivations}
This section gives the unconstrained posterior target, Jacobian corrections, variational objectives, and decoding procedure used by the continuous inference methods in Section~\ref{subsec:inference_methods}.

\subsubsection{Relaxed posterior target}
\label{app:relaxed_posterior_target}

Let
\[
\Theta=(Z,\rho,\beta,\gamma)
\]
denote the relaxed model parameters. Under the non-centred parameterization, the joint density is
\[
p(\mathcal D,\Theta)
=
p(\rho)p(\beta)p(\gamma)
\left[\prod_{x=1}^m \mathcal N(z_x;0,I_d)\right]
\prod_{n=1}^N
\widetilde p\!\left(y^{(n)}\mid U(Z,\rho),\beta,\gamma\right).
\]

For continuous inference, we work in unconstrained coordinates
\[
w=
\bigl(\operatorname{vec}(Z),\eta_\rho,\eta_\beta,\eta_\gamma\bigr)
\in\mathbb R^q,
\qquad
q=md+3,
\]
with transforms
\[
\rho=\sigma(\eta_\rho),
\qquad
\beta=e^{\eta_\beta},
\qquad
\gamma=e^{\eta_\gamma}.
\]
Let \(T(w)=\Theta\). The transformed joint density is
\[
\bar p(\mathcal D,w)
=
p(\mathcal D,T(w))\,|\det J_T(w)|,
\]
or equivalently
\[
\log \bar p(\mathcal D,w)
=
\log p(\mathcal D,T(w))
+
\log |\det J_T(w)|.
\]
For the above transforms,
\[
\log |\det J_T(w)|
=
\log\rho+\log(1-\rho)
+\eta_\beta+\eta_\gamma .
\]
If \(\beta\) is fixed in a particular experiment, the \(\eta_\beta\) coordinate, its prior, and its Jacobian term are omitted.

\subsection{Continuous MCMC}
\label{app:relaxed_mcmc}

\relaxedmcmc{} targets the transformed relaxed posterior
\[
\bar p(w\mid\mathcal D)
\propto
\bar p(\mathcal D,w)
\]
using \relaxedmcmc. This sampler targets the relaxed posterior directly and does not impose a variational family. In contrast, \hardmcmc{} targets the original discrete hard partial-order posterior and is used as a reference when long-run discrete inference is feasible.

\subsection{Stan full-rank Gaussian variational inference}
\label{app:fullrank_vi}

We also consider a simpler Gaussian variational approximation using Stan's full-rank automatic differentiation variational inference (ADVI). This serves as a baseline against the normalizing-flow posterior while keeping the target posterior model fixed.

\paragraph{Posterior target.}
The full-rank variational approximation is fit to the same relaxed local-poset posterior as the normalizing-flow method. In particular, the transformed joint density in unconstrained coordinates is
\begin{equation}
\log \bar p(\cD,w)
=
\log p\bigl(\cD,T(w)\bigr)
+
\log\left|\det J_T(w)\right|,
\label{eq:app_fullrank_transformed_joint}
\end{equation}
where
\[
w=
\bigl(\operatorname{vec}(Z),\eta_\rho,\eta_\beta,\eta_\gamma \bigr)\in\R^q
\]
is the unconstrained parameter vector and \(T(w)\) maps to the constrained parameters
\((Z,\rho,\beta,\gamma)\).

\paragraph{Variational family.}
Stan full-rank ADVI uses a single Gaussian approximation on the unconstrained coordinates:
\begin{equation}
q_\lambda(w)=\cN(w;\mu,\Sigma),
\label{eq:app_fullrank_q}
\end{equation}
where \(\mu\in\R^q\) is the variational mean and \(\Sigma\in\R^{q\times q}\) is a dense covariance matrix. This is more expressive than a mean-field Gaussian because it captures linear posterior dependence, but it is still limited to a single elliptical approximation.

\paragraph{ELBO objective.}
The variational parameters \(\lambda\) are obtained by maximizing
\begin{equation}
\mathcal L_{\mathrm{FR}}(\lambda)
=
\E_{q_\lambda(w)}
\Big[
\log \bar p(\cD,w)-\log q_\lambda(w)
\Big].
\label{eq:app_fullrank_elbo}
\end{equation}
In practice, Stan uses a Monte Carlo estimator of the ELBO,
\begin{equation}
\widehat{\mathcal L}_{\mathrm{FR}}(\lambda)
=
\frac{1}{S}
\sum_{s=1}^{S}
\left[
\log \bar p\bigl(\cD,w^{(s)}\bigr)
-
\log q_\lambda\bigl(w^{(s)}\bigr)
\right],
\qquad
w^{(s)}\sim q_\lambda,
\label{eq:app_fullrank_mc_elbo}
\end{equation}
and performs stochastic gradient optimization with respect to \(\lambda\).

\paragraph{Constrained parameters.}
As in the normalizing-flow formulation, the scalar parameters are represented in unconstrained form via
\begin{equation}
\rho=\sigma(\eta_\rho),\qquad
\beta=e^{\eta_\beta},\qquad
\gamma=e^{\eta_\gamma},\qquad
\label{eq:app_fullrank_transforms}
\end{equation}
with the appropriate Jacobian contribution included automatically in the transformed density \(\bar p(\cD,w)\).

\paragraph{Practical configuration.}
In our implementation, full-rank ADVI is run with the same relaxed likelihood and prior specification as the normalizing-flow method, so that the comparison isolates the effect of the variational family. The main tuning parameters are the maximum number of optimization iterations, the number of Monte Carlo samples used for gradient estimates, the number of Monte Carlo samples used for ELBO evaluation, the stopping tolerance on relative objective improvement, and the length of the adaptation phase.

\subsection{Normalizing-flow variational inference details}
\label{app:nf_posterior_derivations}

We provide the detailed construction of the normalizing-flow variational posterior used for the relaxed local partial-order model.

\paragraph{Variables and objective.}
We infer
\[
\Theta = (Z,\rho,\beta,\gamma),
\]
where \(Z \in \mathbb{R}^{m\times d}\) denotes the latent Gaussian coordinates,
\(\rho\) controls the induced covariance structure of \(U(Z,\rho)\),
\(\beta\) is the frontier-softmax sharpness,
\(\gamma\) is the soft-precedence sharpness.
The variational objective is the ELBO
\begin{equation}
\mathcal L(\phi)
=
\mathbb E_{q_\phi(\Theta)}
\Big[
\log p(\cD,\Theta)-\log q_\phi(\Theta)
\Big].
\label{eq:app_elbo_theta}
\end{equation}

\paragraph{Unconstrained coordinates.}
To handle positivity and unit-interval constraints, we work in unconstrained coordinates
\begin{equation}
w=
\bigl(\operatorname{vec}(Z),\eta_\rho,\eta_\beta,\eta_\gamma\bigr)\in\R^q,
\qquad q=md+3,
\end{equation}
with transforms
\begin{equation}
\rho=(1-\rho_{\mathrm{tol}})\sigma(\eta_\rho),\qquad
\beta=e^{\eta_\beta},\qquad
\gamma=e^{\eta_\gamma}
\quad \text{(if inferred)}.
\label{eq:app_scalar_transforms}
\end{equation}
Let \(T(w)=\Theta\) denote the overall transform from unconstrained to constrained parameters. The corresponding transformed joint density is
\begin{equation}
\log \bar p(\cD,w)
=
\log p\bigl(\cD,T(w)\bigr)
+
\log\left|\det J_T(w)\right|.
\label{eq:app_transformed_joint}
\end{equation}

\paragraph{Base variational family and neural spline flows.}
We begin from a diagonal Gaussian base distribution
\begin{equation}
z_0 = \mu + \sigma \odot \xi,
\qquad
\xi \sim \cN(0,I_q),
\label{eq:app_base_sample}
\end{equation}
where \(\mu,\sigma\in\R^q\) are variational parameters.
We then apply \(K\) neural spline flow layers:
\begin{equation}
z_k = T_k(z_{k-1}),
\qquad
k=1,\dots,K,
\label{eq:app_nsf_flow}
\end{equation}
where each \(T_k\) is an invertible rational-quadratic spline transformation.

We use an autoregressive parameterisation. For each coordinate,
\begin{equation}
z_{k,i}
=
S_{k,i}\!\left(
z_{k-1,i};
\theta_{k,i}(z_{k-1,<i})
\right),
\qquad
i=1,\dots,q,
\label{eq:app_nsf_coordinate}
\end{equation}
where \(S_{k,i}\) is a monotone rational-quadratic spline and
\(\theta_{k,i}\) is produced by an autoregressive neural network.
The spline parameters define positive bin widths, positive bin heights, and positive knot derivatives, which guarantee monotonicity and invertibility.

Since the autoregressive Jacobian is triangular, the log-determinant is
\begin{equation}
\log\left|
\det
\frac{\partial z_k}{\partial z_{k-1}}
\right|
=
\sum_{i=1}^q
\log
\left|
\frac{\partial z_{k,i}}{\partial z_{k-1,i}}
\right|.
\label{eq:app_nsf_jacobian}
\end{equation}
Therefore, after \(K\) flow layers,
\begin{equation}
\log q_\phi(z_K)
=
\log q_0(z_0)
-
\sum_{k=1}^K
\sum_{i=1}^q
\log
\left|
\frac{\partial z_{k,i}}{\partial z_{k-1,i}}
\right|.
\label{eq:app_nsf_logq}
\end{equation}

We identify \(w=z_K\) as the final unconstrained sample.

\paragraph{Soft partial-order construction.}
Given \(U=U(Z,\rho)\), define the coordinate differences
\begin{equation}
\Delta_{ijr}=U_{ir}-U_{jr}.
\label{eq:app_delta}
\end{equation}
The soft-minimum margin is
\begin{equation}
M_{ij}(U;\tau)
=
-\tau \log \sum_{r=1}^d
\exp\!\left(-\frac{\Delta_{ijr}}{\tau}\right),
\label{eq:app_softmin}
\end{equation}
and the soft precedence score is
\begin{equation}
D_{ij}(U;\gamma,\tau)
=
\sigma\!\bigl(\gamma M_{ij}(U;\tau)\bigr),
\qquad
D_{ii}=0.
\label{eq:app_softprecedence}
\end{equation}

\paragraph{Trace likelihood.}
At step \(t\), let \(R_t\) be the remaining set and let
\(p_{\mathrm{rat}}(y_t\mid R_t,\Theta)\)
denote the relaxed frontier-based choice probability.
We then use the form
\begin{equation}
p(y_t\mid R_t,\Theta)
=
p_{\mathrm{rat}}(y_t\mid R_t,\Theta).
\label{eq:app_noisy_choice}
\end{equation}
The trace log-likelihood is therefore
\begin{equation}
\log p(\text{trace}\mid\Theta)
=
\sum_t \log p(y_t\mid R_t,\Theta).
\label{eq:app_trace_loglik}
\end{equation}
Across the full dataset \(\cD=\{y^{(n)}\}_{n=1}^N\), the joint log density takes the form
\begin{equation}
\log p(\cD,Z,\rho,\beta,\gamma)
=
\sum_{n=1}^{N}
\log p\bigl(y^{(n)} \mid U(Z,\rho),\beta,\gamma\bigr)
+
\log p(\rho)
+
\log p(\beta)
+
\log p(\gamma)
\sum_{x=1}^{m}
\log \cN\bigl(z_x;0,I_d\bigr).
\label{eq:app_logjoint_z}
\end{equation}

\paragraph{Monte Carlo ELBO estimator.}
Using \(S\) Monte Carlo samples \(w^{(s)}\sim q_\phi(w)\), we optimize
\begin{equation}
\widehat{\mathcal L}(\phi)
=
\frac{1}{S}\sum_{s=1}^{S}
\left[
\log \bar p(\cD,w^{(s)})
-
\log q_\phi(w^{(s)})
\right].
\label{eq:app_mc_elbo}
\end{equation}
This estimator is fully differentiable under the reparameterization induced by the Gaussian base distribution and the flow transforms.

\paragraph{Optimization details.}
We optimize the ELBO using \texttt{Adam} with gradient clipping and optional antithetic sampling.
The learning rate is scheduled as
\begin{equation}
\mathrm{lr}(t)
=
\mathrm{lr}_0\cdot \mathrm{warmup}(t)\cdot
\exp\!\bigl(\log(\alpha)\cdot \mathrm{progress}(t)\bigr),
\label{eq:app_lr_schedule}
\end{equation}
where \(\alpha\) is the final learning-rate factor.
To improve robustness, we use multiple random restarts, retain the best evaluation-ELBO checkpoint within each restart, apply early stopping, and keep the globally best restart for posterior summaries and decoded structures.

\subsection{Computational complexity and memory scaling}
\label{app:scaling_analysis}

We summarize the dominant computational and memory costs of the relaxed model and the
continuous inference procedures. Note that the costs for likelihood evaluation and storage for \hardmcmc{} are qualitatively the same as for \relaxedmcmc{}. The speedup we see in experiments is due to the more efficient MCMC methods available for differentiable targets. The speedup for the variational methods is because no likelihood evaluation is needed in simulation, once the variational approximation is constructed.

\paragraph{Per-draw relaxed likelihood cost.}
Fix one posterior draw \(\Theta=(Z,\rho,\beta,\gamma)\). Let \(M\) be the size of the global item
universe and let trace \(y=(y_1,\dots,y_T)\) have length \(T\). The relaxed likelihood is defined
through the soft-precedence scores \(D_U(z,x)\), the soft frontier weights \(\widetilde F_t(x;U)\),
the soft descendant counts \(\widetilde S_t(x;U)\), and the frontier-softmax step probabilities
\(\tilde p(y_t\mid y_{\ge t},U,\beta,\gamma)\); see Eqs.~\eqref{eq:soft_pairwise_precedence}. Given \(U=ZL_\rho^\top\), evaluating all pairwise soft-precedence scores requires
\[
O(M^2 d)
\]
work, since each ordered pair \((z,x)\) needs a soft minimum over \(d\) coordinates.
Storing the resulting matrix \(D_U\) costs
\[
O(M^2)
\]
memory.

A naive evaluation of one trace would recompute, at every step \(t\), the soft frontier
products and successor sums over the remaining set \(y_{\ge t}\), giving
\[
\sum_{t=1}^T O(|y_{\ge t}|^2)=O(T^3).
\]
However, this can be reduced to quadratic time by caching the current frontier scores and
successor counts. Define
\[
\phi_t(x)
:= \log \widetilde F_t(x;U)
= \sum_{z\in y_{\ge t}\setminus\{x\}} \log\!\bigl(1-D_U(z,x)\bigr),
\]
and
\[
\widetilde S_t(x;U)
:= \sum_{z\in y_{\ge t}\setminus\{x\}} D_U(x,z).
\]
After selecting item \(y_t\), for every remaining \(x\in y_{\ge t+1}\) we have
\[
\phi_{t+1}(x)=\phi_t(x)-\log\!\bigl(1-D_U(y_t,x)\bigr),
\qquad
\widetilde S_{t+1}(x)=\widetilde S_t(x)-D_U(x,y_t).
\]
Thus, after an \(O(T^2)\) initialization of \(\phi_1\) and \(\widetilde S_1\), each step only updates
the surviving items and normalizes the resulting frontier-softmax probabilities, costing
\(O(|y_{\ge t}|)\). Summing over \(t\) yields
\[
O(T^2)
\]
time per trace once \(D_U\) is available.

For a dataset \(D=\{y^{(n)}\}_{n=1}^N\), this gives the total per-draw likelihood cost
\[
C_{\mathrm{like}}
=
O\!\left(M^2 d + \sum_{n=1}^N T_n^2\right).
\]
In settings such as the Cloud benchmark where all traces are linear extensions of the same
global order, \(T_n=M\) and this simplifies to
\[
C_{\mathrm{like}} = O(M^2 d + N M^2).
\]

\paragraph{Memory footprint.}
The model state stores:
\begin{itemize}
    \item the embedding coordinates \(Z\in\mathbb R^{M\times d}\), costing \(O(Md)\),
    \item the soft-precedence matrix \(D_U\in[0,1]^{M\times M}\), costing \(O(M^2)\),
    \item per-trace work vectors such as \(\phi_t\), \(\widetilde S_t\), and frontier logits, costing \(O(M)\).
\end{itemize}
Hence the per-draw working memory is
\[
O(M^2 + Md),
\]
dominated by the \(M\times M\) soft-precedence matrix.

\paragraph{Inference-method complexity.}
The continuous parameter dimension is
\[
q = Md + 3,
\]
corresponding to \(\operatorname{vec}(Z)\) and the unconstrained scalar parameters
\((\eta_\rho,\eta_\beta,\eta_\gamma)\).

For \textbf{Relaxed-MCMC}, one gradient evaluation of the transformed log posterior
\(\log \bar p(D,w)\) costs \(O(C_{\mathrm{like}})\) up to a constant-factor reverse-mode autodiff
overhead. If \(L\) leapfrog steps are used per retained draw, then the per-draw cost is
\[
O(L\,C_{\mathrm{like}}).
\]

For \textbf{FullRank-VI}, the ELBO uses the same likelihood term plus a dense Gaussian
variational family \(q_\lambda(w)=\mathcal N(w;\mu,\Sigma)\). Storing the covariance costs
\[
O(q^2)=O(M^2 d^2),
\]
and each Monte Carlo ELBO sample costs
\[
O(C_{\mathrm{like}} + q^2)
\]
up to lower-order linear algebra terms.

For \textbf{Flow-VI}, the likelihood term is again \(O(C_{\mathrm{like}})\), with an additional
flow-transformation cost. For a \(K\)-layer flow with per-layer cost \(c_{\mathrm{flow}}(q)\), one
ELBO sample costs
\[
O\!\left(C_{\mathrm{like}} + K\,c_{\mathrm{flow}}(q)\right).
\]
For planar layers, \(c_{\mathrm{flow}}(q)=O(q)\); for autoregressive spline layers,
\(c_{\mathrm{flow}}(q)\) is typically \(O(qh)\), where \(h\) is the conditioner width.

\
The key scaling result is that, for fixed embedding dimension \(d\), the relaxed likelihood can
be evaluated in time quadratic in trace length after precomputing the pairwise soft-precedence
matrix. Thus the dominant model-level cost is
\[
O\!\left(M^2 d + \sum_{n=1}^N T_n^2\right),
\]
with memory
\[
O(M^2 + Md).
\]
This should be contrasted with the hard discrete model, whose bottleneck is not merely the
cost of a single likelihood evaluation but discrete posterior exploration over a combinatorial
state space of partial orders and linear extensions.

\section{Evaluation criteria}
\label{app:evaluation_criteria}

We evaluate methods using structural recovery, posterior fidelity, predictive fit, and computational cost. Structural metrics are used when a ground-truth partial order is available; posterior-fidelity metrics are used when long-run \hardmcmc{} is feasible; predictive metrics are reported for both synthetic and real datasets. All predictive quantities below are computed from retained posterior or variational draws, rather than from posterior-mean plug-in parameters.
\subsection{Evaluation metrics}
\label{app:evaluation_metrics}
\paragraph{Closure-level structural recovery.}
Let \(h^\star\) be the ground-truth partial order and \(\widehat h\) an inferred partial order. Since the closure-level precedence relation is the primary object of inference, we compare transitive closures:
\[
C^\star = TC(h^\star),
\qquad
\widehat C = TC(\widehat h).
\]
We report
\[
\mathrm{Precision}
=
\frac{|\widehat C \cap C^\star|}{|\widehat C|},
\qquad
\mathrm{Recall}
=
\frac{|\widehat C \cap C^\star|}{|C^\star|},
\]
and
\[
\mathrm{F1}
=
\frac{2\,\mathrm{Precision}\,\mathrm{Recall}}
{\mathrm{Precision}+\mathrm{Recall}}.
\]
Higher values indicate better closure recovery.

\paragraph{Posterior fidelity to \hardmcmc{}.}
On small synthetic instances where long-run hard partial-order MCMC is feasible, we use \hardmcmc{} as the reference posterior. For method \(a\), define
\[
H^{(s,a)}_{ij}
=
\mathbf{1}\{i \succ j \text{ in the closure induced by draw } s\},
\qquad i\neq j,
\]
and estimate posterior pairwise precedence probabilities by
\[
\widehat P^{(a)}_{ij}
=
\frac{1}{S}
\sum_{s=1}^{S}
H^{(s,a)}_{ij}.
\]
Let \(\widehat P^{(\mathrm{hard})}\) denote the corresponding matrix from \hardmcmc{}. We define
\begin{equation}
\mathrm{MAE}_{\mathrm{hard}}(a)
=
\frac{1}{M(M-1)}
\sum_{i\neq j}
\left|
\widehat P^{(a)}_{ij}
-
\widehat P^{(\mathrm{hard})}_{ij}
\right|,
\label{eq:mae_hard}
\end{equation}
where \(M\) is the number of items. Lower values indicate closer agreement with the hard Bayesian reference posterior. We also report Pearson and Spearman correlations between the off-diagonal entries of \(\widehat P^{(a)}\) and \(\widehat P^{(\mathrm{hard})}\).

\paragraph{Trace-level negative log-likelihood.}
For held-out traces \(\mathcal D_{\mathrm{test}}\) and retained draws \(\{\Theta^{(s)}\}_{s=1}^{S}\), we compute posterior-predictive trace-level negative log-likelihood as
\[
\mathrm{Trace\mbox{-}NLL}
=
-\frac{1}{|\mathcal D_{\mathrm{test}}|}
\sum_{y\in\mathcal D_{\mathrm{test}}}
\log
\left[
\frac{1}{S}
\sum_{s=1}^{S}
p(y\mid \Theta^{(s)})
\right].
\]
Lower values indicate better predictive fit for complete held-out traces.

\paragraph{Step-level negative log-likelihood.}
For a held-out trace \(y=(y_1,\dots,y_{T_y})\), let
\[
R_t(y)=S_y\setminus\{y_1,\dots,y_{t-1}\}
\]
be the remaining choice set after the observed prefix, where \(S_y\) is the item set for that trace. For retained posterior or variational draws
\(\{\Theta^{(b)}\}_{b=1}^B\), the posterior-predictive probability of the next observed item is
\[
\bar p_t(y)
=
\frac{1}{B}
\sum_{b=1}^{B}
p\!\left(y_t \mid R_t(y),\Theta^{(b)}\right).
\]
We define
\[
\mathrm{Step\mbox{-}NLL}
=
-
\frac{1}{N_{\mathrm{step}}}
\sum_{y\in\mathcal D_{\mathrm{test}}}
\sum_{t\in\mathcal T_y}
\log \bar p_t(y),
\qquad
N_{\mathrm{step}}
=
\sum_{y\in\mathcal D_{\mathrm{test}}}|\mathcal T_y|.
\]
In our Cloud experiments, \(\mathcal T_y=\{1,\dots,T_y\}\), so the first action is scored from the empty prefix. If one wants to score only nonempty prefixes, the same definition applies with \(\mathcal T_y=\{2,\dots,T_y\}\).

\paragraph{WAIC.}
For Bayesian methods, we report WAIC using pointwise log-likelihoods over retained draws. Let
\[
\ell_{is} = \log p(y^{(i)} \mid \Theta^{(s)})
\]
be the log-likelihood contribution of observation \(i\) under draw \(s\). We compute
\[
\mathrm{lppd}
=
\sum_i
\log
\left(
\frac{1}{S}
\sum_{s=1}^{S}
\exp(\ell_{is})
\right),
\qquad
p_{\mathrm{WAIC}}
=
\sum_i
\mathrm{Var}_{s=1,\dots,S}(\ell_{is}),
\]
and
\[
\mathrm{WAIC}
=
-2(\mathrm{lppd}-p_{\mathrm{WAIC}}).
\]
Lower WAIC indicates better estimated out-of-sample predictive performance. Large \(p_{\mathrm{WAIC}}\) indicates high per-draw log-likelihood variability and should be interpreted as a warning that the posterior approximation may be unstable for WAIC.

\paragraph{Per-draw predictive models.}
For \hardmcmc{}, predictive probabilities are evaluated using the hard frontier-softmax model on retained hard closure draws. For \relaxedmcmc{}, \fullrankvi{}, and \flowvi{}, predictive probabilities are evaluated using the relaxed frontier-softmax model on each retained soft-precedence draw. Non-Bayesian baselines, when included, are evaluated at their fitted point estimate and are not assigned posterior-fidelity or WAIC uncertainty summaries.

\paragraph{Runtime.}
Runtime is wall-clock time for fitting the method and producing the posterior samples or variational draws used for evaluation. All runtime comparisons are reported in seconds.

\subsection{Incomparable-pair coverage and trace sufficiency}
\label{app:ipcov}

We use incomparable-pair coverage (IP-Cov) to quantify trace diversity in settings where a ground-truth partial order is available.

\paragraph{Ground-truth incomparable pairs.}
Let \(h^\star=(S,\succ_{h^\star})\) be the ground-truth strict partial order. A pair \((i,j)\) is incomparable if neither item reaches the other in the transitive closure:
\begin{equation}
\mathcal P_{\parallel}(h^\star)
=
\{(i,j): 1\le i<j\le m,\; i\not\succ_{h^\star} j,\; j\not\succ_{h^\star} i\}.
\label{eq:incomparable_pairs_eval}
\end{equation}

\paragraph{Order induced by a trace.}
Each trace \(y\in\mathcal D\) induces a total order over the items appearing in that trace. We write
\[
i\succ_y j
\]
if both \(i\) and \(j\) appear in \(y\) and \(i\) appears before \(j\).

\paragraph{Incomparable-pair coverage.}
Incomparable-pair coverage measures how often true incomparable pairs are observed in both relative orders across the dataset:
\begin{equation}
\mathrm{IP\text{-}Cov}(\mathcal D;h^\star)
=
\frac{1}{|\mathcal P_{\parallel}(h^\star)|}
\sum_{(i,j)\in\mathcal P_{\parallel}(h^\star)}
\mathbf 1
\left[
\exists y,y'\in\mathcal D:
(i\succ_y j)\wedge (j\succ_{y'} i)
\right].
\label{eq:ipcov_eval}
\end{equation}
Intuitively, IP-Cov is the fraction of ground-truth incomparable pairs for which the data contain evidence in both directions. Such evidence helps distinguish genuine incomparability from strict precedence.

\paragraph{Trace sufficiency.}
We say that a trace set is \emph{incomparability-sufficient} for \(h^\star\) if
\begin{equation}
\mathrm{IP\text{-}Cov}(\mathcal D;h^\star)=1.
\label{eq:ipcov_sufficiency_eval}
\end{equation}
This condition does not by itself guarantee perfect recovery under finite data or model mismatch. Rather, it ensures that every ground-truth incomparable pair is witnessed in both relative orderings, providing the necessary evidence to distinguish concurrency from strict precedence. In our synthetic and reference-trace experiments, IP-Cov is used as a controllable measure of trace diversity.

\subsection{Non-Bayesian structural baselines}
\label{app:structural_baselines}
We compare against two non-Bayesian structural baselines: Majority and Continuous DAG Learning. These baselines produce a directed cover graph rather than a Bayesian posterior. We therefore evaluate them only using closure-level precision, recall, F1, and runtime.

\subsubsection{Majority baseline.}
\label{alg:majority_baseline_instruct}
The Majority baseline estimates pairwise precedence frequencies from the observed traces. Let
\[
\widehat r_{ij}
=
\frac{
\sum_{y\in\mathcal D}\mathbf 1[i\text{ appears before }j\text{ in }y]
}{
\sum_{y\in\mathcal D}\mathbf 1[i,j\in y]
}.
\]
We add edge \(i\to j\) when \(\widehat r_{ij}>\tau\) and \(\widehat r_{ij}>\widehat r_{ji}\), with default \(\tau=0.5\). Edge weights are set to \(|\widehat r_{ij}-0.5|\), so more decisive pairwise majorities receive larger weights. See algorithm \ref{alg:majority_baseline}

\begin{algorithm}[h]
\caption{\textsc{CycleBreakAndCover}(\(A,W\))}
\label{alg:cycle_break_cover}
\begin{algorithmic}[1]
\Require adjacency \(A\in\{0,1\}^{n\times n}\), weights \(W\in\mathbb R_{\ge 0}^{n\times n}\)
\Ensure DAG cover graph \(\widehat H\)
\While{\(\textsc{HasCycle}(A)\)}
    \State \(C \gets \textsc{FindCycle}(A)\)
    \State \((u,v) \gets \arg\min_{(i,j)\in C} W_{ij}\)
    \State \(A_{uv} \gets 0\)
\EndWhile
\State \(\widehat C \gets \mathrm{TC}(A)\)
\State \(\widehat H \gets \mathrm{TR}(\widehat C)\)
\State \Return \(\widehat H\)
\end{algorithmic}
\end{algorithm}

\begin{algorithm}[h]
\caption{Majority pairwise-precedence baseline}
\label{alg:majority_baseline}
\begin{algorithmic}[1]
\Require traces \(\mathcal D=\{y^{(t)}\}_{t=1}^N\), threshold \(\theta_{\mathrm{maj}}\)
\Ensure DAG cover graph \(\widehat H_{\mathrm{maj}}\)
\State Initialize \(C,T\in\mathbb R^{n\times n}\) to zero
\For{each trace \(y\in\mathcal D\)}
    \State compute positions \(\mathrm{pos}_y(\cdot)\)
    \For{all \(i\ne j\) co-occurring in \(y\)}
        \State \(T_{ij}\gets T_{ij}+1\)
        \If{\(\mathrm{pos}_y(i)<\mathrm{pos}_y(j)\)}
            \State \(C_{ij}\gets C_{ij}+1\)
        \EndIf
    \EndFor
\EndFor
\State Initialize \(A,W\in\mathbb R^{n\times n}\) to zero
\For{all unordered pairs \(\{i,j\}\)}
    \If{\(T_{ij}=0\) or \(T_{ji}=0\)}
        \State \textbf{continue}
    \EndIf
    \State \(\widehat r_{ij}\gets C_{ij}/T_{ij}\), \(\widehat r_{ji}\gets C_{ji}/T_{ji}\)
    \If{\(\widehat r_{ij}>\theta_{\mathrm{maj}}\) and \(\widehat r_{ij}>\widehat r_{ji}\)}
        \State \(A_{ij}\gets 1\), \(W_{ij}\gets |\widehat r_{ij}-\widehat r_{ji}|\)
    \ElsIf{\(\widehat r_{ji}>\theta_{\mathrm{maj}}\) and \(\widehat r_{ji}>\widehat r_{ij}\)}
        \State \(A_{ji}\gets 1\), \(W_{ji}\gets |\widehat r_{ji}-\widehat r_{ij}|\)
    \EndIf
\EndFor
\State \(\widehat H_{\mathrm{maj}}\gets \textsc{CycleBreakAndCover}(A,W)\)
\State \Return \(\widehat H_{\mathrm{maj}}\)
\end{algorithmic}
\end{algorithm}

\subsubsection{SoftDAG-Frontier diagnostic baseline}
\label{app:softdag}

SoftDAG-Frontier is a MAP diagnostic baseline inspired by continuous DAG
learning~\citep{zheng2018dags,lorch2021dibs}. Unlike the proposed Bayesian
product-order model, it directly parameterizes a soft adjacency matrix. Let
\(A\in\mathbb{R}^{n\times n}\) be free edge logits and define
\[
W_{ij}=\sigma(A_{ij}), \qquad W_{ii}=0.
\]
We encourage acyclicity using the NOTEARS penalty
\[
h(W)=\operatorname{tr}\{\exp(W\circ W)\}-n.
\]
To use the same trace observation model as the proposed method, we convert
\(W\) into a soft reachability matrix
\[
R_K(W)
=
1-\exp\!\left\{-\sum_{\ell=1}^{K} W^\ell\right\},
\qquad R_{ii}=0,
\]
with \(K=\min(8,n-1)\). We substitute \(R_K(W)\) for the soft precedence
matrix in the relaxed frontier likelihood:
\[
\widetilde F_t(x;R)
=
\prod_{z\in R_t\setminus\{x\}}
(1-R_{zx}),
\]
\[
\widetilde S_t(x;R)
=
\sum_{z\in R_t\setminus\{x\}} R_{xz},
\qquad
\widetilde Q_t(x;R)=\log(1+\widetilde S_t(x;R)).
\]
The model is fit by minimizing
\[
\mathcal L(A,\beta)
=
-\log \widetilde p(D\mid R_K(W),\beta)
+
\lambda_1\|W\|_1
+
\lambda_h\, h(W)^2.
\]
This baseline is not Bayesian and does not provide posterior uncertainty over
closure relations. We therefore evaluate it using closure recovery, held-out
NLL, and runtime, but not posterior-fidelity MAE or WAIC.

\textbf{Implementation details.}
We use \(K=\min(8,n-1)\), three random restarts of Adam, and validation NLL
for hyperparameter selection over \(\lambda_1\in\{10^{-4},10^{-3},10^{-2}\}\)
and \(\lambda_h\in\{1,10,100\}\). Because the trace likelihood already
identifies the scale of \(\beta\), we parameterize \(\beta=\exp(\eta_\beta)\)
and learn \(\eta_\beta\) jointly with \(A\) for both synthetic and cloud
experiments, matching the corresponding setting of the proposed method which
also infers \(\beta\) jointly with the partial-order representation. We
approximate the matrix exponential in the acyclicity penalty by its Taylor
truncation to degree \(\max(K,12)\), which is positive and equals zero iff
\(W\) is acyclic, and is autodiff-friendly. The soft adjacency logits are
initialised at \(A_{ij}\!\sim\!\mathcal{N}(-2,0.1^2)\) so that \(W\) starts
sparse, and we run Adam for 400 steps at learning rate \(0.05\).
Hyperparameter selection never touches the test set or ground-truth
structure; selection is done on validation step-NLL.

\textbf{Optimization and model selection.}
For each synthetic instance, SoftDAG-Frontier is fit by MAP optimization with Adam. Hyperparameters
are selected using an 80/20 split of the training traces into train and validation subsets, using
validation Step-NLL only; neither the test traces nor the ground-truth closure are used for model
selection. The grid search varies the sparsity and acyclicity penalties. For scalability, the
optimization budget is reduced with \(n\): we use \(600\) steps and \(3\) restarts for
\(n\leq 30\), \(400\) steps and \(2\) restarts for \(n=50\), and \(250\) steps and \(1\) restart
for \(n=100\). All 36 synthetic configurations completed.

\textbf{Decoding.}
Unlike the Bayesian methods, SoftDAG-Frontier returns a single MAP soft-reachability matrix rather
than posterior precedence probabilities. The primary reported closure F1 uses
\(\zeta=0.5\) for \(n<100\) and
\(\zeta=1/3\) for \(n=100\). We use the lower threshold at
\(n=100\) because SoftDAG produces MAP soft-reachability scores rather than
posterior closure probabilities, and its large-\(n\) scores are not calibrated
on the same scale as the Bayesian posterior means. 

\textbf{Observed degeneracies.}
The SoftDAG grid reveals two characteristic instabilities of this generic MAP baseline. First, at
large \(n\), some runs converge to a symmetric uninformative solution. In the two degenerate
\(n=100\) cases, the decoded closure contains zero directed edges and the test Step-NLL is
approximately \(3.6374\). This value matches the uniform frontier prediction
\[
\frac{1}{100}\sum_{r=1}^{100}\log r
=
\frac{\log(100!)}{100}
\approx 3.637 ,
\]
which is the per-step NLL obtained when all remaining actions receive equal score. Thus the zero
F1 cases are not caused by an evaluation bug: the optimizer has found an uninformative symmetric
solution, so the decoder has no strict directed preference to threshold.

Second, for \(n=5\), the train/validation split is extremely small: after the 80/20 split, the
baseline is selected from only two training traces and one validation trace. Several selected runs
therefore have very large learned inverse-temperature parameters \(\beta\), indicating a highly
unstable MAP objective under very limited data. This explains the high variance of SoftDAG at
small \(n\). By contrast, the Bayesian methods average over posterior uncertainty and are less
sensitive to these small-sample validation artifacts.

\section{Experiment}

\subsection{Synthetic Dataset}
For each \((n,\rho)\) setting, we generated three random instances and summarized the resulting numbers of training and test traces, the target trace budget 2n, and the realized incomparable-pair coverage. See Table \ref{tab:synthetic_generation_summary}. 
\begin{table}[t]
\centering
\small
\renewcommand{\arraystretch}{1.12}
\caption{Synthetic trace-generation summary across three seeds for each \((n,\rho)\) setting.
\texttt{train} and \texttt{test} are the mean numbers of training and test traces, respectively, with ranges across seeds shown in parentheses.
\(\mathrm{IP\text{-}Cov}\) is the mean incomparable-pair coverage, with the min--max range across seeds in parentheses.}
\label{tab:synthetic_generation_summary}
\begin{tabular}{rrrrr}
\toprule
\(n\) & \(\rho\) & train & test & mean IP-Cov \\
\midrule
5   & 0.5 & 5.0   \,(5--5)      & 1.0  \,(1--1)    & 1.00 \,(1.00--1.00) \\
5   & 0.9 & 5.0   \,(5--5)      & 1.0  \,(1--1)    & 1.00 \,(1.00--1.00) \\
10  & 0.5 & 10.0  \,(10--10)    & 2.0  \,(2--2)    & 1.00 \,(1.00--1.00) \\
10  & 0.9 & 10.0  \,(10--10)    & 2.0  \,(2--2)    & 1.00 \,(1.00--1.00) \\
20  & 0.5 & 20.0  \,(20--20)    & 4.0  \,(4--4)    & 1.00 \,(1.00--1.00) \\
20  & 0.9 & 20.0  \,(20--20)    & 4.0  \,(4--4)    & 1.00 \,(1.00--1.00) \\
30  & 0.5 & 30.0  \,(30--30)    & 6.0  \,(6--6)    & 1.00 \,(1.00--1.00) \\
30  & 0.9 & 30.0  \,(30--30)    & 6.0  \,(6--6)    & 1.00 \,(1.00--1.00) \\
50  & 0.5 & 50.0  \,(50--50)    & 10.0 \,(10--10)  & 0.99 \,(0.99--1.00) \\
50  & 0.9 & 50.0  \,(50--50)    & 10.0 \,(10--10)  & 1.00 \,(1.00--1.00) \\
100 & 0.5 & 124.0 \,(109--132)  & 24.7 \,(22--26)  & 0.98 \,(0.97--0.98) \\
100 & 0.9 & 100.0 \,(100--100)  & 20.0 \,(20--20)  & 1.00 \,(1.00--1.00) \\
\bottomrule
\end{tabular}
\end{table}

\subsubsection{Full synthetic grid results}
\label{app:synthetic_full_grid}

Table~\ref{tab:full-synthetic-grid-unified} reports the full synthetic grid results by graph size \(n\), latent correlation \(\rho\), and inference method. Results are averaged over the completed random seeds for each configuration. The hard sampler is used as the Bayesian reference whenever it is feasible; therefore its MAE to hard MCMC is zero by definition. For relaxed methods, MAE measures the average absolute difference between posterior pairwise precedence probabilities and the corresponding hard-MCMC posterior probabilities. Structural recovery is reported as closure-level F1 after thresholding posterior precedence probabilities at two decoding thresholds, \(0.5\) and \(1/3\). Predictive fit is measured by held-out trace-level NLL, step-level NLL, and WAIC. Runtime is wall-clock fitting time in seconds.

\begin{table}[h]
\centering
\scriptsize
\setlength{\tabcolsep}{3pt}
\caption{Full synthetic grid under the posterior-predictive evaluator. Trace NLL, Step NLL, and WAIC are computed from each method's retained posterior or variational draws on fixed train/test splits. Lower is better for MAE, NLLs, WAIC, and runtime; higher is better for $F_1$.}
\label{tab:full-synthetic-grid-unified}

\resizebox{\textwidth}{!}{%
\begin{tabular}{@{}lrrrrrrrrrr@{}}
\toprule
Method & $n$ & $\rho$ & Seeds
& MAE $\downarrow$
& $F_1@0.5 \uparrow$
& $F_1@1/3 \uparrow$
& Trace NLL $\downarrow$
& Step NLL $\downarrow$
& WAIC $\downarrow$
& Time (s) $\downarrow$ \\
\midrule
\hardmcmc{} & 5 & 0.5 & 3 & 0.000 & 1.000 & 1.000 & 3.338 & 0.668 & 35.96 & 87.58 \\
\relaxedmcmc{} & 5 & 0.5 & 3 & 0.082 & 0.889 & 0.917 & 3.903 & 0.772 & 41.92 & 0.60 \\
\fullrankvi{} & 5 & 0.5 & 3 & 0.094 & 0.889 & 0.886 & 3.952 & 0.783 & 45.67 & 3.93 \\
\flowvi{} & 5 & 0.5 & 3 & 0.089 & 0.889 & 0.827 & 3.939 & 0.768 & -- & 17.21 \\
\addlinespace

\hardmcmc{} & 5 & 0.9 & 3 & 0.000 & 1.000 & 1.000 & 2.003 & 0.400 & 21.33 & 89.73 \\
\relaxedmcmc{} & 5 & 0.9 & 3 & 0.055 & 1.000 & 0.952 & 2.738 & 0.547 & 25.90 & 0.71 \\
\fullrankvi{} & 5 & 0.9 & 3 & 0.060 & 0.974 & 0.952 & 2.749 & 0.548 & 28.91 & 3.61 \\
\flowvi{} & 5 & 0.9 & 3 & 0.065 & 0.883 & 0.923 & 2.513 & 0.507 &-- & 18.79 \\
\addlinespace

\hardmcmc{} & 10 & 0.5 & 3 & 0.000 & 1.000 & 1.000 & 9.000 & 0.897 & 183.70 & 587.10 \\
\relaxedmcmc{} & 10 & 0.5 & 3 & 0.078 & 0.944 & 0.923 & 10.164 & 1.015 & 211.92 & 6.31 \\
\fullrankvi{} & 10 & 0.5 & 3 & 0.096 & 0.927 & 0.848 & 10.441 & 1.040 & 222.89 & 21.32 \\
\flowvi{} & 10 & 0.5 & 3 & 0.128 & 0.844 & 0.763 & 10.222 & 1.029 &-- & 21.79 \\
\addlinespace

\hardmcmc{} & 10 & 0.9 & 3 & 0.000 & 1.000 & 1.000 & 5.464 & 0.552 & 112.65 & 546.65 \\
\relaxedmcmc{} & 10 & 0.9 & 3 & 0.106 & 0.856 & 0.936 & 6.265 & 0.623 & 133.69 & 13.43 \\
\fullrankvi{} & 10 & 0.9 & 3 & 0.049 & 0.974 & 0.965 & 6.242 & 0.622 & 142.86 & 27.42 \\
\flowvi{} & 10 & 0.9 & 3 & 0.058 & 0.974 & 0.956 & 6.119 & 0.595 &-- & 27.82 \\
\addlinespace

\hardmcmc{} & 20 & 0.5 & 3 & 0.000 & 0.993 & 1.000 & 29.337 & 1.465 & 1170.41 & 3209.21 \\
\relaxedmcmc{} & 20 & 0.5 & 3 & 0.036 & 0.988 & 0.951 & 30.052 & 1.502 & 1228.63 & 215.20 \\
\fullrankvi{} & 20 & 0.5 & 3 & 0.062 & 0.899 & 0.892 & 31.420 & 1.568 & 1332.57 & 104.39 \\
\flowvi{} & 20 & 0.5 & 3 & 0.142 & 0.765 & 0.727 & 39.460 & 1.887 &-- & 46.65 \\
\addlinespace

\hardmcmc{} & 20 & 0.9 & 3 & 0.000 & 1.000 & 1.000 & 19.632 & 0.982 & 784.06 & 3461.29 \\
\relaxedmcmc{} & 20 & 0.9 & 3 & 0.033 & 0.984 & 0.963 & 20.328 & 1.018 & 840.82 & 211.48 \\
\fullrankvi{} & 20 & 0.9 & 3 & 0.051 & 0.959 & 0.928 & 21.115 & 1.056 & 896.88 & 122.40 \\
\flowvi{} & 20 & 0.9 & 3 & 0.042 & 0.974 & 0.958 & 21.633 & 1.052 &-- & 41.49 \\
\addlinespace

\hardmcmc{} & 30 & 0.5 & 2 & 0.000 & 0.976 & 0.987 & 47.045 & 1.567 & 2825.70 & 10792.52 \\
\relaxedmcmc{} & 30 & 0.5 & 2 & 0.027 & 0.983 & 0.983 & 47.661 & 1.588 & 2897.07 & 2422.53 \\
\fullrankvi{} & 30 & 0.5 & 2 & 0.070 & 0.895 & 0.888 & 51.038 & 1.698 & 3302.83 & 488.10 \\
\flowvi{} & 30 & 0.5 & 2 & 0.050 & 0.912 & 0.903 & 48.903 & 1.631 &-- & 45.88 \\
\addlinespace

\hardmcmc{} & 30 & 0.9 & 3 & 0.000 & 0.998 & 0.999 & 32.838 & 1.095 & 1984.08 & 11383.83 \\
\relaxedmcmc{} & 30 & 0.9 & 3 & 0.023 & 0.990 & 0.975 & 33.405 & 1.113 & 2045.19 & 1980.36 \\
\fullrankvi{} & 30 & 0.9 & 3 & 0.042 & 0.958 & 0.945 & 35.726 & 1.192 & 2266.04 & 477.45 \\
\flowvi{} & 30 & 0.9 & 3 & 0.017 & 0.991 & 0.985 & 33.218 & 1.107 &-- & 46.04 \\
\addlinespace

\hardmcmc{} & 50 & 0.5 & 3 & 0.000 & 0.917 & 0.932 & 96.793 & 1.936 & 9683.92 & 53353.65 \\
\relaxedmcmc{} & 50 & 0.5 & 2 & 0.174 & 0.492 & 0.489 & 88.778 & 1.776 & 8909.54 & 30937.44 \\
\fullrankvi{} & 50 & 0.5 & 3 & 0.056 & 0.934 & 0.923 & 98.634 & 1.971 & 10335.71 & 7251.24 \\
\flowvi{} & 50 & 0.5 & 3 & 0.057 & 0.906 & 0.898 & 99.151 & 1.970 & -- & 45.92 \\
\addlinespace
\hardmcmc{} & 50 & 0.9 & 2 & 0.000 & 0.995 & 0.997 & 69.440 & 1.389 & 6951.46 & 108577.87 \\
\relaxedmcmc{} & 50 & 0.9 & 1 & 0.019 & 0.995 & 0.984 & 68.270 & 1.366 & 6809.87 & 66433.96 \\
\fullrankvi{} & 50 & 0.9 & 3 & 0.022 & 0.986 & 0.976 & 67.850 & 1.358 & 7005.71 & 5265.21 \\
\flowvi{} & 50 & 0.9 & 3 & 0.022 & 0.983 & 0.981 & 67.789 & 1.345 & -- & 45.86 \\
\bottomrule
\end{tabular}%
}
\begin{minipage}{0.98\textwidth}
\scriptsize
\textit{Note.}
The evaluator is described in Appendix~E. \hardmcmc{} evaluates the hard frontier-softmax step model on retained binary closure draws after burn-in; \relaxedmcmc{}, \fullrankvi{}, and \flowvi{} evaluate the relaxed frontier-softmax step model on per-draw soft precedence matrices. Trace NLL uses the posterior-predictive held-out trace likelihood; Step NLL averages per-draw next-step probabilities before taking $-\log$ and pooling over held-out trace steps. WAIC uses $-2(\mathrm{lppd}-p_{\mathrm{WAIC}})$ on the training split. Closure MAE is the off-diagonal mean absolute deviation of each method's posterior mean closure from the \hardmcmc{} posterior mean closure, so it is zero by definition for \hardmcmc{}. WAIC is omitted for \flowvi{} because its large $p_{\mathrm{WAIC}}$ indicates high per-draw log-likelihood variance.
\end{minipage}
\end{table}

\begin{figure}[t]
    \centering
    \includegraphics[width=0.95\textwidth]{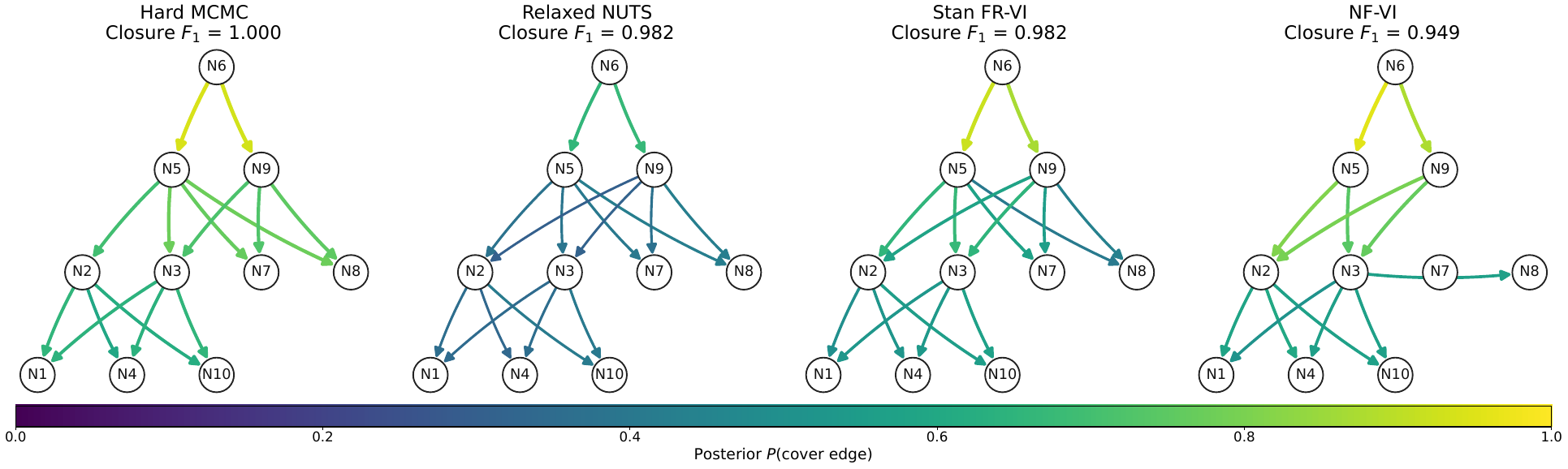}
    \caption{
    Representative synthetic Hasse diagrams for \(n=10\), \(\rho=0.9\). 
    \hardmcmc{} is the reference posterior; \relaxedmcmc{} and \fullrankvi{} use the relaxed model with \(\tau=0.5\). 
    Edge color encodes posterior cover-edge probability. Both relaxed methods recover structures closely aligned with \hardmcmc{}, though with more diffuse posterior confidence.
    }
    \label{fig:synthetic_n10_hasse}
\end{figure}

\subsubsection{Ablation study: normalizing-flow sensitivity to \texorpdfstring{$\tau$}{tau} across graph sizes}
\label{sec:nf_tau_sensitivity}

We study how the performance of the normalizing-flow variational posterior depends on the smoothing temperature \(\tau\) across synthetic partial-order datasets of different sizes. Averaged over two representative random seeds, the best closure-level F1 is obtained at \(\tau=0.1\) for \(n=10\) and \(n=20\), with mean F1 \(=0.982\) in both cases. For larger graphs, the preferred temperature shifts upward: the best mean F1 is \(0.897\) at \((n=50,\tau=0.5)\) and \(0.828\) at \((n=100,\tau=0.7)\). This pattern suggests that smaller problems favor sharper relaxations, whereas larger graphs benefit from additional smoothing.

\subsubsection{Ablation: trace coverage and partial-order identifiability.}
\label{sec:ipcov-ablation}
The incomparable-pair coverage (IP-Cov) measures the fraction of truly
incomparable pairs that are observed in \emph{both} relative orderings across
the training traces. When IP-Cov is below one, some reorderable pairs appear in
only a single orientation, and the partial order becomes only partially
identifiable from the observations: such pairs are observationally
indistinguishable from genuine precedence constraints, regardless of which
inference algorithm is applied.

Figure~\ref{fig:ipcov-ablation} quantifies this effect on synthetic instances
with $n=30$, $\rho=0.5$, and $\tau=0.5$. As IP-Cov rises from $0.7$ to $1.0$,
closure-level recovery improves sharply for every method that integrates over
the latent graph. Relaxed NUTS and \hardmcmc{}, which both produce posterior samples over decoded partial order, recover the true closure almost perfectly at IP-Cov
$=1.0$, while their held-out NLL drops to the level of the oracle hard model.
The same trend holds for Stan~fullrank and NF-VI, but with a smaller absolute
gain. These results indicate that accurate latent graph recovery is
fundamentally trace-bound: when the trace set witnesses every incomparability
in both orientations, all four inference families converge to high-quality
posteriors; below that threshold, the limiting factor is the information
content of the traces, not the choice of inference algorithm.

\begin{figure*}[t]
    \centering
    \includegraphics[width=\textwidth]{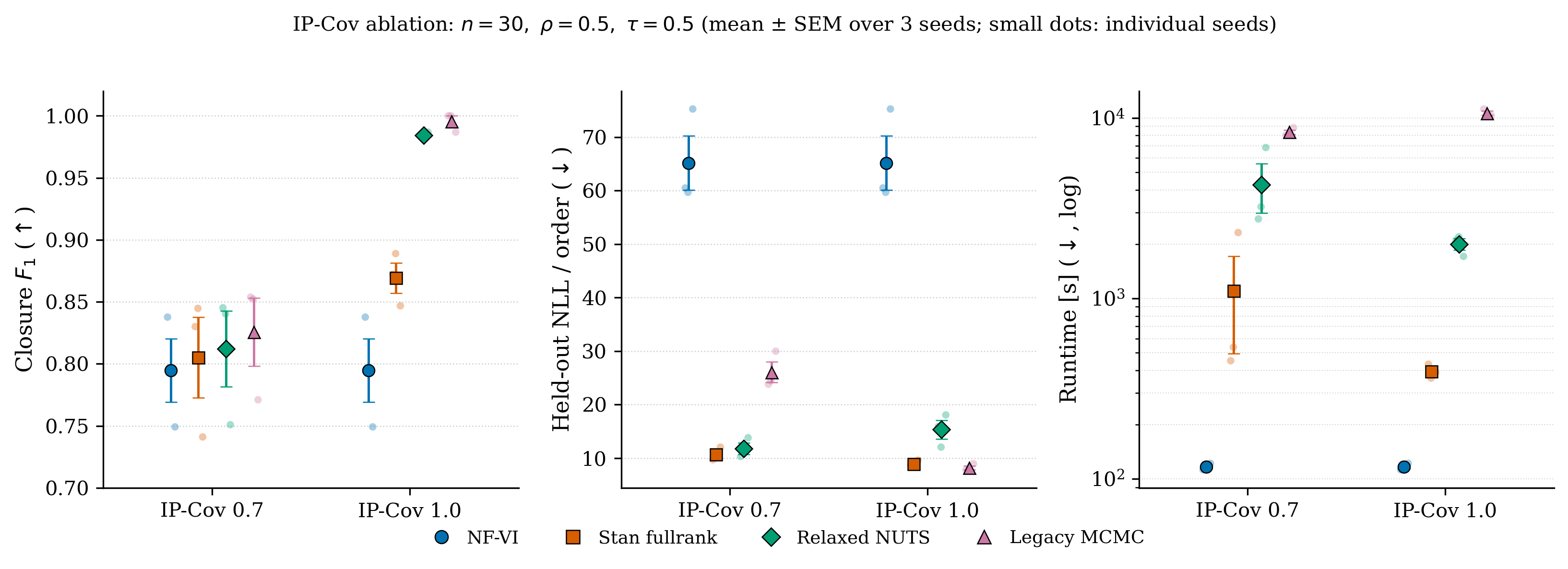}
    \caption{%
    \textbf{Trace-coverage ablation on the latent partial order.}
    Synthetic instances with $n{=}30$, $\rho{=}0.5$, $\tau{=}0.5$, three seeds.
    Each panel shows mean $\pm$ SEM (markers) and individual seeds (small
    dots) for four inference methods at two coverage levels.
    \textit{Left:} closure $F_1$; \textit{middle:} held-out negative log
    likelihood per test order under the hard frontier-softmax model
    (a single, common evaluation likelihood is used for all methods to ensure
    apples-to-apples comparison); \textit{right:} wall-clock runtime
    (log scale).
    Increasing IP-Cov from $0.7$ to $1.0$ resolves the partial-identifiability
    of incomparable pairs and yields near-perfect closure recovery for the
posterior samplers (Relaxed~NUTS, Legacy~MCMC); held-out NLL drops by
    a factor of roughly three across all methods, while runtime is largely
    unchanged.
    }
    \label{fig:ipcov-ablation}
\end{figure*}

\subsubsection{Relaxation-temperature ablation}
\label{sec:tau-ablation}

We vary the relaxation temperature
$\tau\in\{0.1,0.3,0.5,1.0\}$ on synthetic instances with
$n=30$, $\rho=0.5$, and IP-Cov $\approx 1.0$, so that the traces contain
sufficient information to identify the true graph. Figure~\ref{fig:tau-ablation}
shows the expected sharpness--smoothness trade-off. Smaller $\tau$ gives better
hard-model fidelity and closure recovery: Relaxed NUTS at
$\tau\in\{0.1,0.3\}$ nearly matches the \hardmcmc{} reference
($F_1>0.99$ and hard-frontier NLL $\approx 49.5$ versus $49.3$), while using
roughly $4$--$5\times$ less runtime. FullRank-VI also improves as $\tau$
decreases, but remains near $F_1\approx 0.9$, suggesting a structural-recovery
limit from the Gaussian variational family. In contrast, $\tau=1.0$ clearly
over-smooths the relaxation: hard-frontier NLL increases to roughly $90$ and
closure recovery degrades. Thus $\tau\approx 0.1$--$0.3$ gives the best balance
between approximating the hard partial-order target and maintaining tractable
continuous inference.

\begin{figure}[t]
    \centering
    \includegraphics[width=\linewidth]{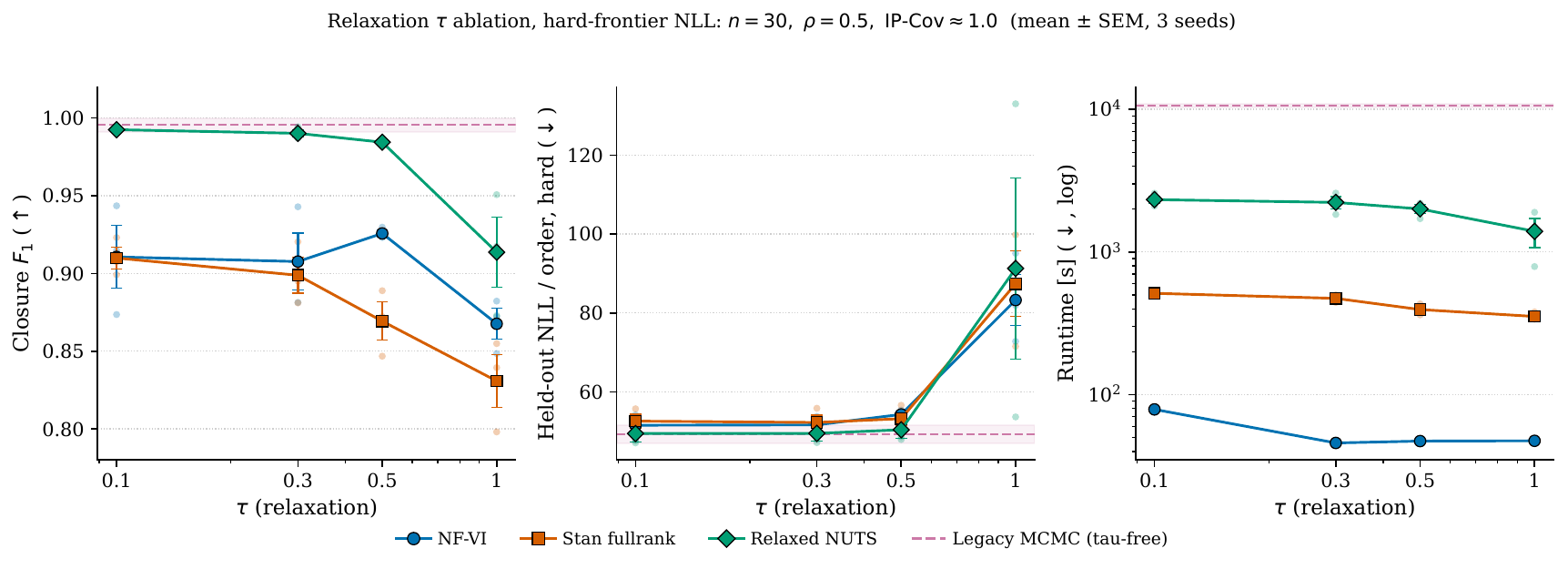}
    \caption{
    \textbf{Relaxation-temperature ablation.}
    We vary $\tau$ on synthetic instances with $n=30$, $\rho=0.5$, and
    IP-Cov $\approx 1.0$. Smaller $\tau$ gives a sharper approximation to the
    hard partial-order model and improves closure recovery. Relaxed NUTS with
    $\tau\in\{0.1,0.3\}$ nearly matches the \hardmcmc{} reference in closure
    $F_1$ and hard-frontier NLL while using substantially less runtime.
    Conversely, $\tau=1.0$ over-smooths the relaxation, causing worse NLL and
    degraded graph recovery. Error bars show SEM over seeds; faint points show
    individual seeds.
    }
    \label{fig:tau-ablation}
\end{figure}

\begin{figure}[t]
    \centering
    \includegraphics[width=\textwidth]{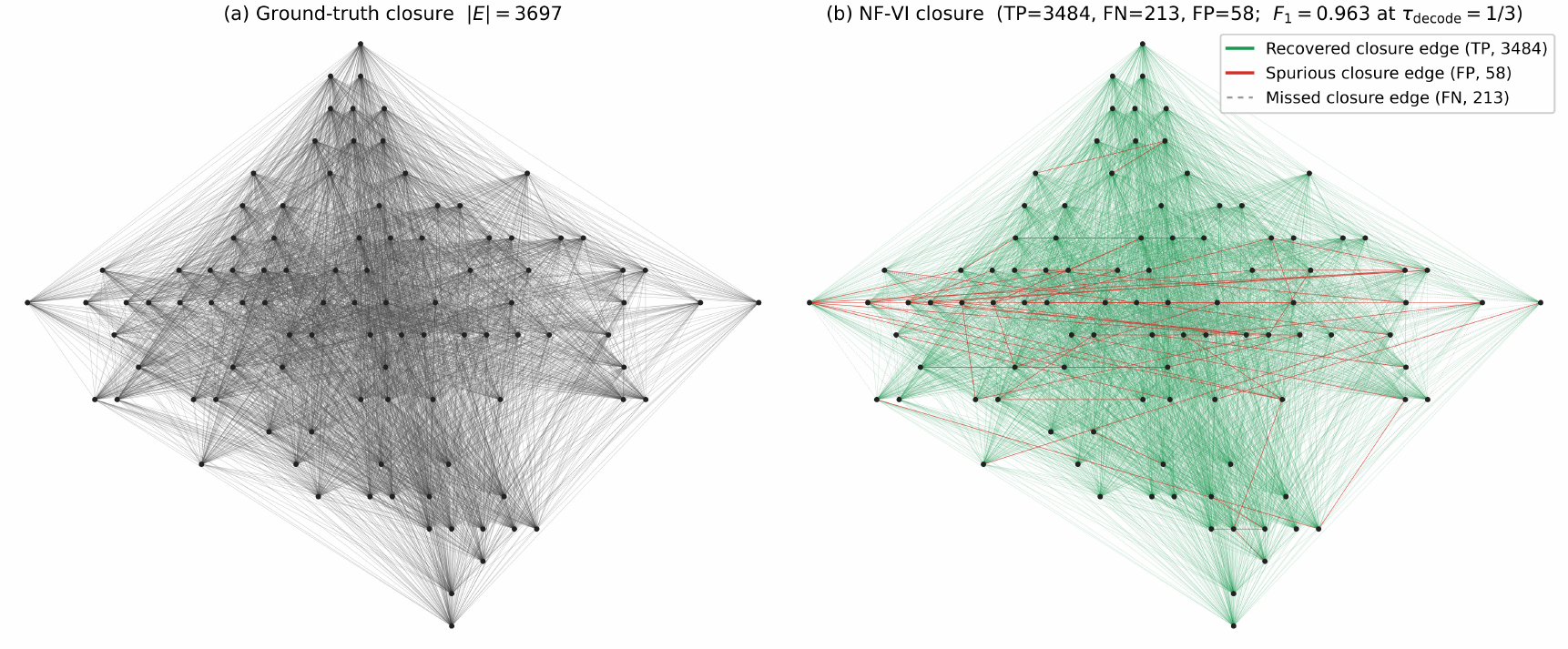}
    \caption{
    \textbf{NF-VI recovers the latent partial order at \(n=100\).}
    Best NF-VI run (\(\rho=0.9\), seed \(19\), \(\tau=0.3\), training IP-Cov \(\geq 0.97\)).
    Both panels share the same node positions, computed by Graphviz \texttt{dot} on the ground-truth cover graph.
    \emph{(a)} All true closure (transitive-closure) edges (\(|E|=3697\)).
    \emph{(b)} NF-VI closure decoded from the posterior pairwise-precedence probabilities at threshold \(1/3\), 
    coloured by edge type: 
    \emph{recovered (TP)} \(=3484\) (green); 
    \emph{spurious (FP)} \(=58\) (red); 
    \emph{missed (FN)} \(=213\) (light dashed).
    Closure-level \(F_1=0.963\) (precision \(0.984\), recall \(0.942\)).
    NF-VI is the only method that completes inference at this scale (mean wall-clock \(\approx 67\,\mathrm{s}\)).
    }
    \label{fig:flow_n100_closure}
\end{figure}

\subsection{Bishops data details}
\label{app:bishops_details}
We use witness lists from the ``Royal Acta'' database created for \emph{The Charters of William II and Henry I} project by Richard Sharpe and Nicholas Karn \citep{sharpe2014charters,nicholls2025royalacta}. The database records dated witness lists from legal documents issued in England and Wales during the eleventh and twelfth centuries. Each list is associated with a date, although the dating is sometimes uncertain; in such cases the data include lower and upper bounds on the plausible date range. We take all lists dated 1134-1138 CE and involving all 45 witnesses who appeared in 5 or more lists. The partial order has $M=45$ elements, there are $N=68$ lists and the longest list has $32$ items/witnesses.

Each individual in the database is associated with a profession or title, such as Queen, Archbishop, and so on; individuals without a recorded title are grouped into the category \emph{other}. Witness names are written down in order of social status. The witness lists reflect a strong but not necessarily total social hierarchy: some individuals or title groups reliably precede others, while many relations remain uncertain, indirect, or context-dependent. This makes the dataset a natural real-world example of latent partial-order inference rather than ordinary total-order ranking.

The dataset is also computationally relevant for our purposes. Prior Bayesian work on partial orders from random linear extensions reports that counting linear extensions is typically feasible only up to about \(20\) actors (in an MCMC-setting, where it has to be done thousands or even millions of times, bigger posets are countable as a one-off), whereas larger random partial orders can become especially difficult to analyze exactly \citep{nicholls2025royalacta}. Our instance contains 45 actors, placing it beyond the regime where hard discrete
partial-order inference is routinely comfortable. We therefore use this dataset to assess whether the differentiable relaxation extends Bayesian partial-order modeling to larger and more challenging real ranking problems. Figure~\ref{fig:bishop_data_sample} illustrates the form of the raw data used in the Bishops experiment. Each document contributes an ordered witness list, such as the one shown here, where higher-status individuals typically appear earlier. Our model does not assume that the full social hierarchy is totally ordered; instead, each observed list is treated as one linear extension of an underlying latent partial order.
\begin{figure}[t]
    \centering
    \includegraphics[width=0.5\linewidth]{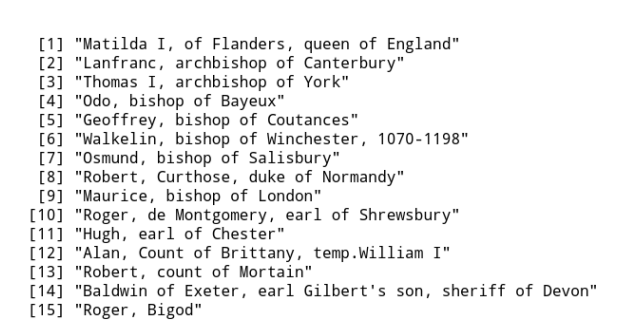}
    \caption{Example of an observed witness list from the Bishops / Royal Acta corpus. Each entry is a named witness recorded in the order in which the document presents them. In the model, such a list is treated as an observed linearization of an underlying latent social hierarchy, rather than as evidence for a single total order.}
    \label{fig:bishop_data_sample}
\end{figure}
\paragraph{Common settings.}
The relaxed methods are fit under the same relaxed partial-order likelihood
with $\tau=0.10$, $\rho\sim\mathrm{Beta}(2,2)$, and
$\gamma\sim\mathrm{Gamma}(2,1)$. Hard-MCMC instead targets the corresponding
hard frontier-softmax model. All partial-order methods fix $\beta=0$.

\begin{table*}[h]
\centering
\small
\renewcommand{\arraystretch}{1.15}
\caption{Inference settings for the Bishops experiment. The relaxed methods use \(d=3\), \(\tau=0.10\), \(\rho\sim\mathrm{Beta}(2,2)\), and \(\gamma\sim\mathrm{Gamma}(2,1)\). All partial-order methods fix \(\beta=0\), so frontier selection is uniform.}
\label{tab:bishops_inference_settings}
\begin{tabular}{p{0.18\linewidth}p{0.76\linewidth}}
\toprule
\textbf{Method} & \textbf{Settings} \\
\midrule
\hardmcmc{} &
\(10^{6}\) iterations; \(50\%\) burn-in; \(5000\) posterior draws after thinning; latent-dimension prior \(K\sim\mathrm{Poisson}(n/2)\) truncated to \([1,30]\); reversible-jump moves on \(K\) every 500 steps; likelihood variant \texttt{FRONTIER\_SOFTMAX\_UNI\_UTILITY}. \\

\relaxedmcmc{} (Stan NUTS) &
Assume \(d=3\); 4 chains; 2000 post-warmup draws per chain (8000 total); 1000 warmup iterations; \(\delta_{\mathrm{adapt}}=0.96\); maximum tree depth \(=11\). \\

\fullrankvi{} & Assume \(d=3\); \texttt{iter}=60{,}000; \texttt{tol\_rel\_obj}\(=5\times 10^{-4}\); 2000 output samples; three random seeds \(\{7,11,19\}\); reported result corresponds to the seed with the lowest WAIC (seed 19).\\

\flowvi{} &
\(d=3\); neural spline flow (NSF) posterior on unconstrained coordinates;
4 flow layers; 8 bins; hidden size 32; Adam optimizer with learning rate
\(0.002\); 1500 optimization steps; 4 Monte Carlo samples per gradient step
and 16 samples for ELBO evaluation; 2000 posterior draws used for summaries \\
\bottomrule
\end{tabular}
\end{table*}

\begin{figure}[t]
  \centering
  \includegraphics[width=\textwidth]{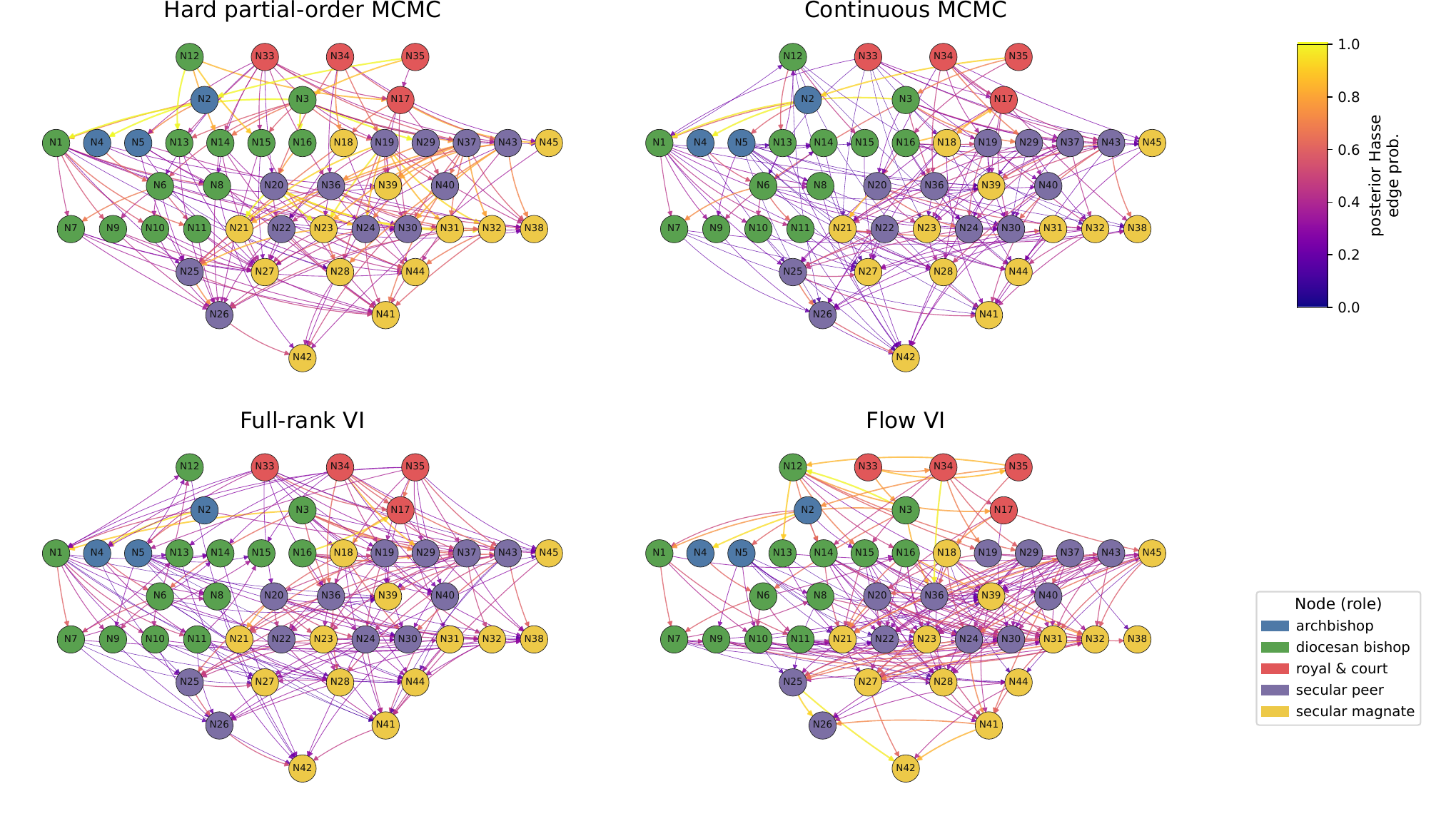}
\caption{Posterior Hasse diagrams for the Bishops corpus under the four inference methods. Edge colour and width encode posterior cover-edge probability. All four recover similar large-scale hierarchy structure, with \relaxedmcmc{} attaining the lowest WAIC.}
\label{fig:bishop_posterior_hasse}
\end{figure}

% Figure~\ref{fig:bishop_scalar_posterior} contrasts the scalar-parameter
% posteriors. \hardmcmc{} prefers noticeably larger \(\rho\) (posterior mode
% \(\sim 0.70\)) because it can also vary the latent dimension \(K\)
% , effectively moving
% part of the model's explanatory power from \(\gamma\) into higher-\(K\)
% components. Both Stan methods keep \(K=3\) fixed and consequently
% concentrate \(\rho\) at \(\sim 0.36\)--\(0.42\) and use a larger pairwise
% softness \(\gamma\) to compensate.
% \begin{figure}[t]
%   \centering
%   \includegraphics[width=0.95\textwidth]{figures/bishop_main_posterior_rho_gamma.pdf}
% \caption{Posterior densities of \(\rho\) and \(\gamma\) for the Bishops corpus. \hardmcmc{} favors larger \(\rho\), while \relaxedmcmc{} and \fullrankvi{} place posterior mass at smaller \(\rho\) and larger effective soft-precedence strength. This is consistent with the fact that \hardmcmc{} also varies the latent dimension \(K\), whereas the Stan-based methods keep \(K=3\) fixed. Vertical dotted lines indicate posterior means.}
% \label{fig:bishop_scalar_posterior}
% \end{figure}

\begin{figure}[h]
  \centering
  \includegraphics[width=\textwidth]{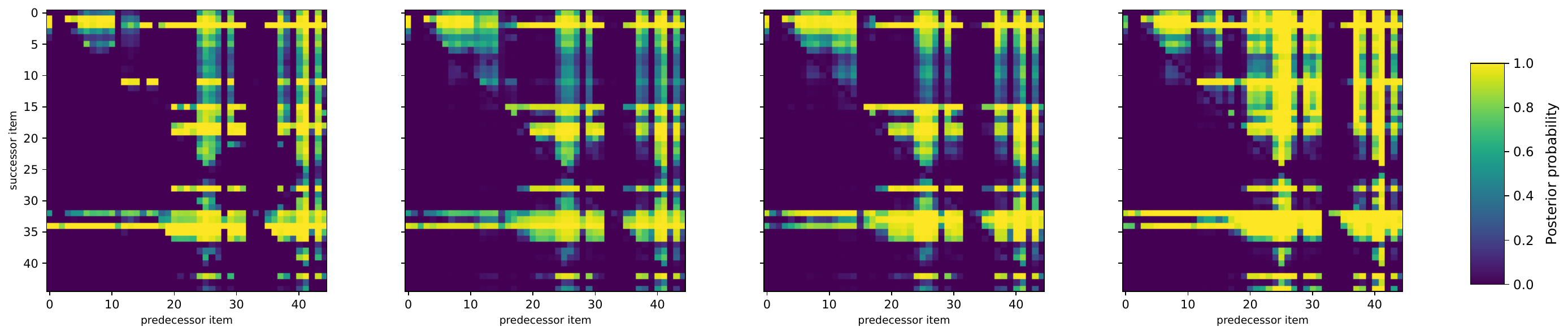}
\caption{Posterior pairwise precedence probabilities \(\mathbb{P}(i \succ j \mid \mathcal D)\) for the Bishops corpus under the four inference methods. Each heatmap shows, on a common \(0\)--\(1\) scale, the posterior probability that predecessor item \(i\) precedes successor item \(j\). The broadly similar banded patterns across panels indicate that the methods recover aligned closure-level structure.}
  \label{fig:bishop_closure_posterior}
\end{figure}

\paragraph{Plackett--Luce baselines.}
\label{app:pl_base}

As external total-order benchmarks, we fit finite-mixture Plackett--Luce models to the 68 Bishops witness lists using the Bayesian \texttt{PLMIX} sampler \citep{luce1959,plackett1975,mollica2017bayesian}. We consider \(G\in\{1,2,3\}\) mixture components, where \(G=1\) recovers the standard single-component Plackett--Luce model. We use weakly informative priors for the item support parameters and run Gibbs sampling to obtain posterior draws.

To make the predictive criterion directly comparable to the LLPO fits, we compute WAIC at the same pointwise resolution: one contribution per witness list rather than one contribution per comparable pair. For posterior draw \(s\), with item-worth vector \(w^{(s)}\), the log-likelihood contribution for witness list
\(y_i=(y_{i1},\ldots,y_{iL_i})\) is
\[
\ell_i^{(s)}
=
\sum_{t=1}^{L_i}
\left[
\log w^{(s)}_{y_{it}}
-
\log \sum_{k\in R_{it}} w^{(s)}_k
\right],
\qquad
R_{it}=\{y_{it},\ldots,y_{iL_i}\}.
\]
This is the usual Luce chain likelihood conditional on the actors appearing in that witness list. WAIC is then computed from the resulting posterior log-likelihood matrix with 68 columns \citep{watanabe2010}.

All PLMIX runs use \(15{,}000\) Gibbs iterations, \(3{,}000\) burn-in iterations, and \(2{,}000\) posterior draws for WAIC estimation. Table~\ref{tab:bishop_plmix_baselines} reports the results. The best PLMIX baseline is the single-component model, with WAIC \(=1588.55\). Adding mixture components does not improve predictive fit on this corpus: \(G=2\) and \(G=3\) have higher WAIC, partly due to larger effective complexity. The best LLPO fit, \relaxedmcmc{}, achieves WAIC \(=956.53\), a reduction of about \(632\) WAIC units relative to the best total-order PLMIX baseline.

\begin{table}[h]
\centering
\small
\caption{Plackett--Luce mixture baselines on the Bishops witness-list corpus. WAIC is computed at the witness-list level using the listwise Luce chain likelihood, matching the pointwise resolution of the LLPO likelihood. Lower WAIC is better.}
\label{tab:bishop_plmix_baselines}
\begin{tabular}{lrrrr}
\toprule
Model & WAIC & elpd & \(p_{\mathrm{WAIC}}\) & Time (s) \\
\midrule
PLMIX, \(G=1\) & 1588.55 & \(-794.28\) & 27.61  & 6.16 \\
PLMIX, \(G=2\) & 1820.51 & \(-910.26\) & 121.88 & 8.24 \\
PLMIX, \(G=3\) & 1733.55 & \(-866.77\) & 89.94  & 11.82 \\
\bottomrule
\end{tabular}
\end{table}

\subsection{Cloud agent-trace benchmark details}
\label{subsec:cloud_agent_trace}
\paragraph{Cloud-IaC-6 dataset.}
\label{app:cloud_iac_6_dataset}
We use \textbf{Cloud-IaC-6}, a cloud-provisioning benchmark from \citet{li2026delinearizingagenttracesbayesian}. The benchmark contains six Cloud Agent infrastructure-as-code scenarios, ranging from simple instance creation to high-availability cloud deployments.

The dataset contains 54 successful execution traces generated by multiple LLM agents, including Qwen-Plus, DeepSeek, and Kimi, to encourage diversity in valid execution orders. Expert-provided partial orders serve as reference structures for evaluation, with cover graphs shown in Figure~\ref{fig:aliyun-gt-covers}. The implementation and benchmark files are available in the anonymized repository.\footnote{\url{https://anonymous.4open.science/r/Cloud-IaC-6-5367/README.md}}

\paragraph{Agent Trace Recovery}
To complement the aggregate metrics, Figure~\ref{fig:aliyun_qualitative_example} shows a representative dual-zone ECS + SLB scenario at the level of the recovered Hasse diagram. In this case, \relaxedmcmc{}, \fullrankvi{}, and \flowvi{} all recover the reference structure exactly, but their posterior edge confidence differs substantially: \relaxedmcmc{} is the most confident, \fullrankvi{} is slightly less certain, and \flowvi{} is noticeably more diffuse. This highlights that exact decoded recovery does not imply the same posterior quality.

\begin{figure}[t]
  \centering
  \includegraphics[width=\textwidth]{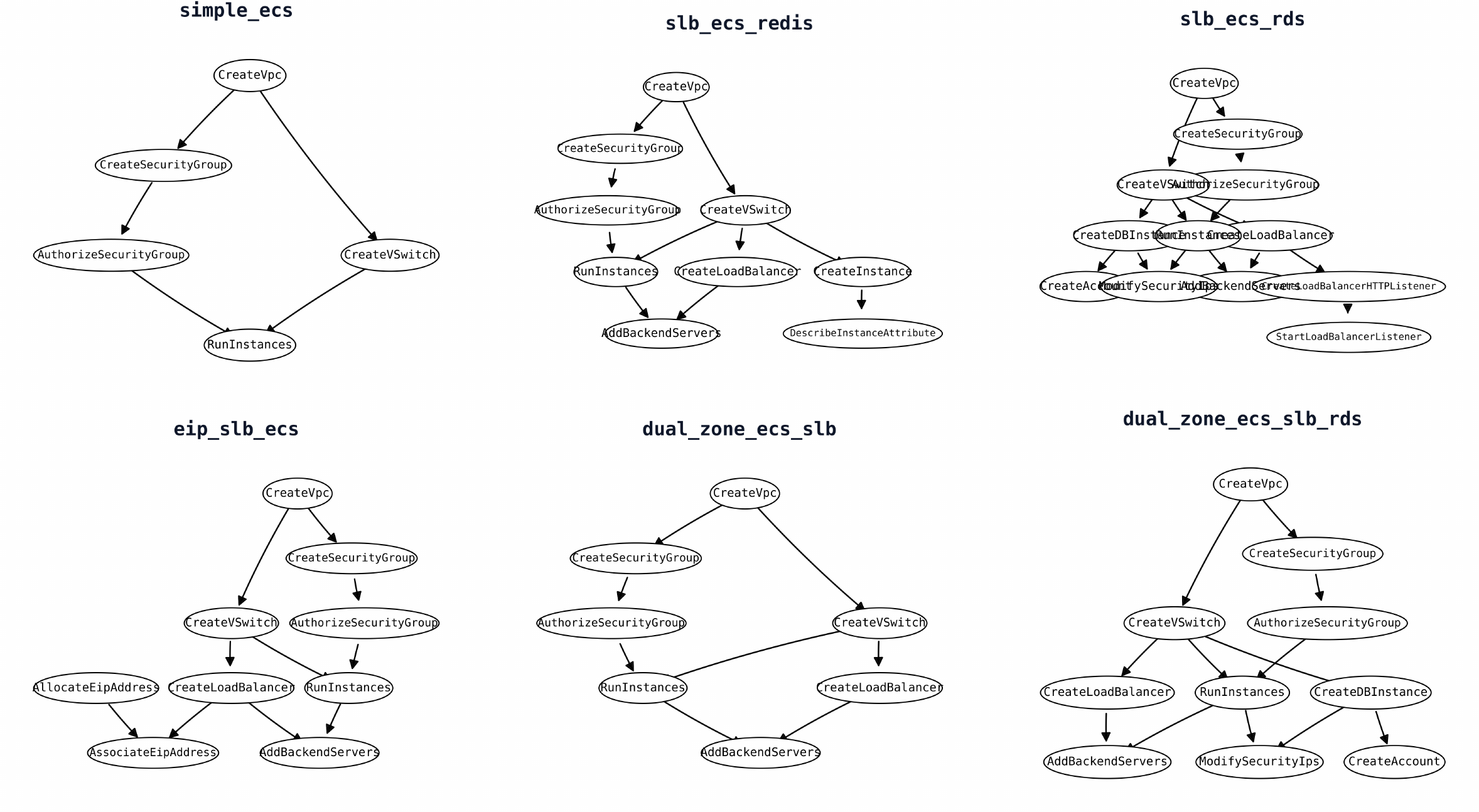}
\caption{Ground-truth action-precedence graphs (covers).
Nodes are cloud API actions; edges denote mandatory precedence constraints. from \citep{li2026delinearizingagenttracesbayesian}}
  \label{fig:aliyun-gt-covers}
\end{figure}

\begin{figure}[h]
    \centering
    \includegraphics[width=0.92\textwidth]{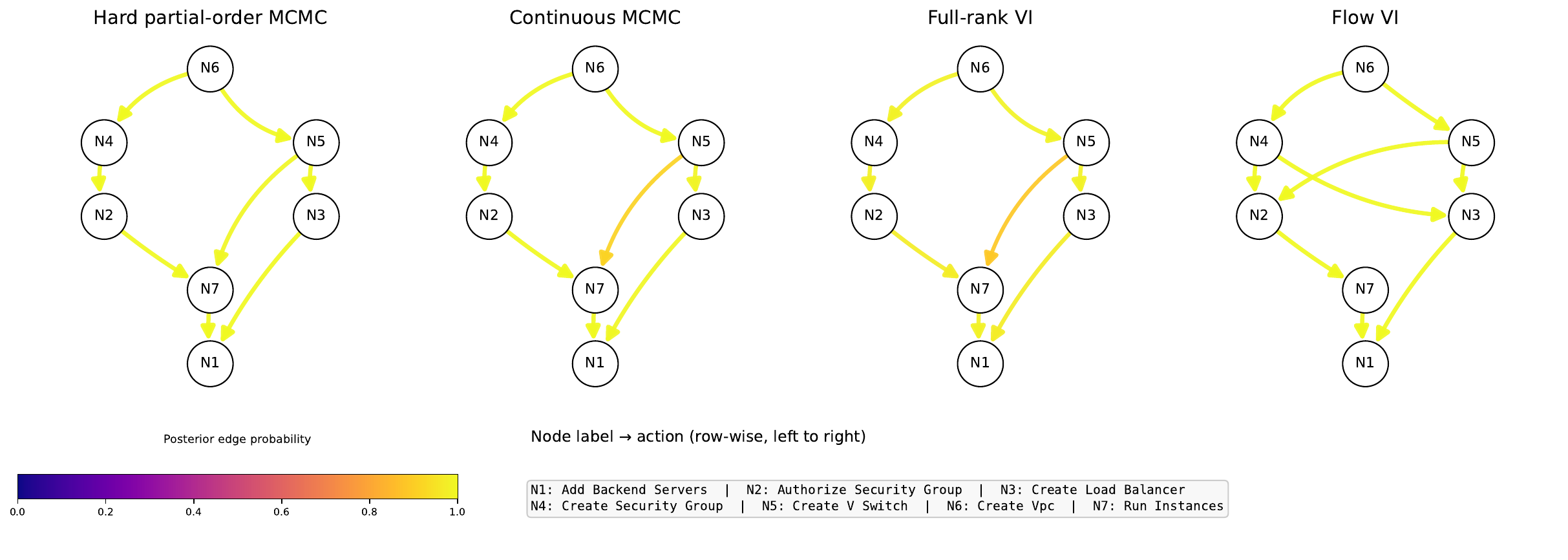}
\caption{Qualitative Cloud Agent Trace example: posterior Hasse-diagram summaries for one representative scenario dual-zone-ecs-slb. Edge color indicates posterior edge probability, and node labels correspond to cloud actions listed below the figure. The methods recover closely related skill-dependency structures, but differ in posterior confidence.}
    \label{fig:aliyun_qualitative_example}
\end{figure}

\paragraph{Convergence and optimization diagnostics}
\label{app:aliyun_prefix_anomaly}
To complement the aggregate cloud agent benchmark results, we provide representative convergence and optimization traces for one scenario, \texttt{simple\_ecs\_seed7}. Figure~\ref{fig:aliyun_convergence_traces} compares the trajectory of the objective or log-posterior quantity recorded by each inference method. These diagnostics are not intended as a formal convergence proof, but as a qualitative illustration of the different optimization and sampling behaviors underlying the runtime--accuracy trade-offs reported in the main text. In particular, they show that the four methods reach stable regimes on very different timescales and with markedly different variability.

\begin{figure*}[t]
    \centering
    \includegraphics[width=0.95\textwidth]{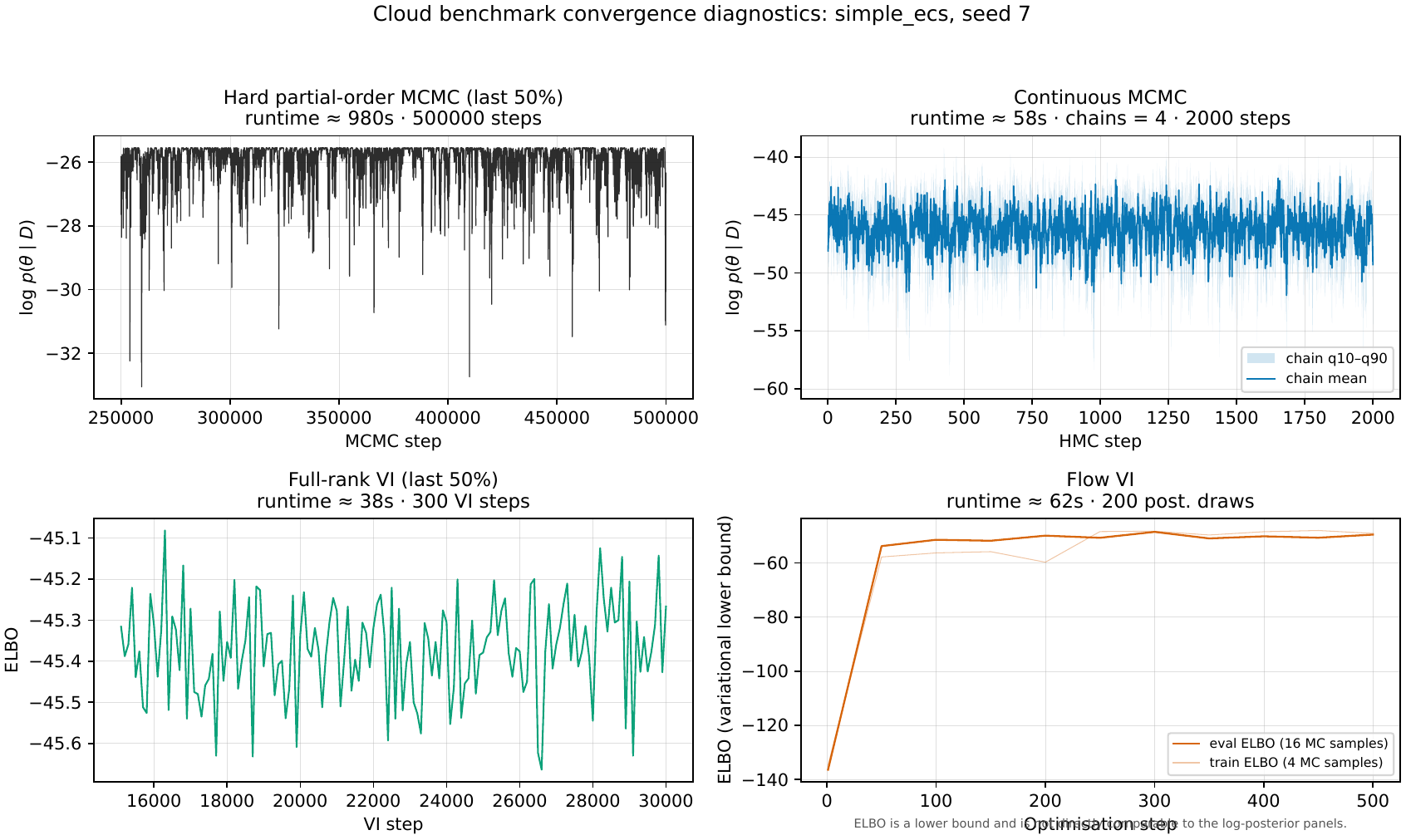}
    \caption{Representative convergence and optimization traces for the cloud agent scenario \texttt{simple\_ecs\_seed7}. The panels show the recorded objective or log-posterior trajectory for \hardmcmc{}, \relaxedmcmc{}, \fullrankvi{}, and \flowvi{}. \hardmcmc{} exhibits long-run stochastic exploration with occasional deeper excursions, \relaxedmcmc{} stabilizes around a lower objective level with persistent Monte Carlo variability, \fullrankvi{} shows a relatively stable variational objective near convergence, and \flowvi{} displays rapid early improvement followed by a slower plateau. These traces help explain the differing runtime and accuracy profiles observed in the main cloud agent experiment.}
    \label{fig:aliyun_convergence_traces}
\end{figure*}

\paragraph{Held-out next-action prediction}
\label{app:aliyun_nll}

To complement the structural-recovery results, we also evaluate predictive performance on
the Cloud benchmark through a held-out next-action prediction task. For each held-out trace,
we reveal a prefix \(y_{<t}\) and evaluate the posterior predictive probability assigned to the
next action \(y_t\), averaging over retained posterior or variational draws. We report the
corresponding next-action negative log-likelihood (Step NLL), so lower values indicate better
predictive performance.

Table~\ref{tab:aliyun-prefix-nll-mae} reports the per-scenario Step NLL and closure MAE. Under the posterior-predictive evaluator, all three relaxed methods improve next-action prediction relative to \hardmcmc{}. Averaged over the 12 Cloud cases, Step NLL is \(0.701\) for \hardmcmc{}, \(0.491\) for \relaxedmcmc{}, \(0.513\) for \fullrankvi{}, and \(0.544\) for \flowvi{}. Thus, the relaxed posterior gives better prefix-conditioned behavioral prediction, even when the hard sampler remains strongest for closure recovery.

\begin{table}[t]
\centering
\scriptsize
\setlength{\tabcolsep}{2.5pt}
\caption{Per-case next-step Step NLL and closure MAE for the 12 Cloud agent-trace cases. Step NLL is posterior-predictive over retained draws; MAE is the off-diagonal mean absolute deviation from the binary \texttt{true\_closure}.}
\label{tab:aliyun-prefix-nll-mae}
\begin{tabular}{ll|rr|rr|rr|rr}
\toprule
Scenario & Seed
& \multicolumn{2}{c|}{\hardmcmc{}}
& \multicolumn{2}{c|}{\relaxedmcmc{}}
& \multicolumn{2}{c|}{\fullrankvi{}}
& \multicolumn{2}{c}{\flowvi{}} \\
& & Step NLL & MAE & Step NLL & MAE & Step NLL & MAE & Step NLL & MAE \\
\midrule
\texttt{dual\_zone\_ecs\_slb}      &  7 & 0.292 & 0.000 & 0.302 & 0.006 & 0.309 & 0.010 & 0.252 & 0.069 \\
\texttt{dual\_zone\_ecs\_slb}      & 11 & 0.264 & 0.000 & 0.280 & 0.022 & 0.287 & 0.026 & 0.307 & 0.070 \\
\texttt{dual\_zone\_ecs\_slb\_rds} &  7 & 3.569 & 0.002 & 1.123 & 0.010 & 1.115 & 0.014 & 0.905 & 0.128 \\
\texttt{dual\_zone\_ecs\_slb\_rds} & 11 & 0.557 & 0.072 & 0.549 & 0.051 & 0.643 & 0.078 & 0.895 & 0.125 \\
\texttt{eip\_slb\_ecs}             &  7 & 0.568 & 0.001 & 0.571 & 0.025 & 0.578 & 0.030 & 0.724 & 0.144 \\
\texttt{eip\_slb\_ecs}             & 11 & 0.574 & 0.002 & 0.598 & 0.026 & 0.603 & 0.030 & 0.641 & 0.098 \\
\texttt{simple\_ecs}               &  7 & 0.173 & 0.000 & 0.189 & 0.005 & 0.195 & 0.009 & 0.302 & 0.048 \\
\texttt{simple\_ecs}               & 11 & 0.173 & 0.000 & 0.173 & 0.020 & 0.182 & 0.028 & 0.223 & 0.051 \\
\texttt{slb\_ecs\_rds}             &  7 & 0.666 & 0.003 & 0.677 & 0.018 & 0.688 & 0.033 & 0.625 & 0.063 \\
\texttt{slb\_ecs\_rds}             & 11 & 0.504 & 0.003 & 0.509 & 0.013 & 0.611 & 0.132 & 0.678 & 0.155 \\
\texttt{slb\_ecs\_redis}           &  7 & 0.531 & 0.029 & 0.434 & 0.035 & 0.445 & 0.046 & 0.469 & 0.129 \\
\texttt{slb\_ecs\_redis}           & 11 & 0.541 & 0.029 & 0.486 & 0.024 & 0.498 & 0.027 & 0.508 & 0.067 \\
\midrule
\textbf{MEAN} & -- & 0.701 & 0.012 & 0.491 & 0.021 & 0.513 & 0.039 & 0.544 & 0.096 \\
\bottomrule
\end{tabular}
\end{table}

\paragraph{Why closure fidelity and predictive NLL can disagree}
\label{sec:aliyun-metrics-calibration}

The Cloud benchmark reports both \emph{structural} metrics (closure F1 and MAE to the reference closure) and \emph{predictive} metrics (Trace NLL, Step NLL). These metrics need not rank methods in the same order. Closure metrics evaluate recovery of the latent prerequisite structure, whereas Step NLL evaluates posterior predictive next-action probabilities from observed prefixes under the same frontier-softmax evaluator.

The difference arises from the multiplicative geometry of frontier-based prediction. Closure metrics average pairwise errors linearly, so a small number of incorrectly oriented edges may have limited effect on the overall structural score. By contrast, Step NLL is a log score: in the frontier-softmax model, an overconfident wrong edge can strongly suppress the probability of the correct next action through the frontier weight, producing a large log penalty. Thus sharper posterior structure can improve closure recovery while hurting predictive calibration, whereas smoother posterior uncertainty can worsen decoded closure but improve next-action prediction.

For this reason, we report both families of metrics. When the goal is latent graph recovery or workflow governance, closure fidelity is the primary criterion. When the goal is sequential prediction or agent-trace continuation, Step NLL and WAIC are more informative.

\paragraph{Scenario-wise scalar posterior densities on Cloud-IaC-6}
\label{app:aliyun_scalar_posteriors}

To complement the aggregate Cloud-IaC-6 metrics in the main text, Figure~\ref{fig:aliyun_scalar_posteriors_grid} shows scenario-wise posterior densities for the scalar parameters \(\rho\), \(\beta\), and \(\gamma\) across the six benchmark tasks. These plots make clear that the relative behavior of the inference methods is not uniform across scenarios. In particular, the \hardmcmc{}, \relaxedmcmc{}, \fullrankvi{}, and \flowvi{} can place posterior mass in noticeably different regions of parameter space even when they recover similar closure structure.

This comparison is useful for two reasons. First, it shows how the inferred embedding correlation \(\rho\), frontier-softmax sharpness \(\beta\), and soft-precedence steepness \(\gamma\) vary with workflow complexity. Second, it helps explain why exact decoded recovery alone does not fully characterize posterior quality: methods that recover the same partial order may still differ substantially in posterior concentration and uncertainty.

\begin{figure}[!ht]
    \centering
    \includegraphics[width=\textwidth]{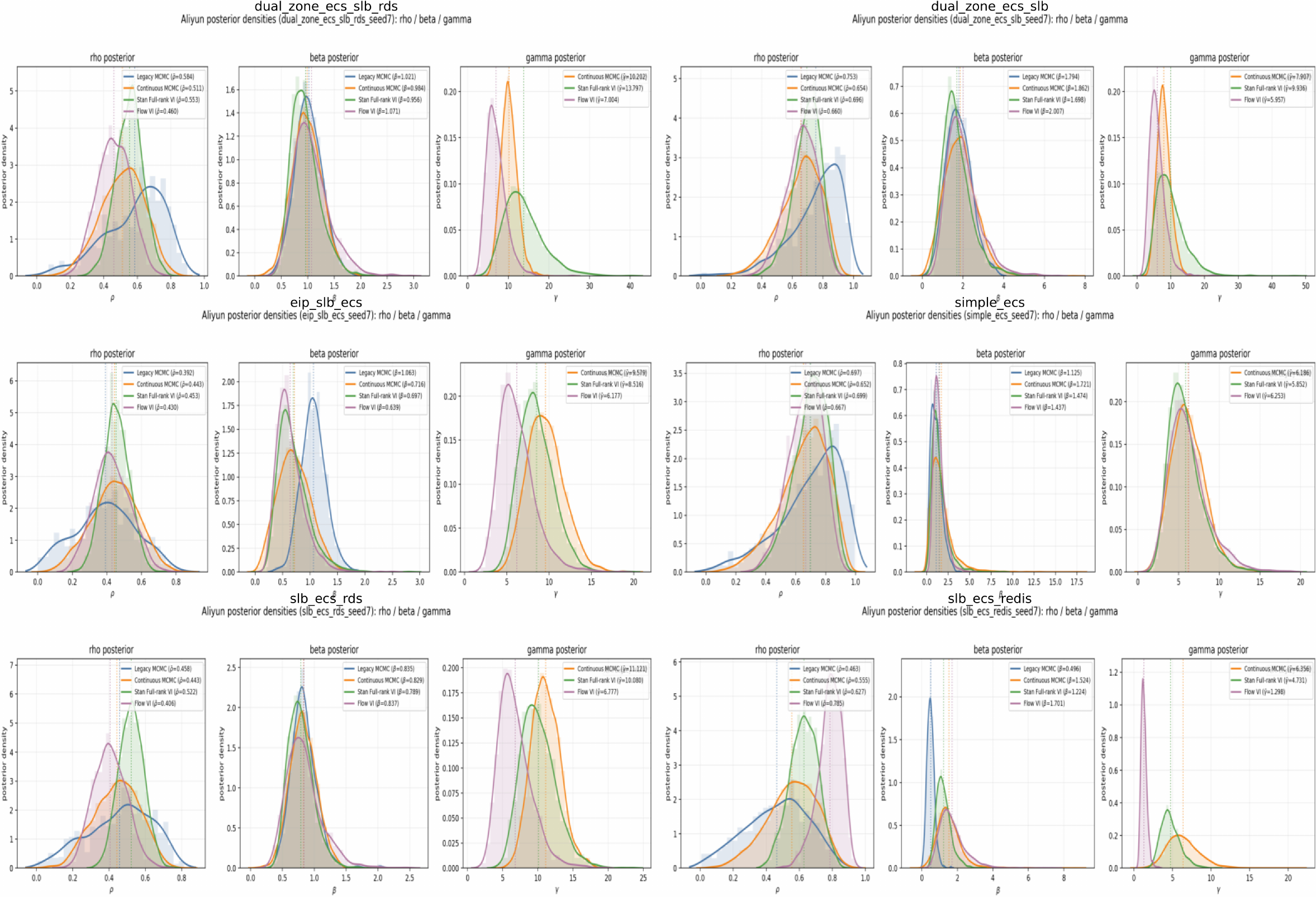}
    \caption{Scenario-wise posterior densities of the scalar parameters \(\rho\), \(\beta\), and \(\gamma\) on the six Cloud-IaC-6 tasks. Each row corresponds to a pair of benchmark scenarios, and each triplet of panels shows posterior densities for one scenario under the compared inference methods. The figure highlights substantial between-method differences in posterior concentration, even in cases where the recovered closure structure is similar.}
    \label{fig:aliyun_scalar_posteriors_grid}
\end{figure}
\section{Reproducibility}
\label{app:repro}

This appendix summarizes the code paths, datasets, inference settings, evaluation rules, and compute budget used to reproduce the main results. Exact run-level settings are stored in the released artefacts, including \texttt{run\_summary.json}, \texttt{fit\_summary.json}, \texttt{flow\_config.json}, posterior draw files, and the suite/case JSON files used by \texttt{experiments/scripts/run\_experiment\_suite.py}. When a suite-level configuration differs from an executed run summary, the run-level summary is authoritative.

\subsection{Codebase and environment}
\label{app:repro_env}

The anonymized repository contains the full experimental pipeline. The main entry point is
\[
\texttt{experiments/scripts/run\_experiment\_suite.py},
\]
which dispatches \texttt{legacy\_mcmc} (\hardmcmc{}), \texttt{relaxed\_mcmc} (\relaxedmcmc{} / Stan NUTS), \texttt{fullrank} (\fullrankvi{} / Stan full-rank ADVI), and \texttt{flow} (\flowvi{}). Suite configurations are stored in \texttt{experiments/configs/}. Stan models are stored in \texttt{stan/}, Flow-VI is implemented in \texttt{bpop\_vi\_frontier\_experiment/normalizing\_flow\_vi.py}, and the shared likelihood, decoding, and evaluation utilities are in \texttt{src/utils/}.

All posterior-predictive metrics in the paper are computed by the evaluator
\[
\texttt{src/utils/posterior\_predictive\_eval.py},
\]
which evaluates Trace NLL, Step NLL, and WAIC over retained posterior or variational draws rather than posterior-mean plug-in parameters.

The environment is Python 3.9 in the Conda environment specified by \texttt{environment.yml}. We use \texttt{numpy}, \texttt{pandas}, \texttt{scipy}, \texttt{matplotlib}, and \texttt{networkx}; \texttt{cmdstanpy} with CmdStan 2.34 for Stan-based methods; and \texttt{torch} for Flow-VI.

A typical reproduction command is:
\begin{verbatim}
python experiments/scripts/run_experiment_suite.py \
    --config experiments/configs/<suite_name>.json
\end{verbatim}

\subsection{Datasets and fixed splits}
\label{app:repro_data}

\paragraph{Synthetic data.}
Synthetic datasets are generated by
\[
\texttt{experiments/scripts/generate\_synthetic\_minimal\_ipcov.py}.
\]
We use \(n\in\{5,10,20,30,50,100\}\), \(\rho\in\{0.5,0.9\}\), seeds \(\{7,11,19\}\), latent dimension \(d=4\), generation \(\beta=1.0\). The ground-truth closure is the product order induced by Gaussian embeddings with covariance
\[
\Sigma_\rho=(1-\rho)I_d+\rho\mathbf 1\mathbf 1^\top.
\]
Training traces are selected to maximize incomparable-pair coverage under a budget of \(2n\) traces, with at least \(n\) traces retained. Test traces are independent hard frontier-softmax draws from the same ground-truth partial order. Each case stores its realized train/test traces and IP-Cov in \texttt{dataset.json}. The low-coverage ablation additionally uses \(n=30\), \(\rho=0.5\), seeds \(\{7,11,19\}\), and target IP-Cov \(=0.7\).

\paragraph{External data sources.}
The Bishops witness-list corpus is derived from the public
\emph{Charters of William II and Henry I} project website:
\url{https://actswilliam2henry1.wordpress.com/the-charters/}.
The project page states that the charter materials are released under a
Creative Commons Attribution--NonCommercial--NoDerivs 4.0 license. We use the
processed witness-list data described in \citet{jiang2023bayesian} and
\citet{nicholls2025royalacta}.

The Cloud agent-trace benchmark is adapted from the anonymized Cloud-IaC-6
repository:
\url{https://anonymous.4open.science/r/Cloud-IaC-6-5367/README.md}.
The benchmark consists of six infrastructure-as-code scenarios with fixed
train/test seeds; our scripts use the processed traces and ground-truth covers
specified in Appendix~\ref{app:repro_profiles}.

\paragraph{Bishops witness lists.}
The Bishops corpus is stored as
\[
\texttt{data/bishop/bishop\_po\_cla\_o\_lists\_only.json},
\]
containing 68 ordered witness lists over 45 actors. Actor names and title groups are stored in the accompanying mapping files
\[
\texttt{bishop\_o\_to\_node\_name\_mapping.csv}
\quad\text{and}\quad
\texttt{bishop\_item\_hasse\_color\_group.csv}.
\]
All 68 witness lists are used for posterior fitting and list-level predictive summaries.

\paragraph{Cloud agent traces.}
The Cloud-IaC-6 benchmark contains six cloud-provisioning scenarios crossed with seeds \(\{7,11\}\), giving 12 cases. Raw traces, expert traces, and expert cover graphs are stored under \texttt{data/}. The exact train/test order indices used  for Tables~\ref{tab:aliyun_metrics_updated} and~\ref{tab:aliyun-prefix-nll-mae} 
are stored in the
experiment configuration files. All methods are fit and evaluated on identical

\subsection{Inference settings}
\label{app:repro_profiles}

\paragraph{\hardmcmc{}.}
\hardmcmc{} is the legacy discrete MCMC sampler over hard partial-order states. On synthetic data, it uses \(2\times10^5\) to \(1.5\times10^6\) iterations depending on \(n\), with 50\% burn-in and fixed latent dimension \(K=4\). On Cloud, it uses \(5\times10^5\) iterations, 50\% burn-in, and fixed \(K=3\). On Bishops, it uses \(10^6\) iterations, 50\% burn-in, 5000 retained posterior draws after thinning, and reversible-jump moves over \(K\), with \(K\sim\mathrm{Poisson}(N//2)\).

\paragraph{\relaxedmcmc{}.}
\relaxedmcmc{} samples the relaxed posterior using Stan NUTS on unconstrained coordinates. Synthetic runs use one chain with 500 warmup and 500 sampling iterations by default, with slightly longer runs for larger \(n\). Cloud uses 4 chains, 1500 warmup iterations, 2000 post-warmup draws per chain, \texttt{adapt\_delta}=0.98, and maximum tree depth 13. Bishops uses 4 chains, 1000 warmup iterations, 2000 post-warmup draws per chain, \texttt{adapt\_delta}=0.96, and maximum tree depth 11.

\paragraph{\fullrankvi{}.}
\fullrankvi{} uses Stan full-rank ADVI on the same relaxed posterior target as \relaxedmcmc{}. Synthetic runs use 500 posterior draws and 8000--12000 optimization iterations depending on \(n\). Cloud uses 2000 posterior draws and 30000 optimization iterations. Bishops uses 2000 posterior draws and 60000 optimization iterations. On Bishops, we run seeds \(\{7,11,19\}\).

\paragraph{\textsc{Flow-VI}.}
\textsc{Flow-VI} parameterises the variational posterior with a neural spline
flow on the unconstrained coordinates and shares one optimiser configuration
across all benchmarks: $4$ NSF coupling layers, learning rate $0.002$ with a
$25$-step warm-up decaying linearly to $0.2\times$ over $1500$ steps,
Adam ($\beta_1=0.9$, $\beta_2=0.999$, gradient-norm clip $10$), $4$ antithetic
Monte~Carlo samples per ELBO gradient, $16$ samples for ELBO evaluation,
early-stopping patience $200$, and $200$ retained posterior draws.
Synthetic runs use $d=4$, $\tau\!\in\!\{0.1,\,0.3,\,0.5\}$, $16$ spline bins
and hidden dimension $64$.
The cloud benchmark uses $d=4$, $\tau=0.5$ with a narrower conditioner
($8$ bins, hidden dimension $32$).
Bishops uses $d=3$, $\tau=0.10$, $8$ bins and hidden dimension $32$, with the
noise scale $\beta$ held fixed rather than inferred.
Exact settings are stored in each case's \texttt{flow\_config.json}.

\paragraph{Baselines.}
For Bishops, we fit PLMIX baselines with \(G\in\{1,2,3\}\) mixture components using 15000 Gibbs iterations, 3000 burn-in iterations, and 2000 retained posterior draws for list-level WAIC. For the SoftDAG-Frontier diagnostic baseline, we use three Adam restarts, 400 optimization steps, learning rate \(0.05\), \(K=\min(8,n-1)\) for soft reachability, and validation-based selection over \(\lambda_1\in\{10^{-4},10^{-3},10^{-2}\}\) and \(\lambda_h\in\{1,10,100\}\).

\subsection{Decoding and posterior-predictive evaluation}
\label{app:repro_eval}

\paragraph{Posterior decoding.}
All structural metrics are computed from posterior pairwise closure probabilities. For method \(a\), let \(\widehat P^{(a)}_{ij}\) be the posterior mean probability that \(i\succ j\). We decode a closure by thresholding \(\widehat P^{(a)}\) at \(\tau_{\mathrm{decode}}=0.5\), setting diagonal entries to zero, removing any threshold-induced cycles, and taking the transitive closure. We use \(\tau_{\mathrm{decode}}=1/3\) for n=100.

\paragraph{Posterior-predictive metrics.}
Trace NLL, Step NLL, and WAIC are computed from each method's retained posterior or variational draws. For \hardmcmc{}, the per-draw predictive distribution is the hard frontier-softmax evaluated on the retained binary closure draw. For \relaxedmcmc{}, \fullrankvi{}, and \flowvi{}, it is the relaxed frontier-softmax evaluated on each retained soft-precedence draw. Thus the reported predictive metrics are posterior-predictive quantities, not plug-in evaluations at posterior mean parameters.

For held-out traces \(\mathcal D_{\mathrm{test}}\),
\[
\mathrm{Trace\mbox{-}NLL}
=
-\frac{1}{|\mathcal D_{\mathrm{test}}|}
\sum_{y\in\mathcal D_{\mathrm{test}}}
\log
\left[
\frac{1}{S}\sum_{s=1}^{S}
p(y\mid\Theta^{(s)})
\right].
\]
For next-action prediction,
\[
\mathrm{Step\mbox{-}NLL}
=
-\frac{1}{\sum_{y\in\mathcal D_{\mathrm{test}}}T_y}
\sum_{y\in\mathcal D_{\mathrm{test}}}
\sum_{t=1}^{T_y}
\log
\left[
\frac{1}{S}\sum_{s=1}^{S}
p(y_t\mid y_{<t},\Theta^{(s)})
\right].
\]
Because \(\log\mathbb E_s[p]\neq \mathbb E_s[\log p]\), Trace NLL is not generally equal to mean trace length times Step NLL. For WAIC, let
\[
\ell_{is}=\log p(y^{(i)}\mid\Theta^{(s)})
\]
be the pointwise training log-likelihood. We compute
\[
\mathrm{lppd}
=
\sum_i
\log\left[
\frac{1}{S}\sum_{s=1}^{S}\exp(\ell_{is})
\right],
\qquad
p_{\mathrm{WAIC}}
=
\sum_i \mathrm{Var}_s(\ell_{is}),
\]
and
\[
\mathrm{WAIC}
=
-2(\mathrm{lppd}-p_{\mathrm{WAIC}}).
\]
Large \(p_{\mathrm{WAIC}}\) indicates high per-draw log-likelihood variability and is interpreted as a warning that the posterior approximation is unstable for WAIC.

\paragraph{Posterior fidelity.}
On synthetic and Bishops settings where \hardmcmc{} is available as a reference, posterior fidelity is measured by
\[
\mathrm{MAE}_{\mathrm{hard}}(a)
=
\frac{1}{n(n-1)}
\sum_{i\neq j}
\left|
\widehat P^{(a)}_{ij}
-
\widehat P^{(\mathrm{hard})}_{ij}
\right|.
\]
For \hardmcmc{}, this value is zero by definition.

\subsection{Compute budget}
\label{app:repro_compute}

Experiments were run on a CPU workstation; Flow-VI used a single GPU for
normalizing-flow optimization. Exact hardware varied across runs, but all
reported runtimes are wall-clock times measured on the executing machine. The dominant cost is still
\hardmcmc{} on larger synthetic instances; for example, \(n=50,\rho=0.9\) requires
roughly 30 hours for one seed. The continuous methods are substantially cheaper in
wall-clock time on most settings. Representative costs are:

\begin{center}
\small
\begin{tabular}{lrrr}
\toprule
Setting & \hardmcmc{} & \relaxedmcmc{} & \flowvi{} \\
\midrule
Synthetic \(n=10,\rho=0.5\) & \(\sim 10\) min & \(\sim 6\) s & \(\sim 22\) s \\
Synthetic \(n=30,\rho=0.9\) & \(\sim 3\) h & \(\sim 33\) min & \(\sim 46\) s \\
Synthetic \(n=50,\rho=0.9\) & \(\sim 30\) h & one completed seed & \(\sim 46\) s \\
Bishops, per seed & \(\sim 70\) min & \(\sim 27\) min & \(\sim 9\) min \\
Cloud, mean over cases & \(\sim 24\) min & \(\sim 5\) min & \(\sim 1.7\) min \\
\bottomrule
\end{tabular}
\end{center}

The reported experiments, ablations, and diagnostic baselines used approximately
\(\le 10\) CPU-days, plus GPU time for the normalizing-flow VI runs.

\subsection{Known caveats}
\label{app:repro_caveats}

\paragraph{Stan version sensitivity.}
Stan NUTS and full-rank ADVI were run through CmdStan 2.34. Minor CmdStan version changes can alter exact seed behavior and ELBO trajectories, although the reported closure F1 and posterior-fidelity metrics were stable in spot checks.

\paragraph{Flow-VI posterior variability.}
Flow-VI uses fewer retained draws than the MCMC methods and can produce more diffuse posterior samples.
\paragraph{Assets and licenses.}
\label{app:assets_licenses}
All datasets and software assets are cited in the paper. The Bishops / Royal Acta data are drawn from the published academic database associated with \emph{The Charters of William II and Henry I}. The Cloud-IaC-6 benchmark, synthetic generators, code, and experiment configurations are released through the anonymized repository with the license stated there. Software dependencies follow their respective open-source licenses.

\paragraph{Human subjects.}
The paper does not involve crowdsourcing or prospective human-subject experiments. The empirical datasets are synthetic, historical witness lists, and cloud-agent execution traces.
\clearpage

\section{Broader impacts.}
This work may improve workflow auditing and uncertainty-aware modeling of dependency structure. Risks arise if inferred partial orders are treated as definitive causal or social hierarchies. We therefore emphasize posterior uncertainty, domain validation, and use as decision support rather than deterministic evidence.

\end{document}